\newtheorem{theorem}{Theorem}
\newtheorem{lemma}{Lemma}
\newtheorem{assumption}{Assumption}
\DeclareMathOperator*{\tr}{tr}
\let\bs\boldsymbol
\DeclareMathOperator*{\argmin}{argmin}
\newcommand{\cmark}{\ding{51}}
\newcommand{\xmark}{\ding{55}}
\newcommand{\Wsig}[1]{\cmark\,(#1)}
\newcommand{\Wmar}[1]{\ensuremath{\approx}\,(#1)}
\newcommand{\Wns}[1]{\xmark\,(#1)}
\newcommand{\Wna}{--}
\begin{document}

%

%

\runningtitle{NeST-BO}

\twocolumn[

\aistatstitle{NeST-BO: Fast Local Bayesian Optimization via Newton-Step Targeting of Gradient and Hessian Information}

\aistatsauthor{ Wei-Ting Tang \And Akshay Kudva \And Joel A. Paulson }

\aistatsaddress{ University of Wisconsin-Madison \And The Ohio State University \And University of Wisconsin-Madison } ]

\begin{abstract}
Bayesian optimization (BO) is effective for expensive black-box problems but remains challenging in high dimensions. We propose NeST-BO, a curvature-aware local BO method that targets a (modified) Newton step by jointly learning gradient and Hessian information with Gaussian process (GP) surrogates, and selecting evaluations via a one-step lookahead bound on the Newton-step error. We show that this bound contracts with batch size, so NeST-BO drives the step error to zero; in well-behaved neighborhoods it recovers the fast local convergence behavior of inexact/modified Newton methods, while standard safeguards support global convergence to stationary points. To improve scaling with problem dimension, we optimize the acquisition in low-dimensional embedded subspaces (random or learned), reducing the dominant cost of learning curvature from $O(d^2)$ to $O(m^2)$ with $m \ll d$ while preserving step targeting. Across high-dimensional synthetic and real-world problems, including cases with thousands of variables and unknown active subspaces, NeST-BO consistently yields faster convergence and better final values than state-of-the-art local and high-dimensional BO baselines.
\end{abstract}

\section{INTRODUCTION}
\label{sec:intro}

Bayesian optimization (BO) is a popular framework for optimizing expensive black-box functions because it often needs far fewer evaluations than alternative derivative-free methods \citep{jones1998efficient, frazier2018tutorial, garnett2023bayesian}. BO has been applied successfully in automated machine learning \citep{snoek2012practical, lindauer2022smac3}, prompt optimization for LLMs \citep{sabbatella2024bayesian}, robotics and control \citep{berkenkamp2023bayesian, paulson2023tutorial}, process optimization \citep{kudva2025multi}, materials design \citep{frazier2015bayesian, tang2024beacon}, 
and more. However, as dimensionality grows, performance often deteriorates, with recent studies attributing this decline to degeneracies such as vanishing or uninformative gradients in the Gaussian process (GP) surrogate \citep{williams2006gaussian} that make acquisition optimization brittle when length scales are poorly chosen \citep{papenmeier2025understanding}.

This work develops a curvature-aware \textit{local} BO approach and a practical way to make it scale to very high dimensions. We introduce \textbf{NeST-BO} (\textbf{Ne}wton-\textbf{S}tep-\textbf{T}argeted \textbf{BO}), which uses the GP surrogate model to \textit{jointly} learn gradient and Hessian information, and chooses new evaluations to shrink a one-step lookahead bound on the Newton-step error.
Conceptually, NeST-BO targets the step rather than the derivatives themselves -- an approach that we find can learn the Newton direction with fewer samples than, e.g., finite difference methods would require.

NeST-BO is not motivated by the idea that a vanilla Newton methods is a robust \textit{global} solver for nonconvex objectives. Rather, it is designed as a local refinement routine: once we are in a neighborhood where the objective is reasonably smooth and the curvature is informative, modified Newton schemes can provide much faster local convergence than purely first-order updates. In practice, this ``hybrid'' behavior is typically enforced by standard safeguards (e.g., damping, line search), so that the method behaves like a gradient step when curvature is unreliable, while retaining rapid local convergence when it is helpful \citep{nocedal2006numerical}.
Because second-order (Newton-type) methods can be highly effective as \textit{local} accelerators even on large, nonconvex objectives when suitably modified, e.g., \citep{martens2010hessianfree}, we conjecture targeting the step can substantially accelerate local BO.

An important obstacle is the cost of Hessian-based terms, which grows as $O(d^2)$ with input dimension $d$. To address this, we instantiate NeST-BO inside lower-dimensional subspaces, using a nested subspace expansion strategy similar to BAxUS \citep{papenmeier2022increasing}. This collapses the dominant cost to $O(m^2)$ for subspace dimension $m \ll d$ while preserving the benefits of Newton-step targeting. We find that the local Newton step is also naturally robust to the non-stationarity in the mapping from the subspace to the objective function that can be introduced by subspace embeddings, which helps explain why our approach continues to perform well for some problems even as the ambient dimension reaches thousands or more. 

Finally, we find that our acquisition includes a scale factor that balances gradient and Hessian learning whose optimal value must be estimated by, e.g., Monte Carlo sampling; our empirical results show that performance is robust to the precise choice of this factor, and we use a simple default that avoids this extra computation. 
We prove a ``vanishing power-function condition'' showing that NeST-BO drives the Newton-step error to zero as batch size increases. As a result, the method fits squarely within the classical inexact/modified Newton viewpoint: once the step errors are sufficiently small, one recovers the standard fast local convergence behavior of Newton-type methods, while globalization via, e.g., dampling/line search safeguards yields the usual notion of global convergence to stationary points \citep{nocedal2006numerical}.

In summary, our key contributions are:
\begin{itemize}
    \item A curvature-aware local BO algorithm that explicitly targets the Newton step via a tractable and theoretically-sound acquisition function.
    \item A scalable instantiation that runs NeST-BO in enlarging subspaces, reducing computation from $O(d^2)$ to $O(m^2)$, where $d$ and $m$ are the ambient and maximum subspace dimensions. 
    \item Theoretical guarantees showing that NeST-BO drives Newton-step error to zero and thus inherits the convergence behavior of modified Newton methods.
    \item Extensive empirical results on more than 12 synthetic and real-world problems (ranging from 20d to $>$7000d), where NeST-BO variants yield large performance improvements over six state-of-the-art high-dimensional BO baselines.
\end{itemize}




\section{RELATED WORK}
\label{sec:related-work}

\paragraph{Linear subspaces and embeddings.} A common strategy for high-dimensional BO is to assume the objective varies mainly in a lower-dimensional subspace and reduce model complexity accordingly. REMBO projects the search into a random linear subspace and optimizes there, with guarantees when the effective dimension is small \citep{wang2016bayesian}.
ALEBO improves robustness by using a Mahalanobis kernel and linear constraints on the acquisition to respect the original box \citep{letham2020re}. HeSBO replaces dense projections with count-sketch-style \textit{sparse} embeddings that preserve structure with negligible overhead. BAxUS introduces \textit{nested} random subspaces that expand during the run and a mechanism to carry observations across enlargements, providing improved success probabilities and practical robustness \citep{papenmeier2022increasing}. In this paper, to improve the scalability of our method, we adopt the BAxUS embedding and enlargement schedule but \textit{replace} its trust-region-based optimizer with NeST-BO.

\paragraph{Learning sparse structure.} Another interesting strategy for tackling high-dimensional problems is to adaptively learn space substructure. SAASBO places a sparsity-promoting prior on inverse GP length-scales \citep{eriksson2021high}. This can be very effective when the active set is small and axis-aligned, but the fully Bayesian inference of the kernel hyperparameters makes the inference cost scale cubically with the number of evaluations, which greatly limit its ability to scale beyond fairly small sampling budgets. 

\paragraph{Local BO.} Local BO restricts search to neighborhoods around the incumbent to mitigate the curse of dimensionality. TuRBO, a trust-region variant, maintains multiple local regions with adaptive sizes \citep{eriksson2019scalable}. Another line of work is \textit{directional} local BO, which uses a GP to define a local step rule. GIBO reduces gradient posterior uncertainty and moves along the mean gradient \citep{muller2021local}, while MPD chooses the direction maximizing the posterior probability of descent \citep{nguyen2022local}. MinUCB forgoes direct gradient inference and instead minimizes the upper confidence bound (UCB) objective as a local step. Our approach differs by explicitly \textit{targeting the Newton step}, which uses both gradient and Hessian predictions from the GP. 

\paragraph{``Vanilla BO works'' in high dimensions.} Several recent works show that standard BO can be competitive in high dimensions when properly designed. Hvarfner et al. scale a log-normal length-scale prior with dimension, yielding a strong ``D-scaled'' LogEI baseline \citep{hvarfner2024vanilla}. Xu et al. argue that poor length-scale initialization induces vanishing gradients in GPs with squared exponential (SE) kernels and show Mat\'ern kernels or robust initialization can avoid this pathology \citep{xu2024standard}. Papenmeier et al. analyze why such settings succeed, attributing gains to low effective dimensionality or benign benchmark structure rather than a general cure for high-dimensional problems \citep{papenmeier2025understanding}. 
We include such strong ``vanilla'' baselines in our experiments to reflect these practices.


\section{PRELIMINARIES}
\label{sec:preliminaries}

\subsection{Problem Setup \& Bayesian Optimization}

We consider the following zeroth-order optimization problem over a $d$-dimensional space:
\begin{align} \label{eq:true-opt}
  \bs{x}^\star \in \argmin_{\bs{x} \in \mathcal{X}} f(\bs{x}), \qquad \mathcal{X} \subseteq \mathbb{R}^d,
\end{align}
where the expensive function $f$ can only be accessed through noisy queries $y = f(\bs{x}) + \epsilon$ with i.i.d. Gaussian noise $\epsilon \sim \mathcal{N}(0,\sigma^2)$. Bayesian optimization (BO) tackles \eqref{eq:true-opt} by fitting a probabilistic surrogate $p(f|\mathcal{D})$ over available data $\mathcal{D}$ and uses it to select new evaluations by maximizing an \textit{acquisition function} $\alpha(\bs{x}|\mathcal{D})$ that trades off exploration and exploitation. After exhausting the budget, the recommended solution is either the best observed point or the minimizer of the surrogate mean.
Many acquisitions have been proposed including expected improvement (EI) \citep{jones1998efficient},
upper confidence bound (UCB) \citep{srinivas2010gaussian}, knowledge gradient (KG) \citep{frazier2008knowledge}, and entropy-based methods \citep{hennig2012entropy}. We refer readers to \citet{garnett2023bayesian} for a full tutorial.

\subsection{Gaussian Processes and their Derivatives}

We use Gaussian processes (GPs) \citep{williams2006gaussian} as surrogates, which is the most popular surrogate model class in BO. A GP prior $f \sim \mathcal{GP}(\mu,k)$ induces a joint Gaussian belief over any finite set of inputs; conditioning on the dataset $\mathcal{D}$ yields a posterior GP $f | \mathcal{D} \sim \mathcal{GP}( \mu_\mathcal{D}, k_\mathcal{D} )$ with closed-form posterior mean $\mu_\mathcal{D}$ and covariance (or kernel) function $k_\mathcal{D}$. A key property we repeatedly use in this work is that \textit{derivatives of a GP remain GPs} because differentiation is a linear operator \citep{de2021high}. Thus, the gradient $\bs{g}(\bs{x}) = \nabla f(\bs{x})$ and Hessian $\bs{H}(\bs{x}) = \nabla^2 f(\bs{x})$ have analytic posterior means and covariances obtained by differentiating the kernel in the appropriate arguments. We collect the explicit formulas for $f$, $\bs{g}$, and $\bs{H}$ posteriors in Appendix \ref{app:gp-expressions}.

\subsection{Local BO using Gradients}

Learning a globally accurate surrogate becomes data-hungry as $d$ grows because regret bounds for global BO (e.g., GP-UCB) scale exponentially with dimension unless strong structural assumptions hold \citep{srinivas2010gaussian}. Local BO addresses this by focusing search near the incumbent and updating the model with more locally collected data. Gradient-informed methods refine this idea by explicitly selecting evaluations that reduce uncertainty about the local descent direction.
A prominent example is the Gradient Information (GI) acquisition that underlies GIBO \citep{muller2021local}. Let $\bs{x}_t$ be the current iterate and $\bs{Z} \in \mathbb{R}^{b_t\times d}$ a batch of candidates. GI selects $\bs{Z}$ to maximally reduce the expected posterior covariance of the gradient at $\bs{x}_t$ given current data $\mathcal{D}$ after observing a new batch of points $(\bs{Z}, \bs{y})$:
\begin{align} \label{eq:gi}
& \alpha_{\mathrm{GI}}(\bs{Z}|\bs{x}_t,\mathcal{D}) \\\notag
& =\mathbb{E}_{\bs{y} | \mathcal{D},\bs{Z}}
\!\big\{\tr\!\big(\Sigma^{\bs g}_{\mathcal{D}}(\bs{x}_t)\big)
-\tr\!\big(\Sigma^{\bs g}_{\mathcal{D}\cup(\bs{Z},\bs{y})}(\bs{x}_t)\big)\big\}.
\end{align}
The key observation is, since posterior covariances do not depend on the observed targets $\bs{y}$, this simplifies to minimizing a (squared) ``power function'' for the gradient defined as the trace of the posterior covariance:
\begin{align} \label{eq:gi-rearranged}
\tilde{\alpha}_{\mathrm{GI}}(\bs{Z}|\bs{x}_t,\mathcal{D})
=\tr\!\big(\Sigma^{\bs g}_{\mathcal{D}\cup \bs{Z}}(\bs{x}_t)\big)
=\pi^{\bs g}_{\mathcal{D}\cup\bs{Z}}(\bs{x}_t).
\end{align}
This criterion encourages sampling along directions that most reduce gradient uncertainty, after which the algorithm steps along the GP posterior mean gradient. Theoretical analysis of GIBO showed that the convergence rate to a stationary point scales linearly in $d$, which is better than rates at which global optimization can find the global optimum \citep{wu2023behavior}. 

Nguyen et al.~\citep{nguyen2022local} revisit the GIBO template and show that the GP posterior mean gradient is not, in general, the direction that maximizes the \textit{posterior probability of descent}, which motivates MPD: an acquisition that targets an upper bound on the one-step maximum descent probability together with updates along the most-probable descent direction.
We include MPD as a strong baseline; NeST-BO builds on the same local gradient-learning viewpoint as GIBO but targets a \textit{second-order} Newton step rather than a first-order descent direction. 

\subsection{Newton's Method for Optimization}

Newton’s method (NM) is a classical algorithm for (unconstrained) local minimization of twice continuously differentiable objectives. Starting from $\bs{x}_0 \in \mathbb{R}^d$, it makes the following updates:
\begin{align}
  \bs{x}_{t+1} = \bs{x}_t - \gamma_t \bs{H}(\bs{x}_t)^{-1} \bs{g}(\bs{x}_t), \qquad t=0,1,\ldots,
\end{align}
where $\gamma_t > 0$ denotes the step size at iteration $t$. NM re-scales and rotates the gradient using local curvature. 
On well-behaved landscapes (e.g., in a neighborhood where $\bs{H}$ is positive definite), this makes progress far less sensitive to conditioning than first-order methods and yields \textit{local quadratic} convergence near a (nondegenerate) minimizer, in contrast to the linear or sublinear rates typical of gradient schemes under comparable assumptions \citep{nocedal2006numerical}.

In the BO context, we do not get to directly observe $\bs{g}$ or $\bs{H}$, but their GP posteriors (implicitly) define a distribution over the Newton step $\bs{d}(\bs{x})=\bs{H}(\bs{x})^{-1}\bs{g}(\bs{x})$. NeST-BO is built around {actively reducing the posterior uncertainty of the step $\bs{d}(\bs{x})$, rather than estimating $\bs{g}(\bs{x})$ and $\bs{H}(\bs{x})$ separately. The thought is that, when the Newton step gets close to a local solution, the iteration advantage of NM can outweigh the per-step cost of learning curvature, yielding stronger sample efficiency than gradient-only methods like GIBO.

\section{NEWTON-STEP-TARGETED BAYESIAN OPTIMIZATION}
\label{sec:nestbo}

\subsection{An Acquisition for the Newton Step}

Let $\bs{x}_t$ be the current iterate and $\bs{Z}\!\in\!\mathbb{R}^{b_t \times d}$ a batch of candidates at which we might evaluate $f$. Our goal is to learn the Newton step $\bs d(\bs x)=\bs H(\bs x)^{-1}\bs g(\bs x)$ at $\bs{x}_t$ under the GP posterior. Because $\bs{d}(\bs{x})$ is non-Gaussian, its posterior covariance is not available in closed form; instead we derive a Reproducing Kernel Hilbert Space (RKHS) error bound (inspired by the results derived in \citep{wu2023behavior}). Let the gradient and (vectorized) Hessian (squared) \textit{power functions} at $\bs{x}$ be denoted by $\pi^{\bs g}_{\mathcal D}(\bs x)=\tr\left( \Sigma^{\bs g}_{\mathcal D}(\bs x)\right)$ and
$\pi^{\bs H}_{\mathcal D}(\bs x)=\tr\big(\Sigma^{\mathrm{vec}(\bs H)}_{\mathcal D}(\bs x)\big)$. Also, let $\mathcal{H}$ be the RKHS on $\mathcal{X}$ equipped with a reproducing kernel $k(\cdot,\cdot)$ and RKHS norm $\| \cdot \|_\mathcal{H}$. We can establish the following bound (see Appendix \ref{app:proof-newton} for the proof):

\begin{theorem}[Newton-step error bound]\label{thm:newton-bound}
Let $\varepsilon_\mathcal{D}(\bs{x}) = \|\bs d(\bs{x})-\widehat{\bs d}_{\mathcal{D}}(\bs{x}) \|$ denote the Newton-step error at $\bs{x}$ given data $\mathcal{D}$ with $\widehat{\bs d}_{\mathcal{D}}(\bs{x}) = \widehat{\bs{H}}_\mathcal{D}(\bs{x})^{-1} \widehat{\bs{g}}_\mathcal{D}( \bs{x} )$ where $\| \cdot \|$ is the standard Euclidean norm. 
Assume the kernel $k$ is stationary and four-times differentiable and that $\| \bs{H}(\bs{x})^{-1} \|$ exists and is finite.
For any $f\in\mathcal H$ (from the RKHS with reproducing kernel $k$) with $\|f\|_{\mathcal H}\le B$, $\bs x\in\mathcal X$, and $\mathcal D$, the following bound holds:
\begin{align}\label{eq:newton-step-bound}
\varepsilon_\mathcal{D}(\bs{x}) \le C_{\bs{x}} \sqrt{ \pi^{\bs g}_{\mathcal D}(\bs x)+
s_{\mathcal D}(\bs x)\,\pi^{\bs H}_{\mathcal D}(\bs x)},
\end{align}
where $s_{\mathcal D}(\bs x)=\|\widehat{\bs H}_{\mathcal D}(\bs x)^{-1}\|^2\,\|\widehat{\bs g}_{\mathcal D}(\bs x)\|^2$ is a scale factor that trades off gradient and Hessian information and $C_{\bs{x}} = \sqrt{2}B\,\|\bs H(\bs x)^{-1}\|$ is independent of $\mathcal{D}$.
\end{theorem}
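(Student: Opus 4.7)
The plan is to expand $\bs{d}-\widehat{\bs d}_{\mathcal D}$ using the standard resolvent identity and then control each piece by an RKHS-style bound on derivatives of $f$.

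First I would write
\begin{align*}
\bs d(\bs x)-\widehat{\bs d}_{\mathcal D}(\bs x)
&=\bs H^{-1}\bs g-\widehat{\bs H}_{\mathcal D}^{-1}\widehat{\bs g}_{\mathcal D}\\
&=\bs H^{-1}(\bs g-\widehat{\bs g}_{\mathcal D})
+(\bs H^{-1}-\widehat{\bs H}_{\mathcal D}^{-1})\widehat{\bs g}_{\mathcal D},
\end{align*}
and then apply the resolvent identity $\bs H^{-1}-\widehat{\bs H}_{\mathcal D}^{-1}=\bs H^{-1}(\widehat{\bs H}_{\mathcal D}-\bs H)\widehat{\bs H}_{\mathcal D}^{-1}$ to the second term. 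A triangle inequality plus submultiplicativity of the operator norm then yields
\[
\varepsilon_{\mathcal D}(\bs x)\le \|\bs H^{-1}\|\Big(\|\bs g-\widehat{\bs g}_{\mathcal D}\|
+\|\widehat{\bs H}_{\mathcal D}^{-1}\|\,\|\widehat{\bs g}_{\mathcal D}\|\,\|\widehat{\bs H}_{\mathcal D}-\bs H\|\Big),
\]
which already isolates the scale factor $s_{\mathcal D}(\bs x)=\|\widehat{\bs H}_{\mathcal D}^{-1}\|^2\|\widehat{\bs g}_{\mathcal D}\|^2$.

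Next I would bound the two derivative-error norms by their power functions. Because $k$ is assumed stationary and four-times differentiable, $f\in\mathcal H$ implies $\partial_i f,\partial_{ij}f\in\mathcal H$, and the standard RKHS/power-function inequality applied coordinate-wise gives $|g_i(\bs x)-\widehat g_{\mathcal D,i}(\bs x)|\le \|f\|_{\mathcal H}\sigma^{\partial_i}_{\mathcal D}(\bs x)$ and analogously for the Hessian entries. Squaring and summing then yields
\[
\|\bs g-\widehat{\bs g}_{\mathcal D}\|^2\le B^2\pi^{\bs g}_{\mathcal D}(\bs x),\qquad
\|\widehat{\bs H}_{\mathcal D}-\bs H\|_F^2\le B^2\pi^{\bs H}_{\mathcal D}(\bs x),
\]
and I would use $\|\cdot\|\le\|\cdot\|_F$ to convert the Frobenius bound into an operator-norm bound suitable for the previous display.

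Substituting these bounds and factoring out $B\|\bs H^{-1}\|$ gives
\[
\varepsilon_{\mathcal D}(\bs x)\le B\|\bs H^{-1}\|\Big(\sqrt{\pi^{\bs g}_{\mathcal D}(\bs x)}+\sqrt{s_{\mathcal D}(\bs x)\,\pi^{\bs H}_{\mathcal D}(\bs x)}\Big).
\]
I would finish by collapsing the sum of two square roots into one via the elementary inequality $a+b\le\sqrt{2(a^2+b^2)}$, which yields exactly the stated bound with $C_{\bs x}=\sqrt 2\,B\|\bs H^{-1}\|$.

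The main obstacle I anticipate is not the algebra but the RKHS derivative step: one must justify that the pointwise evaluation functionals for $\partial_i f$ and $\partial_{ij}f$ are bounded on $\mathcal H$ and coincide with the derivatives of the canonical feature map, so that the power functions $\pi^{\bs g}_{\mathcal D}$ and $\pi^{\bs H}_{\mathcal D}$ (defined as traces of GP posterior covariances) also equal the suprema of the corresponding worst-case RKHS errors. This is where the four-times differentiability of $k$ is essential; modulo that technical setup (which the paper likely defers to an appendix paralleling \citet{wu2023behavior}), the rest of the argument is a short chain of triangle, Cauchy--Schwarz, and norm-equivalence inequalities.
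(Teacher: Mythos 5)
Your proposal is correct and follows essentially the same route as the paper's proof: the same splitting of $\bs d-\widehat{\bs d}_{\mathcal D}$ into a gradient-error term and a resolvent-identity term, the same coordinate-wise RKHS power-function bounds for the gradient and Hessian errors (with the Frobenius-to-operator norm step), and the same factor of $\sqrt{2}$ from combining the two terms. The only cosmetic difference is that the paper squares the decomposition first via $\|a+b\|^2\le 2\|a\|^2+2\|b\|^2$ while you apply the triangle inequality and collapse the square roots at the end; these are equivalent.
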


Motivated by the Newton-step error bound in Theorem~\ref{thm:newton-bound}, our design goal at iteration $t$ is simple: choose a batch $\bs Z$ that will make the \textit{post-batch} bound on the error as small as possible (in expectation) over the unobserved outcomes at $\bs Z$. Concretely, let $B_{\mathcal D}(\bs x_t) =
\pi^{\bs g}_{\mathcal D}(\bs x_t) + s_{\mathcal D}(\bs x_t)\,\pi^{\bs H}_{\mathcal D}(\bs x_t)$, so that \eqref{eq:newton-step-bound} is $\varepsilon_{\mathcal D}(\bs x_t)\le C_{\bs x_t}\sqrt{B_{\mathcal D}(\bs x_t)}$.
Since $C_{\bs x_t}$ is independent of the data, minimizing an upper bound on the expected step error is naturally achieved by:
\begin{align}\label{eq:nest-objective-start}
\bs Z_t \in \argmin_{\bs Z}\;
\mathbb{E}_{\bs y \mid \mathcal D,\bs Z}
\lbrace
B_{\mathcal D\cup(\bs Z,\bs y)}(\bs x_t)
\rbrace.
\end{align}
This is the direct analogue of GI, which chooses points to minimize a lookahead bound on the gradient error, as shown in \citep{wu2023behavior}. The objective in \eqref{eq:nest-objective-start} is doing the same thing but on a lookahead bound for the error in the Newton step. 

The remaining step is to simplify \eqref{eq:nest-objective-start}. Under a GP with Gaussian observation noise, the posterior \textit{covariances} of derivatives at $\bs x_t$ after conditioning on $\mathcal D\cup(\bs Z,\bs y)$ do not depend on the realized values $\bs y$; they depend only on the locations $\bs Z$,
so the only term in \eqref{eq:nest-objective-start} that requires a lookahead expectation is
$s_{\mathcal D\cup(\bs Z,\bs y)}(\bs x_t)$, which depends on posterior \textit{means} through $\widehat{\bs g}$ and $\widehat{\bs H}$.
Using this fact, \eqref{eq:nest-objective-start} reduces to our following proposed Newton Step Targeting (NeST) acquisition function:
\begin{align}\label{eq:nest-rearranged}
& \tilde{\alpha}_{\mathrm{NeST}}(\bs Z | \bs x_t,\mathcal D) \\\notag
& = \pi^{\bs g}_{\mathcal D\cup\bs Z}(\bs x_t)
+ \mathbb{E}_{\bs y \mid \mathcal D,\bs Z}\left\lbrace s_{\mathcal D\cup(\bs Z,\bs y)}(\bs x_t) \right\rbrace
  \pi^{\bs H}_{\mathcal D\cup\bs Z}(\bs x_t).
\end{align}
NeST is composed of three main terms: (i) $\pi^{\bs g}_{\mathcal D\cup\bs Z}(\bs x_t)$ is the trace of the covariance of the one-step lookahead gradient at $\bs{x}_t$, (ii) $\pi^{\bs H}_{\mathcal D\cup\bs Z}(\bs x_t)$ is the trace of the covariance of the one-step lookahead vectorized Hessian, and (iii) $\mathbb{E}_{\bs y \mid \mathcal D,\bs Z}\left\lbrace s_{\mathcal D\cup(\bs Z,\bs y)}(\bs x_t) \right\rbrace$ is a scale factor that weights these two terms. We can interpret (i) and (ii) as measures of the information content about quantities of interest (mainly gradient and Hessian at our current point) and (iii) as a term that adaptively balances gradient and Hessian information. Note that, since $\| \bs{H}^{-1}(\bs{x}) \bs{g}(\bs{x}) \| \leq \| \bs{H}^{-1}(\bs{x}) \| \| \bs{g}(\bs{x}) \|$, we can also think of this term as a conservative proxy for the squared step length and thus upweights Hessian information when NM may take large steps. 

\paragraph{A practical family.}
The expectation over the scale factor in \eqref{eq:nest-rearranged} does not have a simple closed-form expression and thus would need to be estimated using Monte Carlo (MC) sampling in practice (see Appendix \ref{app:mc-est-scale} for details). This is mainly due to the fact that $s_{\mathcal D\cup(\bs Z,\bs y)}$ is a nonlinear function of $\bs{y}$ and thus non-Gaussian in general. Through some empirical testing shown in Appendix \ref{app:scale-factor}, we observed relatively low sensitivity in performance to the precise scale factor, which motivates the following computationally cheaper acquisition:
\begin{align}\label{eq:nest-approx}
\widehat{\alpha}_{\mathrm{NeST}}(\bs Z | \bs x_t,\mathcal D, \widehat{s}_t)
= \pi^{\bs g}_{\mathcal D\cup\bs Z}(\bs x_t) + \widehat{s}_t \,\pi^{\bs H}_{\mathcal D\cup\bs Z}(\bs x_t),
\end{align}
with a pre-determined $\widehat{s}_t > 0$ that is independent of $\bs{Z}$.
This recovers GI when $\widehat{s}_t = 0$ and focuses more on curvature as $\widehat{s}_t$ increases. In Section \ref{subsec:theoretical-analysis}, we show the NeST samples can drive the bound in \eqref{eq:newton-step-bound} to zero as the batch size increases for any choice of scale factor. 

\subsection{The NeST-BO Algorithm}

Algorithm~\ref{alg:NeSTBO} summarizes the simple version of the loop. At iterate $\bs{x}_t$, choose a batch $\bs{X}_t$ by minimizing \eqref{eq:nest-approx} at $\bs{x}_t$; update the GP with the new observations; then move along the predicted Newton step with some step size. Note there are a number of practical implementation details, which we discuss more in the next section. 

Since we are attempting to learn both the gradient and Hessian at $\bs{x}_t$, one can in fact easily adapt this algorithm to use a step direction from any form of, e.g., damped NM \citep{hanzely2022damped} or NM with line search \citep{shea2025greedy}. We do not attempt to systematically compare different options in this work; instead we go with a standard ``hybrid'' implementation that uses Newton steps when they well-behaved and otherwise revert to gradient steps.

We provide an illustration and visual comparison of NeST-BO (top) to GIBO (bottom) in Figure \ref{fig:Illustrative}, which demonstrates the advantages of simultaneously learning gradient and Hessian (curvature) information. 

\begin{algorithm}[tb!]
\caption{NeST-BO}
\label{alg:NeSTBO}
\textbf{Inputs:} initial iterate $\bs{x}_0\!\in\!\mathbb{R}^d$; initial data $\mathcal{D}_0$; batch sizes $\{b_t\}$; step sizes $\{\gamma_t\}$; scale factors $\{\widehat{s}_t\}$; GP hyperparameters; and total number of iterations $T$.
\begin{algorithmic}[1]
\State \textbf{Fit} GP surrogate on $\mathcal{D}_0$.
\For{$t=0,\dots,T-1$}
  \State $\bs{X}_t \in \textstyle\argmin_{\bs{Z} \in \mathbb{R}^{b_t\times d}} \widehat{\alpha}_{\mathrm{NeST}}(\bs{Z} | \bs{x}_t,\mathcal{D}_t,\widehat{s}_t)$.
  \State Evaluate $f$ at $\bs{X}_t$ to obtain $\bs{y}_t$.
  \State Augment dataset $\mathcal{D}_{t+1}\leftarrow\mathcal{D}_t\cup(\bs{X}_t,\bs{y}_t)$.
  \State Update GP posterior with $\mathcal{D}_{t+1}$.
  \State Compute $\widehat{\bs{g}} \leftarrow \widehat{\bs g}_{\mathcal{D}_{t+1}}(\bs{x}_t)$ and $\widehat{\bs{H}} \leftarrow \widehat{\bs H}_{\mathcal{D}_{t+1}}(\bs{x}_t)$. 
  \State Solve $\widehat{\bs{H}}\bs{v}=\widehat{\bs{g}}$ for $\widehat{\bs{d}}_{\mathcal{D}_{t+1}}(\bs{x}_t) \leftarrow \bs{v}$.
  \State Update iterate: $\bs{x}_{t+1}=\bs{x}_t-\gamma_t\,\widehat{\bs{d}}_{\mathcal{D}_{t+1}} (\bs{x}_t)$. \label{alg:newton-direction}
\EndFor
\end{algorithmic}
\end{algorithm}

\begin{figure*}[tb!]
  \centering
  \includegraphics[width=0.9\textwidth]{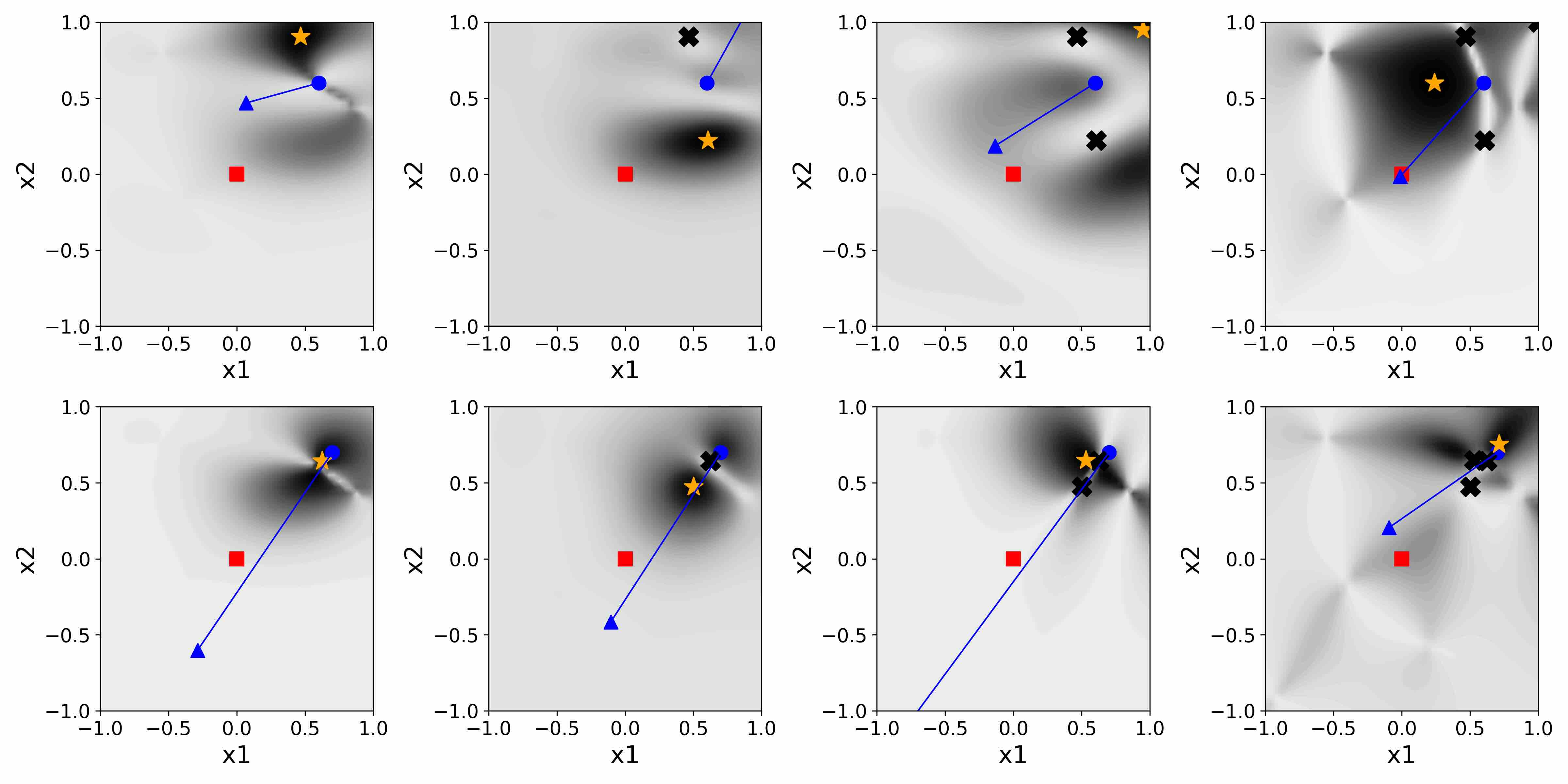}  
  \caption{\textbf{Top:} NeST-BO’s acquisition $\tilde{\alpha}_{\mathrm{NeST}}$; \textbf{Bottom:} GIBO's acquisition $\tilde{\alpha}_{\mathrm{GI}}$ on a 2D test function at iterate $\bs{x}_t$ (blue circle). Darker background indicates larger acquisition value. Red square: location of the true Newton step. Blue triangle: GP-predicted Newton step. Orange star: acquisition minimizer. Black crosses: Past evaluation points. NeST-BO places samples away from $\bs{x}_t$ along directions informative for curvature, rapidly shrinking the Newton-step error bound; GI tends to oversample near $\bs{x}_t$, slowing curvature identification.}
  \label{fig:Illustrative}
\end{figure*}

\subsection{Implementation Details}

Algorithm~\ref{alg:NeSTBO} describes an idealized NeST-BO loop used for exposition. In practice, just as vanilla NM can be brittle without safeguards, we found it helpful to use a slightly ``hardened'' variant for all experiments. The key point is that NeST-BO’s value is in \textit{where} it samples at each iteration (i.e., producing high-quality local gradient/Hessian information), so we can borrow standard Newton-style safeguards without changing the core idea. For completeness, the full pseudocode for our practical implementation is given in Appendix~\ref{app:practical-nestbo}. We summarize the main choices here.

\paragraph{GP hyperparameters.} 
In most applications, kernel hyperparameters are not known \textit{a priori}. We thus need to fit the GP by maximizing the marginal log likelihood, which is standard in the BO literature. Our method does not rely on this specific estimator; any hyperparameter learning procedure can be used.

\paragraph{Moving direction.}
NeST-BO is meant to behave like a local Newton-type method once it enters a well-behaved neighborhood. 
When $\widehat{\bs H}_{\mathcal D}(\bs x_t)$ is positive definite (and not too ill-conditioned), this is a descent direction for the GP mean and typically behaves as expected. If $\widehat{\bs H}_{\mathcal D}(\bs x_t)$ is indefinite or numerically unstable to invert, we interpret this as a signal that we are not yet in a regime where Newton steps are reliable, and we revert to a length-scale-normalized gradient step \citep[Appendix~A.4]{muller2021local}. This is a simple and robust fallback; many other Newton safeguards (e.g., damping/regularization or trust-region updates) could also be used, but we did not attempt to exhaustively test (or internally tune) these variants, which would be valuable future work. 

\paragraph{Batch versus sequential selection.}
Although \eqref{eq:nest-approx} supports joint batch optimization, we select points \textit{greedily}. After choosing each point, we update the posterior covariance deterministically (note that the power functions do not depend on the realized outcomes), and then re-optimize to choose the next point. This replaces a single $b_t d$-dimensional search with $b_t$ separate $d$-dimensional searches and naturally discourages near-duplicate selections, since conditioning on a chosen location reduces posterior variance around it. In our preliminary tests, greedy selection closely matched joint optimization. 

\paragraph{Step size.}
We use a standard Armijo backtracking line search on the GP mean $\mu_{\mathcal D}$ along the chosen direction. This gives a lightweight safeguard against overly aggressive steps without requiring additional function evaluations, and we cap the number of backtracking iterations using standard defaults \citep[Chapter~1]{bertsekas2016nonlinear}. There has been recent work on greedy NM \citep{shea2025greedy}, which suggests it can be advantageous to allow for step sizes larger than 1, but we do not consider that here.

\paragraph{Scale factor.}
The scale $\widehat{s}_t$ in \eqref{eq:nest-approx} controls how strongly (approximate) NeST emphasizes curvature (Hessian) uncertainty relative to gradient uncertainty. For the main experiments, we use a simple deterministic proxy $\widehat{s}_t = 1$, which performed reliably across problems and dimensions. Appendix~\ref{app:comparison-scale} provides a broader comparison over $\widehat{s}_t$ choices. We also tested a ``plug-in'' variant $\widehat{s}_t = s_{\mathcal D}(\bs x_t)$ that neglects the lookahead dependence on $(\bs Z,\bs y)$ (Appendix~\ref{app:plugin-vs-s1}); performance was very similar between these options, so we default to $\widehat{s}_t=1$ for simplicity.

\subsection{Theoretical Analysis}
\label{subsec:theoretical-analysis}

Let $\varepsilon_t = \varepsilon_{\mathcal{D}_{t+1}}(\bs{x}_t)$ denote the Newton-step error after iteration $t$.
By Theorem~\ref{thm:newton-bound}, $\varepsilon_t$ is controlled by posterior uncertainty in the \emph{local derivatives} at $\bs{x}_t$; specifically, the power functions appearing in \eqref{eq:newton-step-bound}.
This motivates a simple designability condition:

\noindent\textbf{Vanishing power-function condition (VPC).}
The NeST design objective
$\pi^{\bs{g}}_{\mathcal{D}_{t+1}}(\bs{x}_t) + \widehat{s}_t\,\pi^{\bs{H}}_{\mathcal{D}_{t+1}}(\bs{x}_t)$
can be made arbitrarily small as $b_t$ increases.

VPC is not an abstract assumption on $\varepsilon_t$; it ties the step error to \textit{concrete, design-controllable} posterior covariances.
A key subtlety is that NeST-BO cannot directly minimize the right-hand side of \eqref{eq:newton-step-bound} because the scale term $s_{\mathcal{D}_{t+1}}(\bs{x}_t)$ is only revealed \textit{after} the batch is observed.
Nevertheless, the bound applies to the realized step error under \textit{any} sampling rule, and VPC can be verified constructively for NeST.
We state the main idea informally below; the complete statement and proof are in Appendix~\ref{app:proof-vpc}.

\begin{theorem}[VPC under NeST sampling; informal]\label{thm:vpc}
Fix an iteration $t$ and the current iterate $\bs{x}_t$. For a batch $\bs{Z}=(\bs{z}_1,\dots,\bs{z}_{b_t})\in\mathcal{X}^{b_t}$, let $\Phi_t(\bs{Z}) = \widehat{\alpha}_{\mathrm{NeST}}(\bs Z | \bs x_t,\mathcal D, \widehat{s}_t)$ be short for the approximate NeST acquisition function with $\widehat{s}_t > 0$. 
Under mild domain and kernel regularity conditions, the following hold. 

Let $b^\star=d^2+d+1$. Then, in the noiseless setting, the optimal NeST design value can be driven to zero:
\[
\inf_{\bs{Z}\in\mathcal{X}^{b_t}} \Phi_t(\bs{Z}) = 0
~ \text{(as an infimum) for all } b_t\ge b^\star.
\]
Equivalently, for every $\epsilon>0$ and every $b_t\ge b^\star$, there exists a batch $\bs{Z}_\epsilon$ such that
$\Phi_t(\bs{Z}_\epsilon)\le \epsilon$.
Moreover, one explicit candidate design achieving this is a symmetric $b^\star$-point centered-difference stencil at radius $h$, for which $\Phi_t(\bs{Z}_h) \lesssim h^4$ as $h\to 0$.

With i.i.d.\ Gaussian noise of variance $\sigma^2$, using $m$ replicates per location is equivalent to observing averaged responses with variance $\sigma^2/m$.
Thus, for every $\epsilon>0$ there exist $h$ and $m$ such that the corresponding replicated design satisfies
$\Phi_t(\bs{Z}^{(m)}_h)\le \epsilon$.
\end{theorem}

An important consequence is that, once NeST-BO is operating in a neighborhood where NM is well-conditioned, VPC implies $\varepsilon_t \to 0$ as the batch size increases, and NeST-BO asymptotically inherits the \textit{local quadratic} convergence rate of NM (full statement and proof in Appendix~\ref{app:local-quadratic-conv}).
Finally, the VPC statement itself does not depend on the particular choice of scale factor beyond requiring $\widehat{s}_t>0$.
Thus, the simplified acquisition in \eqref{eq:nest-approx} retains the same asymptotic guarantee, with the caveat that the argument is inherently for large enough batches. 

\subsection{Scaling NeST-BO via Subspaces}

The main computational bottleneck for NeST-BO in high ambient dimension $d$ is the Hessian term $\pi^{\bs{H}}_{\mathcal{D} \cup \bs{Z}}(\bs{x}_t)$, which scales quadratically in $d$. We propose to address this by instantiating NeST-BO \textit{inside} embedded subspaces $\bs{v} \in \mathbb{R}^m$ of size $m \ll d$. We adopt the nested, sparse random-embedding scheme of BAxUS -- bins of input coordinates are hashed into $m$ target coordinates with random signs, and the target dimension is increased over time by \textit{splitting} bins while retaining observations across embeddings \citep{papenmeier2022increasing}. This approach preserves past data under splits and increases the probability that the subspace contains an optimizer as $m$ grows. 

We run NeST-BO in the subspace $\bs{v}$, map the candidates back to the ambient dimension $\bs{x} = \bs{S}^\top \bs{v}$ (where $\bs{S}^\top \in \mathbb{R}^{d \times m}$ is the projection matrix) for evaluation, and update the GP in the embedded subspace coordinates. This reduces the per-candidate curvature cost from $O(d^2)$ to $O(m^2)$ while preserving the step-targeting principle.
Compared to the original BAxUS algorithm, which couples its embedding with a variant of TuRBO, our version replaces TuRBO with NeST-BO; in problems where curvature matters, we find that this can can substantially accelerate optimization progress (see results in Section \ref{sec:experiments}). 

Lastly, note that our theoretical results related to VPC are proved only for NeST-BO in the \textit{original} $d$-dimensional space, and they do not directly extend to the subspace variant.
Intuitively, the VPC proof relies on constructing $d$-dimensional designs that can drive the full gradient and Hessian power functions at $\bs{x}_t$ to zero, whereas a fixed low-dimensional embedding only controls derivative information within the embedded coordinates and may not capture curvature directions relevant in $\mathbb{R}^d$. 
The BAxUS-style embedding should thus be viewed as a practical scalability heuristic that leverages the NeST sampling principle. 

\section{RESULTS}
\label{sec:experiments}

We now benchmark NeST-BO and its subspace variant, labeled NeST-BO-sub, against strong local and global BO baselines: TuRBO~\citep{eriksson2019scalable}, GIBO~\citep{muller2021local}, MPD~\citep{nguyen2022local}, MinUCB~\citep{fan2024minimizing}, BAxUS~\citep{papenmeier2022increasing}, and a ``vanilla'' GP-BO configured with dimensionally calibrated priors and LogEI (we label this \emph{D-scaled LogEI})~\citep{hvarfner2024vanilla}. We also include Sobol sampling as a non-model baseline.

Unless a global minimizer is known, we report the \textit{minimum observed value}; otherwise we show \textit{simple regret} (in log scale). Curves display the median across $10$ independent replicates with $\pm$ one standard error as the shaded band. Implementation details (e.g., models, hyperparameter updates, acquisition optimization, and the precise evaluation budgets for each task) are provided in Appendix~\ref{app:experiment-details}. In short, we used a common SE kernel across methods and standard GP training and acquisition optimizers from \texttt{BoTorch} \citep{balandat2020botorch}; hyperparameters and method-specific settings follow prior work and are held consistent across tasks to keep comparisons fair. Due to space limitations, additional ablations and diagnostics appear in Appendix~\ref{app:add-experiments}.
The code is available on GitHub: \url{https://github.com/PaulsonLab/NeST-BO}. 

\subsection{Synthetic Test Functions}

We consider two regimes relevant to our method. \textit{Moderate dimension} ($d=20$): Sphere, Griewank, and Ackley. \textit{High dimension with sparse structure} ($d=1000$ with $d_\text{eff}=30$ relevant variables): Griewank, Ackley, and Rosenbrock; the remaining coordinates are dummies and the algorithms are not told which ones are active. These are commonly chosen test problems in the BO literature; formal definitions are given in Appendix~\ref{app:synthetic-details}. Figure~\ref{fig:main results} (first six panels in the top two rows) summarizes the results.

\begin{figure*}[tb!]
  \centering
  \includegraphics[width=0.95\textwidth]{figures/Main_results2.jpg}
  \caption{Summary of performance versus evaluations for all synthetic and real-world problems and all methods. Each panel shows either simple regret (log scale; when the global minimizer is known) or the minimum observed value (otherwise). Curves are medians across $10$ runs; shading is $\pm$ one standard error. Top two rows: synthetic problems (20d and 1000d with 30 active variables). Bottom two rows: real-world tasks (control, planning, and high-dimensional model selection). See Appendix~\ref{app:experiment-details} for the full protocol and Appendix~\ref{app:add-experiments} for extended studies.}
  \label{fig:main results}
\end{figure*}

On the $d=20$ problems, NeST-BO consistently matches or outperforms all non-subspace baselines. A common pattern is an initial period with slower progress when far from a minimizer (before curvature is accurately estimated) followed by a steep drop once the iterate enters a well-behaved neighborhood. As several steps accumulate, the estimated Newton step better aligns with the true local geometry, further accelerating progress -- consistent with our Newton-step error bound, which tightens as the gradient and Hessian power functions shrink. Subspace methods start from stronger initial values by design (they restrict the initial design and acquisitions), but NeST-BO-sub ultimately achieves the lowest regret and, notably, improves over BAxUS across all three problems. These results indicate that \textit{how} one moves inside a subspace matters, i.e., curvature-aware Newton updates can be more effective than trust-region moves, even when both operate in the same embedding space.

On the $d=1000$ sparse suite, subspaces are essential. Methods that attempt to learn gradients and/or Hessians in the \textit{ambient} space require $O(d)$ queries per iteration and cannot meaningfully progress under our fixed budgets (e.g., 200 evaluations), so we do not include them here. NeST-BO-sub clearly dominates BAxUS, D-scaled LogEI, and TuRBO, achieving substantially lower regret on both Ackley and Griewank. TuRBO’s trust-region strategy is disadvantaged in very high dimensions, where large local diameters push pairwise distances into regimes that degrade GP fit and acquisition gradients; in contrast, NeST-BO-sub converts a handful of targeted samples near the iterate into accurate Newton steps inside the selected subspace (that can expand as iterations proceed), which drives fast local improvement.

\subsection{Mid- to High-Dim. Real-World Tasks}

We evaluate six real-world benchmarks spanning reinforcement learning (RL) control, robotic planning, and large-scale hyperparameter tuning. \textbf{Control:} Lunar Lander ($d=12$) and Swimmer ($d=16$) from OpenAI Gymnasium; the objective is the negative episodic return (reward sign flipped)~\citep{towers2024gymnasium}. \textbf{Planning:} Robot Pushing ($d=14$) and Rover Trajectory ($d=60$) from \citet{wang2018batched,eriksson2019scalable}, both optimized as negative reward. \textbf{Very high-dimension:} Ant ($d=888$) -- a MuJoCo quadruped with 8-dimensional action space and 111-dimensional observations, yielding a linear state-feedback policy with 888 parameters -- and Leukemia ($d=7129$), a weighted Lasso problem with 7129 hyperparameters from LassoBench~\citep{vsehic2022lassobench}. Task definitions, bounds, and initialization protocols are summarized in Appendix~\ref{app:real-world-details}. Figure~\ref{fig:main results} (last six panels in bottom two rows) reports median performance with $\pm$ one standard error bands.

In the mid-dimensional group ($d \leq 60$), NeST-BO is consistently state of the art or competitive, achieving the lowest average value in the fewest iterations for Lunar Lander, Robot Pushing, and Swimmer. On Rover Trajectory, NeST-BO-sub edges out NeST-BO, aligning with the intuition that embeddings can capture useful structure as dimensionality and coupling grow. On Lunar Lander and Swimmer, however, subspaces can slightly hurt performance, suggesting most coordinates contribute and the ambient space is preferable.

In the very high-dimensional group, NeST-BO-sub is again the best-performing method. On Ant ($d=888$), D-scaled LogEI and TuRBO show little-to-no improvement over their starting values, while BAxUS improves but plateaus well above NeST-BO-sub. The Ant landscape is both non-stationary and ill-conditioned; length-scale calibration alone (as in D-scaled LogEI) can over-smooth such objectives, and trust-region steps struggle to adapt their geometry. On Leukemia ($d=7129$), NeST-BO-sub continues to improve throughout the budget and achieves the best final objective, whereas other methods stagnate early.

\subsection{Other Methods in Same Subspace}

A natural question raised by our subspace results in Figure \ref{fig:main results} is whether NeST-BO’s gains are primarily due to the embedding, or whether targeting the local Newton step continues to matter once we restrict the search to lower-dimensional subspaces. To isolate these effects, we compare NeST-BO-sub to two strong baselines that use \textit{the same} subspace machinery: (i) GIBO-sub, i.e., GIBO run in the subspace using its length-scale-normalized gradient step \citep{muller2021local} and (ii) D-scaled LogEI-sub, i.e., standard LogEI run in the subspace using the GP prior from \citep{hvarfner2024vanilla}. For context, we also include BAxUS. Figure \ref{fig:subspace} shows the embedding is \textit{not} the whole story. NeST-BO-sub drives regret down by \textit{several orders of magnitude} relative to GIBO-sub and D-scaled LogEI-sub (the latter two plateau near $\sim 10^{0}$ on the log scale), while NeST-BO-sub continues improving throughout the budget. On Ackley, the same pattern holds: NeST-BO-sub reaches substantially lower regret and converges faster, while the other subspace methods level off earlier. 

\begin{figure}[tb!]
  \centering
  \includegraphics[width=0.9\linewidth]{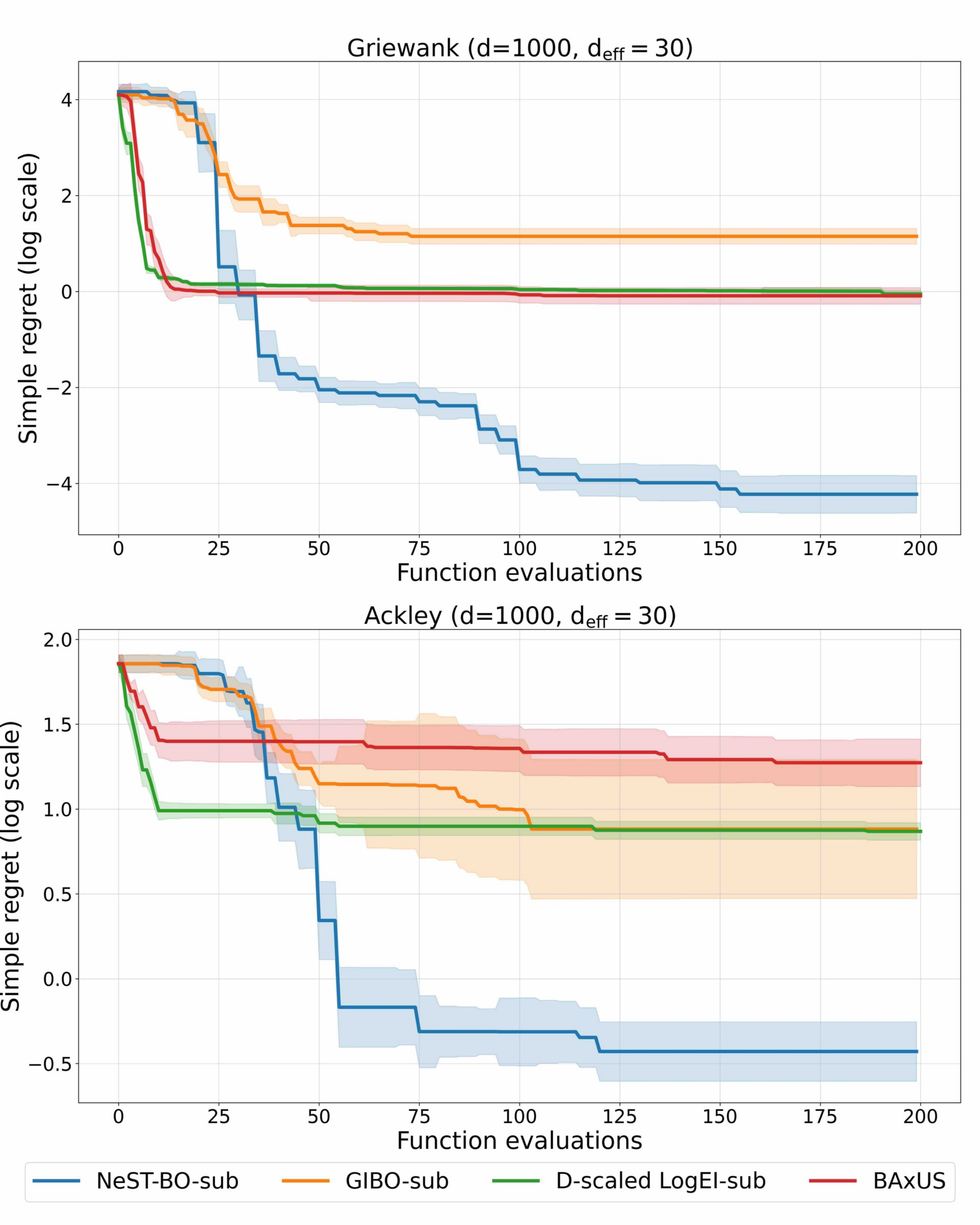}
  \caption{Optimization of $1000$-dimensional Griewank and Ackley with $30$ active variables. All -sub variants operate using the same BAxUS-style embedding approach. Median simple regret (log scale) with $\pm$ one standard error shading across 10 runs.}
  \label{fig:subspace}
\end{figure}

\section{CONCLUSIONS}
\label{sec:conclusions}

This work presents NeST-BO, a curvature-aware local Bayesian optimization (BO) method that selects samples to shrink a one-step lookahead bound on Newton-step error and then moves with a damped Newton update. Theoretical analysis establishes a vanishing power-function condition that holds under NeST-BO sampling, implying the algorithm inherits (inexact) Newton guarantees while our experiments show consistent gains over state-of-the-art local and high-dimensional BO baselines on synthetic and real-world problems, including tasks with several thousands of variables (when combined with a subspace variant to enhance scalability). Looking ahead, two promising directions for future research include investigating improved subspace embedding strategies and improving numerical efficiency of the acquisition optimization by better exploiting kernel structure and sparsity for derivative-aware GPs.



\bibliographystyle{abbrvnat}
\bibliography{reference}

\section*{Checklist}



\begin{enumerate}

  \item For all models and algorithms presented, check if you include:
  \begin{enumerate}
    \item A clear description of the mathematical setting, assumptions, algorithm, and/or model. [Yes]
    \item An analysis of the properties and complexity (time, space, sample size) of any algorithm. [Yes]
    \item (Optional) Anonymized source code, with specification of all dependencies, including external libraries. [Yes]
  \end{enumerate}

  \item For any theoretical claim, check if you include:
  \begin{enumerate}
    \item Statements of the full set of assumptions of all theoretical results. [Yes]
    \item Complete proofs of all theoretical results. [Yes]
    \item Clear explanations of any assumptions. [Yes]     
  \end{enumerate}

  \item For all figures and tables that present empirical results, check if you include:
  \begin{enumerate}
    \item The code, data, and instructions needed to reproduce the main experimental results (either in the supplemental material or as a URL). [Yes]
    \item All the training details (e.g., data splits, hyperparameters, how they were chosen). [Yes]
    \item A clear definition of the specific measure or statistics and error bars (e.g., with respect to the random seed after running experiments multiple times). [Yes]
    \item A description of the computing infrastructure used. (e.g., type of GPUs, internal cluster, or cloud provider). [Yes]
  \end{enumerate}

  \item If you are using existing assets (e.g., code, data, models) or curating/releasing new assets, check if you include:
  \begin{enumerate}
    \item Citations of the creator If your work uses existing assets. [Yes]
    \item The license information of the assets, if applicable. [Not Applicable]
    \item New assets either in the supplemental material or as a URL, if applicable. [Yes]
    \item Information about consent from data providers/curators. [Not Applicable]
    \item Discussion of sensible content if applicable, e.g., personally identifiable information or offensive content. [Not Applicable]
  \end{enumerate}

  \item If you used crowdsourcing or conducted research with human subjects, check if you include:
  \begin{enumerate}
    \item The full text of instructions given to participants and screenshots. [Not Applicable]
    \item Descriptions of potential participant risks, with links to Institutional Review Board (IRB) approvals if applicable. [Not Applicable]
    \item The estimated hourly wage paid to participants and the total amount spent on participant compensation. [Not Applicable]
  \end{enumerate}

\end{enumerate}

\clearpage
\appendix
\numberwithin{equation}{section}
\numberwithin{figure}{section}
\numberwithin{table}{section}

\thispagestyle{empty}

\onecolumn
\aistatstitle{Supplementary Material}



\section{GAUSSIAN PROCESS DERIVATIVE EXPRESSIONS}
\label{app:gp-expressions}

\subsection{Notation Summary}

Let $\mathcal{X} \subset \mathbb{R}^d$ be a convex, compact domain and let $\mathcal{H}$ be a reproducing kernel Hilbert space (RKHS) on $\mathcal{X}$ with reproducing kernel $k(\cdot,\cdot)$, inner product $\langle \cdot,\cdot\rangle_{\mathcal H}$, and norm $\|\cdot\|_{\mathcal H}$.

For the objective $f:\mathcal{X}\to\mathbb{R}$, denote the gradient and Hessian at $\bs x\in\mathcal X$ by
\begin{align*}
\bs g(\bs x) = \nabla f(\bs x)\in\mathbb{R}^d, \qquad
\bs H(\bs x) = \nabla^2 f(\bs x)\in\mathbb{R}^{d\times d}.    
\end{align*}
A Gaussian process (GP) prior $\mathcal{GP}(\mu,k)$ is specified by mean $\mu$ and covariance $k$ functions. Conditioning on data $\mathcal{D}=\{(\bs X,\bs y)\}$ for $\bs{X} \in \mathbb{R}^{n \times d}$ and $\bs{y} \in \mathbb{R}^n$ yields a posterior GP $f | \mathcal{D}\sim\mathcal{GP}(\mu_{\mathcal D},k_{\mathcal D})$. Since differentiation is linear, $\nabla f$ and $\nabla^2 f$ are also GPs under the same conditioning. We use the shorthand
\[
\widehat{\bs g}_{\mathcal D}(\bs x)=\mathbb{E}\{\bs g(\bs x)\mid\mathcal D\}\in\mathbb{R}^d,\quad
\Sigma^{\bs g}_{\mathcal D}(\bs x)=\mathrm{cov}\{\bs g(\bs x)\mid\mathcal D\}\in\mathbb{R}^{d\times d},
\]
\[
\widehat{\bs H}_{\mathcal D}(\bs x)=\mathbb{E}\{\bs H(\bs x)\mid\mathcal D\}\in\mathbb{R}^{d\times d},\quad
\Sigma^{\bs H}_{\mathcal D}(\bs x)=\mathrm{cov}\{\mathrm{vec}(\bs H(\bs x))\mid\mathcal D\}\in\mathbb{R}^{d^2\times d^2}.
\]
We write $\|\cdot\|$ for the Euclidean norm (vectors) and the induced operator 2-norm (matrices).

\subsection{Posterior GP under Linear Operators}

GPs $\mathcal{GP}(\mu,k)$ are closed under linear operators. For linear operators $\mathcal{L},\mathcal{M}$ and $f\sim\mathcal{GP}(\mu,k)$,
\begin{align*}
\begin{bmatrix}\mathcal{L}f \\ \mathcal{M}f\end{bmatrix}
~\sim~ \mathcal{GP}\!\left(
\begin{bmatrix}\mathcal{L}\mu \\ \mathcal{M}\mu\end{bmatrix},
\begin{bmatrix}
\mathcal{L}k\,\mathcal{L}' & \mathcal{L}k\,\mathcal{M}'\\
\mathcal{M}k\,\mathcal{L}' & \mathcal{M}k\,\mathcal{M}'
\end{bmatrix}\right),
\end{align*}
where $\mathcal{L}'$ and $\mathcal{M}'$ act on the second kernel argument as adjoints of $\mathcal{L}$ and $\mathcal{M}$. We condition on noisy function observations at $\bs X\in\mathbb{R}^{n\times d}$,
$\bs y = f(\bs X)+\bs\varepsilon$, $\bs\varepsilon\sim\mathcal N(\bs0,\sigma^2\bs I)$, take $\mathcal{M}=\mathrm{Id}$ at $\bs X$, and evaluate $\mathcal{L}\in\{\mathrm{Id},\nabla,\nabla^2\}$ at a test point $\bs x$. Then, for any $\mathcal L$, the posterior
$\mathcal{L}f(\bs x)\mid\mathcal D \sim \mathcal N(\mu^\mathcal{L}_{\mathcal D}(\bs x),\,\Sigma^\mathcal{L}_{\mathcal D}(\bs x))$ with
\begin{subequations}\label{eq:posterior-gp-L}
\begin{align}
\mu^\mathcal{L}_{\mathcal D}(\bs x)
&= \mathcal{L}\mu(\bs x) + \mathcal{L}k(\bs x,\bs X)\,\big(k(\bs X,\bs X)+\sigma^2\bs I\big)^{-1}\!\big(\bs y-\mu(\bs X)\big),\\
k^\mathcal{L}_{\mathcal D}(\bs x,\bs x')
&= \mathcal{L}k(\bs x,\bs x')\,\mathcal{L}'
- \mathcal{L}k(\bs x,\bs X)\,\big(k(\bs X,\bs X)+\sigma^2\bs I\big)^{-1}\,k(\bs X,\bs x')\,\mathcal{L}',\\
\Sigma^\mathcal{L}_{\mathcal D}(\bs x)
&= k^\mathcal{L}_{\mathcal D}(\bs x,\bs x')\big|_{\bs x'=\bs x}.
\end{align}
\end{subequations}
Let $\bs K_{\bs X\bs X}=k(\bs X,\bs X)+\sigma^2\bs I$ and $\bs\alpha=\bs K_{\bs X\bs X}^{-1}\big(\bs y-\mu(\bs X)\big)$; these can be precomputed from the prior. 

\subsection{Power Functions for Gradient and Hessian}

We quantify posterior uncertainty via the (squared) \textit{power functions} (traces of derivative covariances) at $\bs x$:
\begin{align*}
& \pi^{\bs g}_{\mathcal D}(\bs x)
= \mathrm{tr}\left(\Sigma^{\bs g}_{\mathcal D}(\bs x)\right)
= \sum_{i=1}^d \left.\frac{\partial^2 k_{\mathcal D}(\bs x,\bs x')}{\partial x_i\,\partial x'_i}\right|_{\bs x'=\bs x}, \qquad \pi^{\bs H}_{\mathcal D}(\bs x)
= \mathrm{tr}\left(\Sigma^{\bs H}_{\mathcal D}(\bs x)\right)
= \sum_{i=1}^d\sum_{j=1}^d \left.\frac{\partial^4 k_{\mathcal D}(\bs x,\bs x')}{\partial x_i\,\partial x_j\,\partial x'_i\,\partial x'_j}\right|_{\bs x'=\bs x}.
\end{align*}
Using \eqref{eq:posterior-gp-L}, these admit closed forms:
\begin{subequations}\label{eq:power-functions-expanded}
\begin{align}
\pi^{\bs{g}}_\mathcal{D}(\bs{x})
&= \sum_{i=1}^d \left[
\frac{\partial^2 k(\bs{x}, \bs{x}')}{\partial x_i \partial x'_i}
- \frac{\partial k(\bs{x}, \bs{X})}{\partial x_i}\,
\bs K_{\bs X\bs X}^{-1}\,
\frac{\partial k(\bs{X}, \bs{x}')}{\partial x'_i}
\right]_{\bs{x}'=\bs{x}},\\
\pi^{\bs{H}}_\mathcal{D}(\bs{x})
&= \sum_{i=1}^d\sum_{j=1}^d \left[
\frac{\partial^4 k(\bs{x}, \bs{x}')}{\partial x_i \partial x_j \partial x'_i \partial x'_j}
- \frac{\partial^2 k(\bs{x}, \bs{X})}{\partial x_i \partial x_j}\,
\bs K_{\bs X\bs X}^{-1}\,
\frac{\partial^2 k(\bs{X}, \bs{x}')}{\partial x'_i \partial x'_j}
\right]_{\bs{x}'=\bs{x}}.
\end{align}
\end{subequations}

\subsection{Derivatives for the Squared Exponential (SE) Kernel}

We use the SE kernel with automatic relevance determination (ARD) (i.e., independent length-scales $\ell_i$ are included for each dimension to control their importance)
\[
k(\bs x,\bs x')
= \sigma_f^2\,\exp\!\left(-\tfrac{1}{2}(\bs x-\bs x')^\top \bs L\,(\bs x-\bs x')\right),
\]
with scale hyperparameter $\sigma_f^2$ that controls the expected variance of $f$ and $\bs L=\mathrm{diag}(\ell_1^{-2},\ldots,\ell_d^{-2})$. Let $L_{ii}=\ell_i^{-2}$ and $\bs r=\bs x-\bs x'$ with $r_i$ denoting its $i$-th component.

\paragraph{First derivatives.}
\[
\frac{\partial k(\bs x,\bs x')}{\partial x_i} = -L_{ii}\,r_i\,k(\bs x,\bs x'),\qquad
\frac{\partial k(\bs x,\bs x')}{\partial x'_j} = +L_{jj}\,r_j\,k(\bs x,\bs x').
\]

\paragraph{Second derivatives (mixed across arguments).}
\[
\frac{\partial^2 k(\bs x,\bs x')}{\partial x_i\,\partial x'_j}
= \Big(L_{ii}\,\delta_{ij} - L_{ii}L_{jj}\,r_i r_j\Big)\,k(\bs x,\bs x'),
\qquad
\left.\frac{\partial^2 k}{\partial x_i\,\partial x'_j}\right|_{\bs x'=\bs x}
= L_{ii}\,\delta_{ij}\,\sigma_f^2.
\]

\paragraph{Second derivatives (same argument).}
\[
\frac{\partial^2 k(\bs x,\bs x')}{\partial x_i\,\partial x_j}
= \frac{\partial^2 k(\bs x,\bs x')}{\partial x'_i\,\partial x'_j}
= \Big(-L_{ii}\,\delta_{ij} + L_{ii}L_{jj}\,r_i r_j\Big)\,k(\bs x,\bs x').
\]

\paragraph{Fourth derivatives (two in each argument).}
For the Hessian trace terms we require
\[
\frac{\partial^4 k(\bs x,\bs x')}{\partial x_i\,\partial x_j\,\partial x'_i\,\partial x'_j}
=
\begin{cases}
L_{ii}^2\big(L_{ii}^2 r_i^4 - 6 L_{ii} r_i^2 + 3\big)\,k(\bs x,\bs x') & \text{if } i=j,\\[3pt]
L_{ii}L_{jj}\big(L_{ii}L_{jj} r_i^2 r_j^2 - L_{ii} r_i^2 - L_{jj} r_j^2 + 1\big)\,k(\bs x,\bs x') & \text{if } i\neq j,
\end{cases}
\]
and at coincidence $\bs x'=\bs x$ these reduce to
\[
\left.\frac{\partial^2 k}{\partial x_i\,\partial x_j}\right|_{\bs x'=\bs x} = -L_{ii}\,\delta_{ij}\,\sigma_f^2,\qquad
\left.\frac{\partial^4 k}{\partial x_i\,\partial x_j\,\partial x'_i\,\partial x'_j}\right|_{\bs x'=\bs x}
=
\sigma_f^2\times
\begin{cases}
3\,L_{ii}^2, & i=j,\\
L_{ii}L_{jj}, & i\neq j.
\end{cases}
\]

\section{PROOF OF DATA-DEPENDENT NEWTON-STEP ERROR BOUND}
\label{app:proof-newton}

We prove Theorem \ref{thm:newton-bound} from the main text in this appendix. Notation for the GP posterior and derivative processes (including $\widehat{\bs g}_{\mathcal D}$, $\widehat{\bs H}_{\mathcal D}$, and the gradient/Hessian power functions
$\pi^{\bs g}_{\mathcal D}$, $\pi^{\bs H}_{\mathcal D}$) is summarized in Appendix \ref{app:gp-expressions}.

Throughout this work, we make the following standard assumptions about the regularity of the kernel and boundedness of the ground-truth function. 

\begin{assumption}[Kernel regularity]
\label{asmp:kernel}
The kernel $k$ is stationary and four times continuously differentiable.
\end{assumption}

\begin{assumption}[Function class]
\label{asmp:rkhs_bound}
The ground-truth $f$ is in $\mathcal{H}$ and satisfies $\|f\|_{\mathcal H} \leq B$ for some $B<\infty$.
\end{assumption}

These two assumptions directly imply pointwise boundedness of all the first- and second-order derivatives of $f$ in terms of the RKHS norm. 

\subsection{Auxiliary Bounds via Power Functions}

The following are minor extensions of standard RKHS ``power function'' bounds for derivative estimates under GP posteriors, which are a consequence of \citep[Theorem 11.4]{wendland2004scattered}. The first was presented in \citep[Lemma 1]{wu2023behavior} and the second we prove here.

\begin{lemma}[Gradient posterior error]\label{lem:grad}
For any $\bs x\in\mathcal X$ and dataset $\mathcal D$,
\[
\big\|\bs g(\bs x)-\widehat{\bs g}_{\mathcal D}(\bs x)\big\|^2
~\le~ \pi^{\bs g}_{\mathcal D}(\bs x)\,\|f\|_{\mathcal H}^2.
\]
\end{lemma}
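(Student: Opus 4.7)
The plan is to reduce the bound to a componentwise RKHS duality argument and then sum across coordinates. First I would fix a coordinate $i \in \{1, \ldots, d\}$ and introduce the linear functional $\mathcal{L}_i : \mathcal{H} \to \mathbb{R}$ defined by $\mathcal{L}_i(f) = \partial_i f(\bs x) - \partial_i \widehat{f}_{\mathcal D}(\bs x)$, where $\widehat{f}_{\mathcal D}$ is the posterior mean written as a $\bs y$-affine interpolant of $f$ via the kriging weights. Under Assumption \ref{asmp:kernel} (four-times continuously differentiable stationary kernel), the partial derivative evaluation functional $f \mapsto \partial_i f(\bs x)$ is bounded on $\mathcal{H}$, and $\partial_i k(\cdot, \bs x) \in \mathcal{H}$; this makes $\mathcal{L}_i$ a bounded linear functional on $\mathcal{H}$.

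Second, by the reproducing property, $\mathcal{L}_i$ admits a Riesz representer $e_i(\cdot, \bs x) \in \mathcal{H}$ with
\[
\mathcal{L}_i(f) = \langle f, \, e_i(\cdot, \bs x)\rangle_{\mathcal H}, \qquad e_i(\cdot, \bs x) = \partial_i k(\cdot, \bs x) - \partial_i k(\bs X, \bs x)^\top \bs K_{\bs X \bs X}^{-1} k(\bs X, \cdot).
\]
Cauchy--Schwarz then yields $|\mathcal{L}_i(f)|^2 \le \|e_i(\cdot, \bs x)\|_{\mathcal H}^2 \, \|f\|_{\mathcal H}^2$. The key identification is that $\|e_i(\cdot, \bs x)\|_{\mathcal H}^2$ equals the posterior variance of $\partial_i f(\bs x)$, i.e., the $i$-th diagonal entry $[\Sigma^{\bs g}_{\mathcal D}(\bs x)]_{ii}$. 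This follows by expanding the RKHS inner product using the reproducing identities $\langle k(\cdot, \bs u), k(\cdot, \bs v)\rangle_{\mathcal H} = k(\bs u, \bs v)$ and $\langle \partial_i k(\cdot, \bs x), k(\cdot, \bs u)\rangle_{\mathcal H} = \partial_i k(\bs u, \bs x)$, and matching the resulting expression with the closed-form posterior covariance $\Sigma^{\bs g}_{\mathcal D}$ from Appendix \ref{app:gp-expressions}.

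Finally, summing the componentwise bounds over $i = 1, \ldots, d$ gives
\[
\|\bs g(\bs x) - \widehat{\bs g}_{\mathcal D}(\bs x)\|^2 = \sum_{i=1}^d |\mathcal{L}_i(f)|^2 \le \left(\sum_{i=1}^d [\Sigma^{\bs g}_{\mathcal D}(\bs x)]_{ii}\right) \|f\|_{\mathcal H}^2 = \pi^{\bs g}_{\mathcal D}(\bs x)\, \|f\|_{\mathcal H}^2,
\]
which is the claim. The main technical obstacle is the interchange of differentiation and the RKHS inner product required to identify $\|e_i\|_{\mathcal H}^2$ with the posterior variance; this hinges on kernel smoothness (Assumption \ref{asmp:kernel}) so that derivative functionals are bounded and lie in $\mathcal H$, and it is essentially the derivative version of the power-function identity in \citep[Theorem 11.4]{wendland2004scattered}. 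Everything else is a direct application of Cauchy--Schwarz followed by a sum over coordinates.
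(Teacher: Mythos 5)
Your argument is correct and is essentially the paper's route: the paper proves this bound by invoking the power-function inequality of \citep[Theorem~11.4]{wendland2004scattered} applied to the first-partial-derivative functionals and summing over coordinates (citing \citealt[Lemma~1]{wu2023behavior} for the gradient case), and your Riesz-representer/Cauchy--Schwarz construction is exactly the standard proof of that cited inequality, just unpacked. The only cosmetic difference is that you re-derive the representer identity $\|e_i(\cdot,\bs x)\|_{\mathcal H}^2=[\Sigma^{\bs g}_{\mathcal D}(\bs x)]_{ii}$ rather than quoting it as a black box, which changes nothing of substance.
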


\begin{lemma}[Hessian posterior error]\label{lem:hess}
For any $\bs x\in\mathcal X$ and dataset $\mathcal D$,
\[
\big\|\bs H(\bs x)-\widehat{\bs H}_{\mathcal D}(\bs x)\big\|^2
~\le~ \pi^{\bs H}_{\mathcal D}(\bs x)\,\|f\|_{\mathcal H}^2.
\]
\end{lemma}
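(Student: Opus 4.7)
The plan is to reduce the vector/matrix-valued Hessian bound to the standard scalar RKHS power-function bound (Wendland, Theorem~11.4) applied entrywise, and then sum. Concretely, I will: (i) show each second-partial evaluation is a bounded linear functional on $\mathcal{H}$; (ii) invoke the scalar power-function bound for each entry; and (iii) aggregate over entries to recover $\pi^{\bs H}_{\mathcal D}$.

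First, fix $\bs x\in\mathcal X$ and consider the functionals $L_{ij}:\mathcal H\to\mathbb R$ defined by $L_{ij} f = \partial^2 f(\bs x)/\partial x_i\partial x_j$. Under Assumption~\ref{asmp:kernel} (four-times continuously differentiable kernel), these functionals are bounded on $\mathcal{H}$ with Riesz representers $\partial^2 k(\cdot,\bs x)/\partial x_i\partial x_j \in \mathcal H$. Their best linear predictor from data $\mathcal D$ in the RKHS sense is exactly the GP posterior mean $\widehat H_{ij,\mathcal D}(\bs x)=[\widehat{\bs H}_{\mathcal D}(\bs x)]_{ij}$, obtained by applying the linear operator $\partial_i\partial_j$ to the posterior mean; see the formulas in \eqref{eq:posterior-gp-L}. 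The squared residual admits the standard power-function bound
\begin{equation*}
\bigl(H_{ij}(\bs x)-\widehat H_{ij,\mathcal D}(\bs x)\bigr)^2 \;\le\; P^{ij}_{\mathcal D}(\bs x)\,\|f\|_{\mathcal H}^2,
\end{equation*}
where $P^{ij}_{\mathcal D}(\bs x)=[\Sigma^{\bs H}_{\mathcal D}(\bs x)]_{(ij),(ij)}$ is the posterior variance of $H_{ij}(\bs x)$, which equals the diagonal fourth-derivative expression in \eqref{eq:power-functions-expanded}.

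Next, summing over all $(i,j)\in\{1,\dots,d\}^2$ yields
\begin{equation*}
\|\bs H(\bs x)-\widehat{\bs H}_{\mathcal D}(\bs x)\|_F^2
= \sum_{i,j}\bigl(H_{ij}(\bs x)-\widehat H_{ij,\mathcal D}(\bs x)\bigr)^2
\le \Bigl(\sum_{i,j} P^{ij}_{\mathcal D}(\bs x)\Bigr)\|f\|_{\mathcal H}^2
= \pi^{\bs H}_{\mathcal D}(\bs x)\,\|f\|_{\mathcal H}^2,
\end{equation*}
since the Frobenius norm of $\bs H-\widehat{\bs H}$ equals the Euclidean norm of $\mathrm{vec}(\bs H-\widehat{\bs H})$, whose coordinate variances trace out $\pi^{\bs H}_{\mathcal D}(\bs x)$. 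Finally, using $\|A\|_2\le\|A\|_F$ for any real matrix $A$ converts this Frobenius-norm bound to the operator 2-norm bound stated in the lemma.

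The only genuinely nontrivial step is verifying (i): that $L_{ij}$ is bounded on $\mathcal H$ and that its kriging predictor coincides with the posterior mean of the differentiated GP. Both follow from $C^4$-regularity of $k$ and the standard fact that GPs (and their kriging predictors) are closed under bounded linear operators, which is already exploited in Appendix~\ref{app:gp-expressions}. The remaining steps are a direct entrywise application of the scalar Wendland bound and summation; they are essentially routine given Lemma~\ref{lem:grad}.
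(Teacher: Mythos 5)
Your proof is correct and follows essentially the same route as the paper's: apply the scalar RKHS power-function bound (Wendland, Theorem 11.4) to each second-partial-derivative functional, sum the entrywise squared errors to obtain a Frobenius-norm bound equal to $\pi^{\bs H}_{\mathcal D}(\bs x)\,\|f\|_{\mathcal H}^2$, and finish with $\|A\|\le\|A\|_F$.
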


\begin{proof}
    Let $\lambda : \mathcal{H} \to \mathbb{R}$ be the composition of the evaluation operator and a differential operator. \cite[Theorem 11.4]{wendland2004scattered} provides a bound on the squared error between the operator $\lambda$ applied to the true function $f$ and the posterior mean function $\mu_\mathcal{D}$, i.e.,
    \begin{align*}
        ( \lambda f(\bs{x}) - \lambda\mu_\mathcal{D}(\bs{x}) )^2 \leq \lambda^{(1)}\lambda^{(2)} k_\mathcal{D}( \bs{x}, \bs{x} ) \| f \|_\mathcal{H}^2,
    \end{align*}
    where $\lambda^{(1)}$ and $\lambda^{(2)}$ are applied to the first and second argument of $k_\mathcal{D}( \cdot, \cdot )$, respectively. Select the linear functional to be the second partial derivative $\lambda : f \mapsto \frac{\partial^2}{\partial x_i \partial x_j} f$. The left hand side of the inequality then becomes $\left( \frac{\partial^2}{\partial x_i \partial x_j} f(\bs{x}) - \frac{\partial^2}{\partial x_i \partial x_j} \mu_\mathcal{D}(\bs{x})\right)^2$, which is the error in the $n$-th element of the vectorized Hessian matrix where $n = (i-1)d + j$. The right hand side is exactly the $n$-th diagonal entry of $\Sigma_\mathcal{D}^{\bs{H}}(\bs{x})$. We can use this inequality for every element $n$ and sum over $n = 1,\ldots,d^2$ to arrive at the Frobenius norm of the error in the Hessian matrix: $\| \bs{H}(\bs{x}) - \widehat{\bs{H}}_\mathcal{D}(\bs{x}) \|_F^2 = \sum_{i=1}^d\sum_{j=1}^d \left( \frac{\partial^2}{\partial x_i \partial x_j} f(\bs{x}) - \frac{\partial^2}{\partial x_i \partial x_j} \mu_\mathcal{D}(\bs{x})\right)^2$. We can then use the standard inequality $\| \bs{A} \| \leq \| \bs{A} \|_F$ for any square matrix $\bs{A}$ to complete the proof.
\end{proof}

\subsection{Proof of Theorem~\ref{thm:newton-bound} -- Bounding the Newton-Step Error}

Recall from the definitions provided in the theorem statement that $\varepsilon_{\mathcal D}(\bs x)=\|\bs d(\bs x)-\widehat{\bs d}_{\mathcal D}(\bs x)\|$, where $\bs d(\bs x)=\bs H(\bs x)^{-1}\bs g(\bs x)$ and $\widehat{\bs d}_{\mathcal D}(\bs x)=\widehat{\bs H}_{\mathcal D}(\bs x)^{-1}\widehat{\bs g}_{\mathcal D}(\bs x)$. Suppressing the explicit dependence on $\bs x$ for readability:
\[
\bs d-\widehat{\bs d}_{\mathcal D}
= \bs H^{-1}\bs g - \widehat{\bs H}_{\mathcal D}^{-1}\widehat{\bs g}_{\mathcal D}
= \underbrace{\bs H^{-1}\big(\bs g-\widehat{\bs g}_{\mathcal D}\big)}_{(\bs{a})}
+ \underbrace{\big(\bs H^{-1}-\widehat{\bs H}_{\mathcal D}^{-1}\big)\widehat{\bs g}_{\mathcal D}}_{(\bs{b})}.
\]
For any $\bs{a},\bs{b}$ in an inner-product space,
\[
\|\bs{a}+\bs{b}\|^2
= \|\bs{a}\|^2+\|\bs{b}\|^2 + 2\langle \bs{a},\bs{b}\rangle
\le \|\bs{a}\|^2+\|\bs{b}\|^2 + 2\|\bs{a}\|\,\|\bs{b}\|
\le 2\|\bs{a}\|^2+2\|\bs{b}\|^2,
\]
where the first is the Cauchy-Schwarz inequality and the second uses Young's inequality (i.e., $2uv\le u^2+v^2$ or equivalently $(u-v)^2\ge 0$). Combining this with submultiplicativity, we get:
\[
\varepsilon_{\mathcal D}^2
\le 2\|\bs H^{-1}\|^2\,\|\bs g-\widehat{\bs g}_{\mathcal D}\|^2
 + 2\|\bs H^{-1}-\widehat{\bs H}_{\mathcal D}^{-1}\|^2\,\|\widehat{\bs g}_{\mathcal D}\|^2.
\]
Use the resolvent identity
$\bs H^{-1}-\widehat{\bs H}_{\mathcal D}^{-1}
=\bs H^{-1}\big(\widehat{\bs H}_{\mathcal D}-\bs H\big)\widehat{\bs H}_{\mathcal D}^{-1}$
to bound the second term by
\(
\|\bs H^{-1}\|^2\,\|\widehat{\bs H}_{\mathcal D}-\bs H\|^2\,\|\widehat{\bs H}_{\mathcal D}^{-1}\|^2\,\|\widehat{\bs g}_{\mathcal D}\|^2.
\)
Applying Lemmas~\ref{lem:grad}–\ref{lem:hess} and $\|f\|_{\mathcal H}\le B$ gives
\[
\varepsilon_{\mathcal D}^2
\le 2\,B^2\,\|\bs H^{-1}\|^2\!\left[
\pi^{\bs g}_{\mathcal D}(\bs x)
+ \underbrace{\|\widehat{\bs H}_{\mathcal D}(\bs x)^{-1}\|^2\,\|\widehat{\bs g}_{\mathcal D}(\bs x)\|^2}_{s_{\mathcal D}(\bs x)}
\,\pi^{\bs H}_{\mathcal D}(\bs x)\right].
\]
Equivalently,
\(
\varepsilon_{\mathcal D}(\bs x)
\le C_{\bs x}\sqrt{\pi^{\bs g}_{\mathcal D}(\bs x)+s_{\mathcal D}(\bs x)\,\pi^{\bs H}_{\mathcal D}(\bs x)}
\)
with $C_{\bs x}=\sqrt{2}B\|\bs H(\bs x)^{-1}\|$, as stated.
\hfill$\square$

\paragraph{What the bound says.} The error in the Newton step decomposes into (i) gradient uncertainty and (ii) Hessian uncertainty at $\bs x$ \textit{scaled} by a factor $s_{\mathcal D}(\bs x)=\|\widehat{\bs H}_{\mathcal D}(\bs x)^{-1}\|^2\,\|\widehat{\bs g}_{\mathcal D}(\bs x)\|^2$. This scale is large precisely when the local problem is ill-conditioned and/or when the gradient is sizeable, so the bound quantitatively formalizes when learning curvature is disproportionately valuable.

\section{EMPIRICAL STUDY OF THE SCALE FACTOR'S IMPACT}
\label{app:scale-factor}

\subsection{Details on Monte Carlo estimation of NeST acquisition}
\label{app:mc-est-scale}

For completeness, we restate the complete NeST acquisition shown in \eqref{eq:nest-rearranged} here:
\begin{align}\label{eq:nest-rearranged-app}
\tilde{\alpha}_{\mathrm{NeST}}(\bs Z \mid \bs x_t,\mathcal D)
=
\pi^{\bs g}_{\mathcal D\cup\bs Z}(\bs x_t)
+
\mathbb{E}_{\bs y \mid \mathcal D,\bs Z}\!\left[\, s_{\mathcal D\cup(\bs Z,\bs y)}(\bs x_t) \,\right]\,
\pi^{\bs H}_{\mathcal D\cup\bs Z}(\bs x_t),
\end{align}
where $\bs y$ denotes the (noisy) batch observations at the candidate locations $\bs Z$,
i.e., $\bs y = f(\bs Z) + \bs\varepsilon$ with $\bs\varepsilon\sim\mathcal N(\bs 0,\sigma^2 I)$.
The two power-function terms in \eqref{eq:nest-rearranged-app} depend only on the design $\bs Z$ (and the GP hyperparameters),
whereas the scale factor depends on the \emph{realized} post-batch posterior mean of the gradient and Hessian at $\bs x_t$, i.e., $s_{\mathcal{D}\cup(\bs Z,\bs y)}(\bs x) = \big\| \widehat{\bs H}_{\mathcal{D}\cup(\bs Z,\bs y)}(\bs x)^{-1} \big\|^2\,
\big\| \widehat{\bs g}_{\mathcal{D}\cup(\bs Z,\bs y)}(\bs x) \big\|^2$. 

Under a GP with Gaussian observation noise, the random vector $\bs y \mid \mathcal D,\bs Z$ is multivariate Gaussian,
\begin{align}\label{eq:y-posterior}
\bs y | \mathcal D,\bs Z \sim \mathcal N\!\big(\,\bs\mu_{\mathcal D}(\bs Z),\, \Sigma_{\mathcal D}(\bs Z)+\sigma^2 I\,\big),
\end{align}
where $\bs\mu_{\mathcal D}(\bs Z)$ and $\Sigma_{\mathcal D}(\bs Z)$ are the GP posterior mean and covariance of $f(\bs Z) | \mathcal D$. 
However, the mapping $\bs y \mapsto s_{\mathcal D\cup(\bs Z,\bs y)}(\bs x_t)$ is nonlinear and thus in general is non-Gaussian; this makes the expectation in \eqref{eq:nest-rearranged-app} analytically intractable, motivating the need for Monte Carlo (MC) approximation.

Let $\Sigma_y = \Sigma_{\mathcal D}(\bs Z)+\sigma^2 I$ and $\bs\mu_y = \bs\mu_{\mathcal D}(\bs Z)$.
We approximate the expectation in \eqref{eq:nest-rearranged-app} by drawing $S$ samples
\begin{align}\label{eq:mc-y-samples}
\bs y^{(s)} \sim \mathcal N(\bs\mu_y,\Sigma_y), \qquad s=1,\dots,S,
\end{align}
and computing
\begin{align}\label{eq:mc-estimate-scale}
\mathbb E\!\left[s_{\mathcal D\cup(\bs Z,\bs y)}(\bs x_t)\right]
\approx 
\frac{1}{S}\sum_{s=1}^S
s_{\mathcal D\cup(\bs Z,\bs y^{(s)})}(\bs x_t).
\end{align}
Substituting \eqref{eq:mc-estimate-scale} into \eqref{eq:nest-rearranged-app} yields the MC-approximated acquisition
\begin{align}\label{eq:nest-mc}
\tilde{\alpha}^{\mathrm{MC}}_{\mathrm{NeST}}(\bs Z | \bs x_t,\mathcal D)
=
\pi^{\bs g}_{\mathcal D\cup\bs Z}(\bs x_t)
+
\left(
\frac{1}{S}\sum_{s=1}^S
s_{\mathcal D\cup(\bs Z,\bs y^{(s)})}(\bs x_t)
\right)
\pi^{\bs H}_{\mathcal D\cup\bs Z}(\bs x_t),
\end{align}
where $s_{\mathcal D\cup(\bs Z,\bs y^{(s)})}(\bs x_t)$ can be evaluated using standard GP conditioning rules described in Appendix \ref{app:gp-expressions}. 

\subsection{Comparison of scale factor selection methods on synthetic problems}
\label{app:comparison-scale}

The practical NeST acquisition \eqref{eq:nest-approx} derived in the main text is
\[
\widehat{\alpha}_{\mathrm{NeST}}(\bs Z | \bs x_t,\mathcal D, \widehat{s}_t)
=\pi^{\bs g}_{\mathcal D\cup \bs Z}(\bs x_t)+ \widehat{s}_t\,\pi^{\bs H}_{\mathcal D\cup \bs Z}(\bs x_t),
\]
i.e., a weighted sum of the local gradient and Hessian power functions. While the one-step lookahead form \eqref{eq:nest-rearranged} includes an expectation over future observations, estimating that expectation by Monte Carlo can be costly. This appendix asks: \textit{how sensitive is performance to the choice of the scale factor $\widehat{s}_t$?}

\paragraph{Setup.} We compare four acquisition functions for \textit{sequential} design over $b_t = d$ points:
\begin{itemize}
    \item $\widehat{\alpha}_{\mathrm{NeST}}(s{=}1)$ with $\widehat{s}_t  = 1$ (our default);
    \item $\widehat{\alpha}_{\mathrm{NeST}}(s{=}\text{plugin})$ with $\widehat{s}_t = s_{\mathcal D}(\bs x_t)=\|\widehat{\bs H}_{\mathcal D}(\bs x_t)^{-1}\|^2\|\widehat{\bs g}_{\mathcal D}(\bs x_t)\|^2$;
    \item $\tilde{\alpha}_{\mathrm{NeST}}(\text{MC})$, which uses a MC estimate of the expectation (with 32 samples) in \eqref{eq:nest-rearranged}, i.e., \eqref{eq:nest-mc} with $S=32$;
    \item the GIBO gradient–information rule $\tilde{\alpha}_{\mathrm{GI}}$.
\end{itemize}
We also include a random sampling (RS) baseline.
At each iteration, we \textit{minimize} the chosen acquisition over the domain using L-BFGS \citep{zhu1997algorithm} with 20 random multistarts.
We study the Griewank, Ackley, Rosenbrock, and Sphere functions in dimensions $d \in \{2,3,4,5\}$ on $[0,1]^d$. The mathematical expressions for each function can be found in Appendix \ref{app:synthetic-details}. 
For each $d$, we evaluate the Newton-step error $\varepsilon_\mathcal{D}(\bs{x})=\|\bs d(\bs x)-\widehat{\bs d}_{\mathcal D}(\bs x)\|$ at 10 randomly chosen test locations $\bs{x}$ and report the per-iteration median across 10 replicates. The GP hyperparameters are fixed across methods: they are fit once (on a separate set of samples) and then held constant. For the Griewank function, initial designs use 5/10/20/30 points for $d{=}2/3/4/5$. For all the other synthetic functions, initial designs use 5/5/10/10 points for $d{=}2/3/4/5$.

\paragraph{Results.} Figure~\ref{fig:Acq_comparison}--\ref{fig:Acq_comparison_Sphere} illustrate two main messages: (i) $\alpha_{\mathrm{NeST}}(s{=}1)$ and the plug-in/MC variants reduce Newton-step error at similar rates across all $d$ and consistently outperform $\alpha_{\mathrm{GI}}$ and RS (though the MC has more variability due to its inherent randomness); and
(ii) this advantage translates into lower error, as seen in the final error distributions across the different runs (bottom row). The gap widens with dimension, where curvature information becomes more important.
A fixed, data-agnostic weight $s_t{=}1$ appears to be a robust and inexpensive choice: it matches the plug-in and MC versions while avoiding extra computation and hyper-sensitivity. This supports our the default selected in all experiments in the main text. It would be interesting to more carefully study, either theoretically or empirically, how the tuning of $\widehat{s}_t$ impacts optimization performance; this type of analysis was outside the scope of this initial contribution. Our results further suggest that NeST-BO's gains might come from targeting curvature at all compared to delicate tuning of $\widehat{s}_t$ -- but this yet to be rigorously formalized. 

\begin{figure}[tb!]
  \centering
  \includegraphics[width=0.9\textwidth]{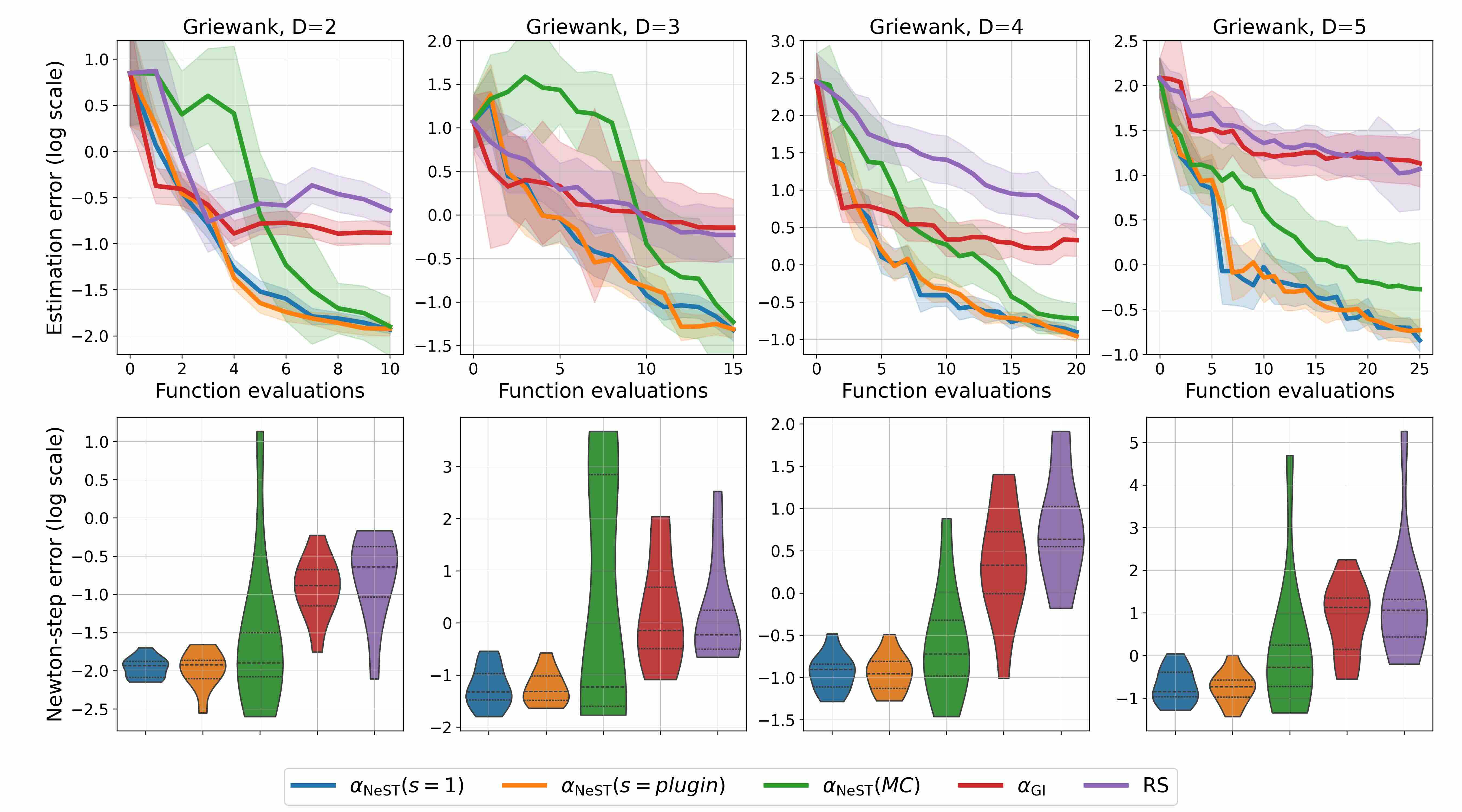}
  \caption{Empirical study of scale factor sensitivity (Griewank).
  \textbf{Top:} Median Newton-step error (log scale) versus number of function evaluations.
  \textbf{Bottom:} Distribution of Newton-step errors (log scale) at the final iteration. NeST with a fixed $s = 1$ closely tracks the plug-in and Monte Carlo (MC) sampling variants and consistently beats $\alpha_{\mathrm{GI}}$, the core acquisition underpinning the GIBO method \citep{muller2021local}, and random sampling (RS). All experiments were replicated 10 times and the shaded regions show $\pm$ one standard error.}
  \label{fig:Acq_comparison}
\end{figure}

\begin{figure}[tb!]
  \centering
  \includegraphics[width=0.9\textwidth]{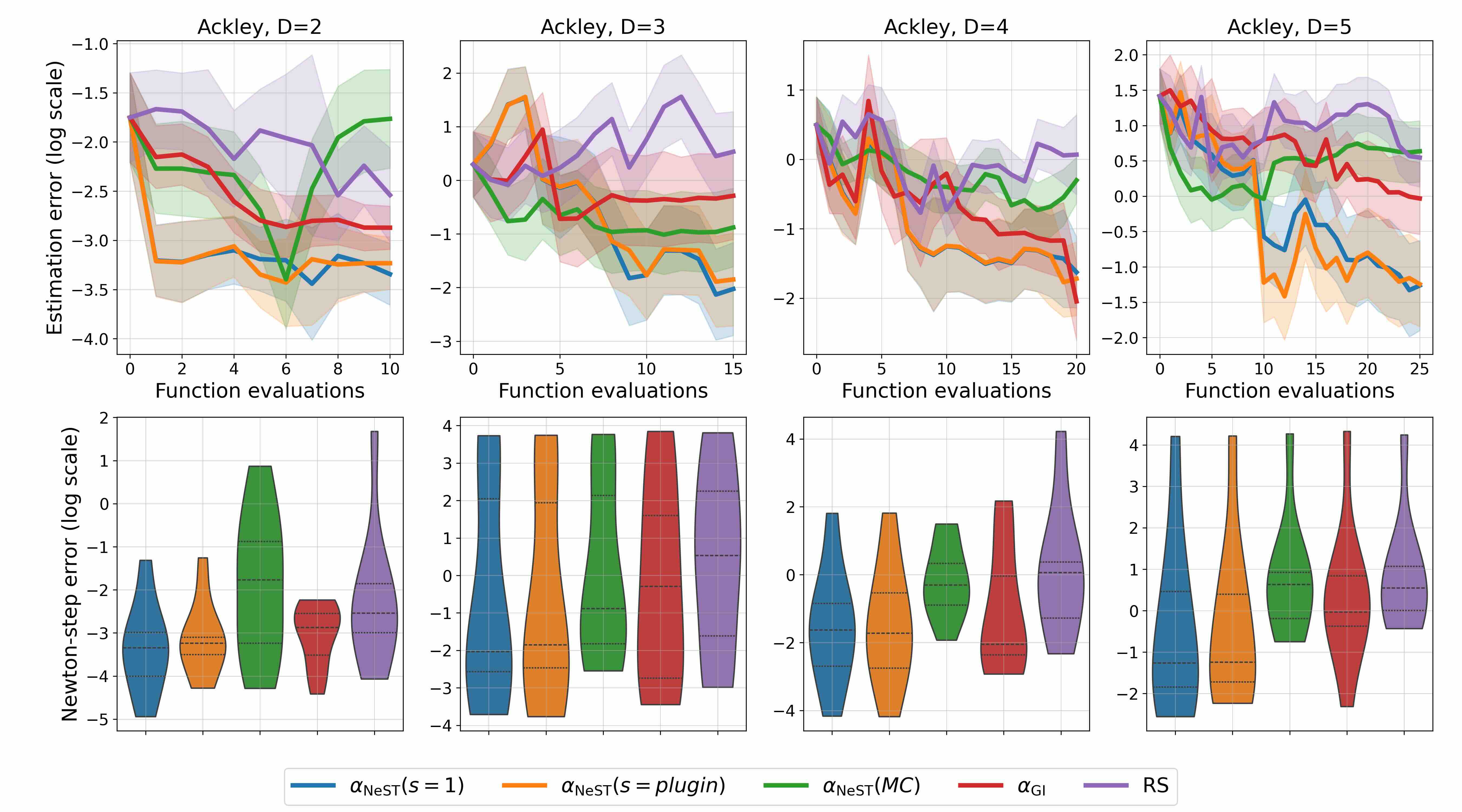}
  \caption{Empirical study of scale factor sensitivity (Ackley).
  \textbf{Top:} Median Newton-step error (log scale) versus number of function evaluations.
  \textbf{Bottom:} Distribution of Newton-step errors (log scale) at the final iteration. NeST with a fixed $s = 1$ closely tracks the plug-in and Monte Carlo (MC) sampling variants and consistently beats $\alpha_{\mathrm{GI}}$, the core acquisition underpinning the GIBO method \citep{muller2021local}, and random sampling (RS). All experiments were replicated 10 times and the shaded regions show $\pm$ one standard error.}
  \label{fig:Acq_comparison_Ackley}
\end{figure}

\begin{figure}[tb!]
  \centering
  \includegraphics[width=0.9\textwidth]{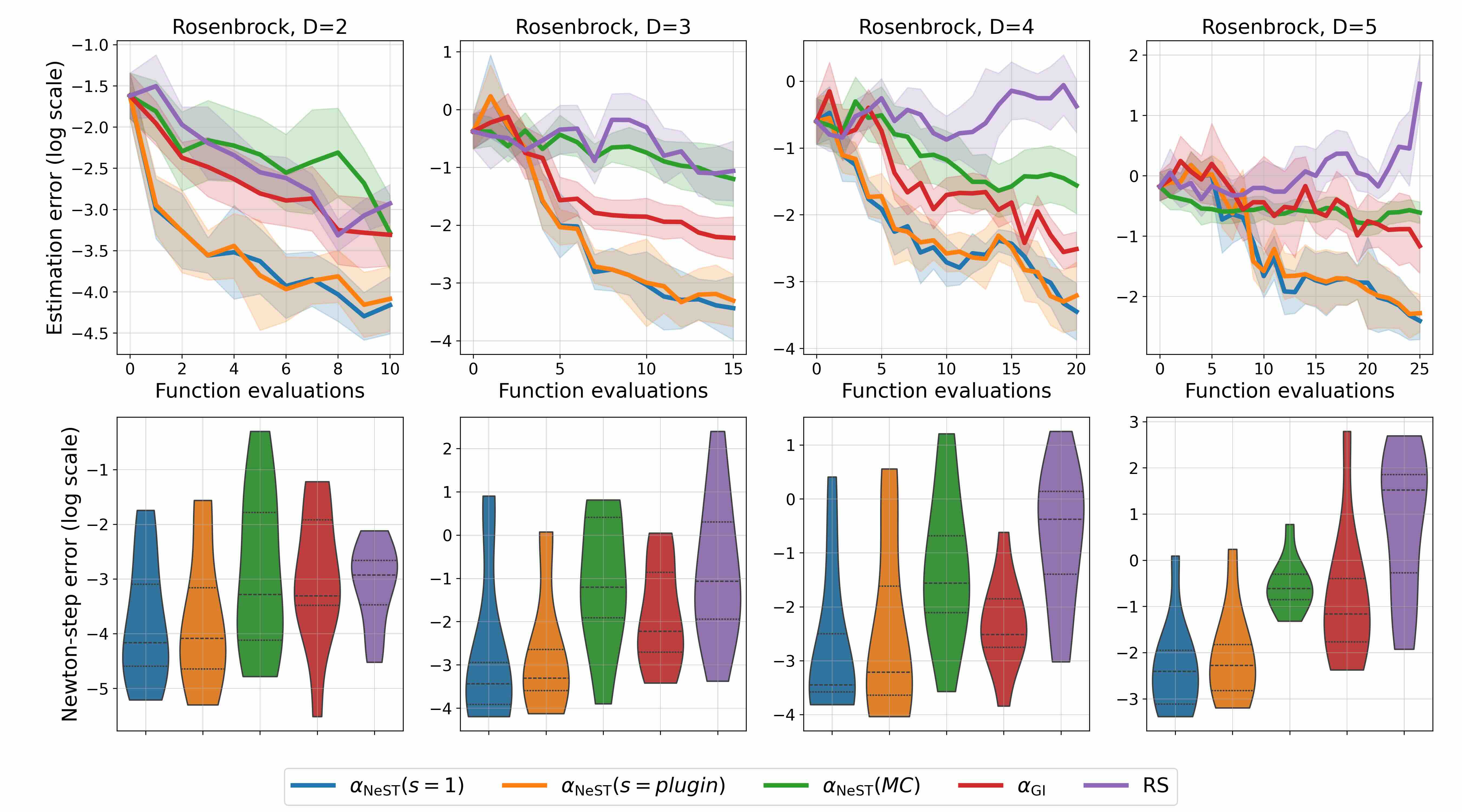}
  \caption{Empirical study of scale factor sensitivity (Rosenbrock).
  \textbf{Top:} Median Newton-step error (log scale) versus number of function evaluations.
  \textbf{Bottom:} Distribution of Newton-step errors (log scale) at the final iteration. NeST with a fixed $s = 1$ closely tracks the plug-in and Monte Carlo (MC) sampling variants and consistently beats $\alpha_{\mathrm{GI}}$, the core acquisition underpinning the GIBO method \citep{muller2021local}, and random sampling (RS). All experiments were replicated 10 times and the shaded regions show $\pm$ one standard error.}
  \label{fig:Acq_comparison_Rosenbrock}
\end{figure}

\begin{figure}[tb!]
  \centering
  \includegraphics[width=0.9\textwidth]{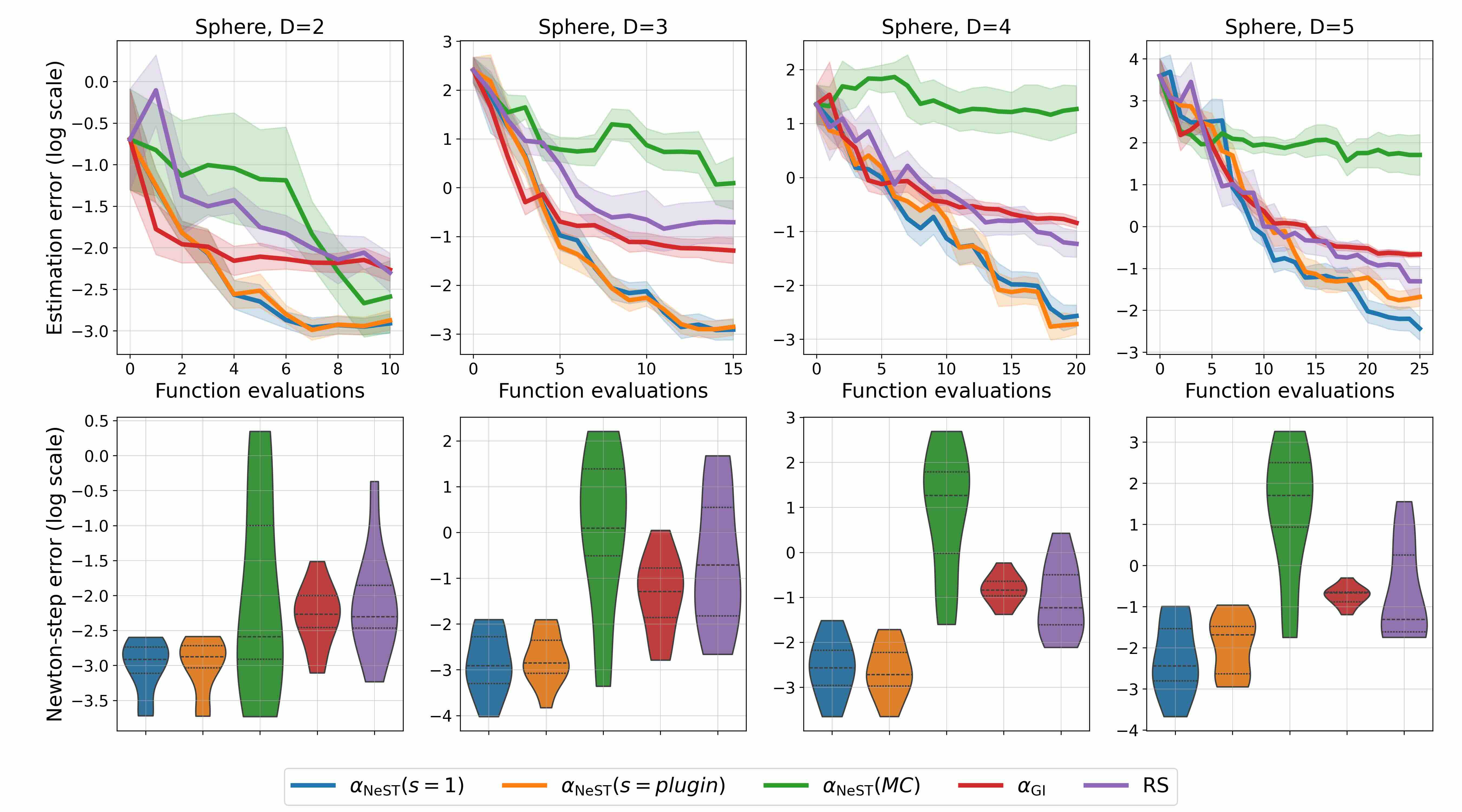}
  \caption{Empirical study of scale factor sensitivity (Sphere).
  \textbf{Top:} Median Newton-step error (log scale) versus number of function evaluations.
  \textbf{Bottom:} Distribution of Newton-step errors (log scale) at the final iteration. NeST with a fixed $s = 1$ closely tracks the plug-in and Monte Carlo (MC) sampling variants and consistently beats $\alpha_{\mathrm{GI}}$, the core acquisition underpinning the GIBO method \citep{muller2021local}, and random sampling (RS). All experiments were replicated 10 times and the shaded regions show $\pm$ one standard error.}
  \label{fig:Acq_comparison_Sphere}
\end{figure}

\subsection{Impact of scale factor on overall optimization performance}
\label{app:plugin-vs-s1}

In the main experiments we default to a fixed scale $\widehat{s}_t=1$, mainly to avoid introducing an additional moving part. As an additional sanity check, we compare this default to the ``plug-in'' alternative that performed similarly in our previous tests (Appendix \ref{app:comparison-scale} and can be more easily justified in that it replaces the expected lookahead version with its current estimate. This plug-in choice is also appealing because it is ``free'' to compute once $\widehat{\bs g}_{\mathcal D}(\bs x_t)$ and $\widehat{\bs H}_{\mathcal D}(\bs x_t)$ are available. 

Figures~\ref{fig:Griewank_plugin_s=1} and \ref{fig:Rover_plugin_s=1}, respectively, show full optimization runs on a synthetic $20$-D Griewank function and a $60$-D rover trajectory benchmark (each over $10$ random seeds, with the same evaluation budget and experimental setup as in the main text). Overall, the plug-in and $\widehat{s}_t=1$ variants behave very similarly across both problems, and both substantially outperform the Sobol baseline. In particular, these results support the choice $\widehat{s}_t=1$ as a simple default that does not appear to sacrifice performance in the regimes we tested.

\begin{figure}[tb!]
  \centering
  \includegraphics[width=0.9\textwidth]{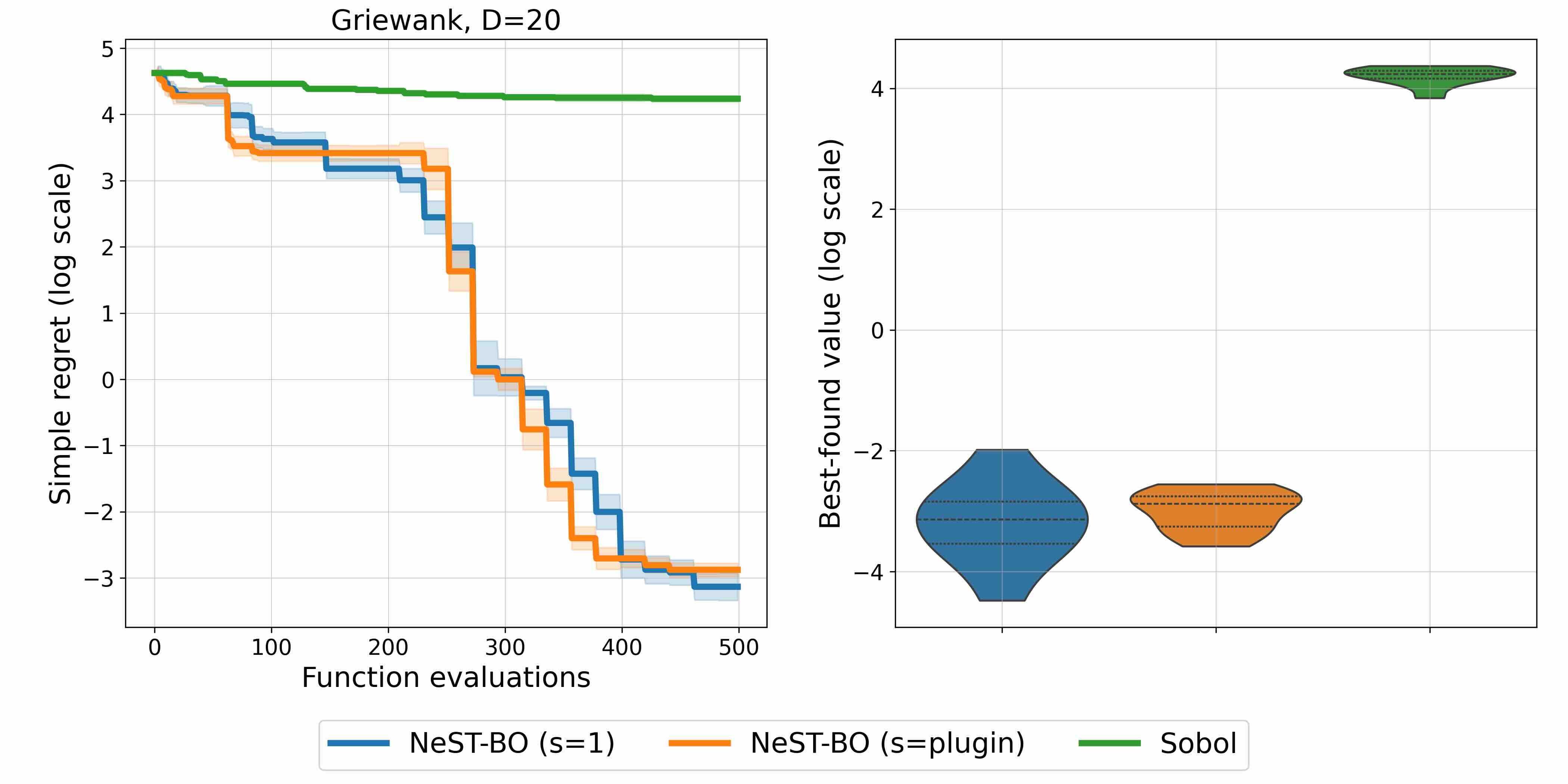}
  \caption{Scale-factor sensitivity on the $20$-D Griewank function.
  \textbf{Left:} Median simple regret (log scale) over $10$ runs, with shaded bands showing variability across runs.
  \textbf{Right:} Distribution (violin plot) of the simple regret achieved by each method at the end of the budget, across the same runs. Lower is better.}
  \label{fig:Griewank_plugin_s=1}
\end{figure}

\begin{figure}[tb!]
  \centering
  \includegraphics[width=0.9\textwidth]{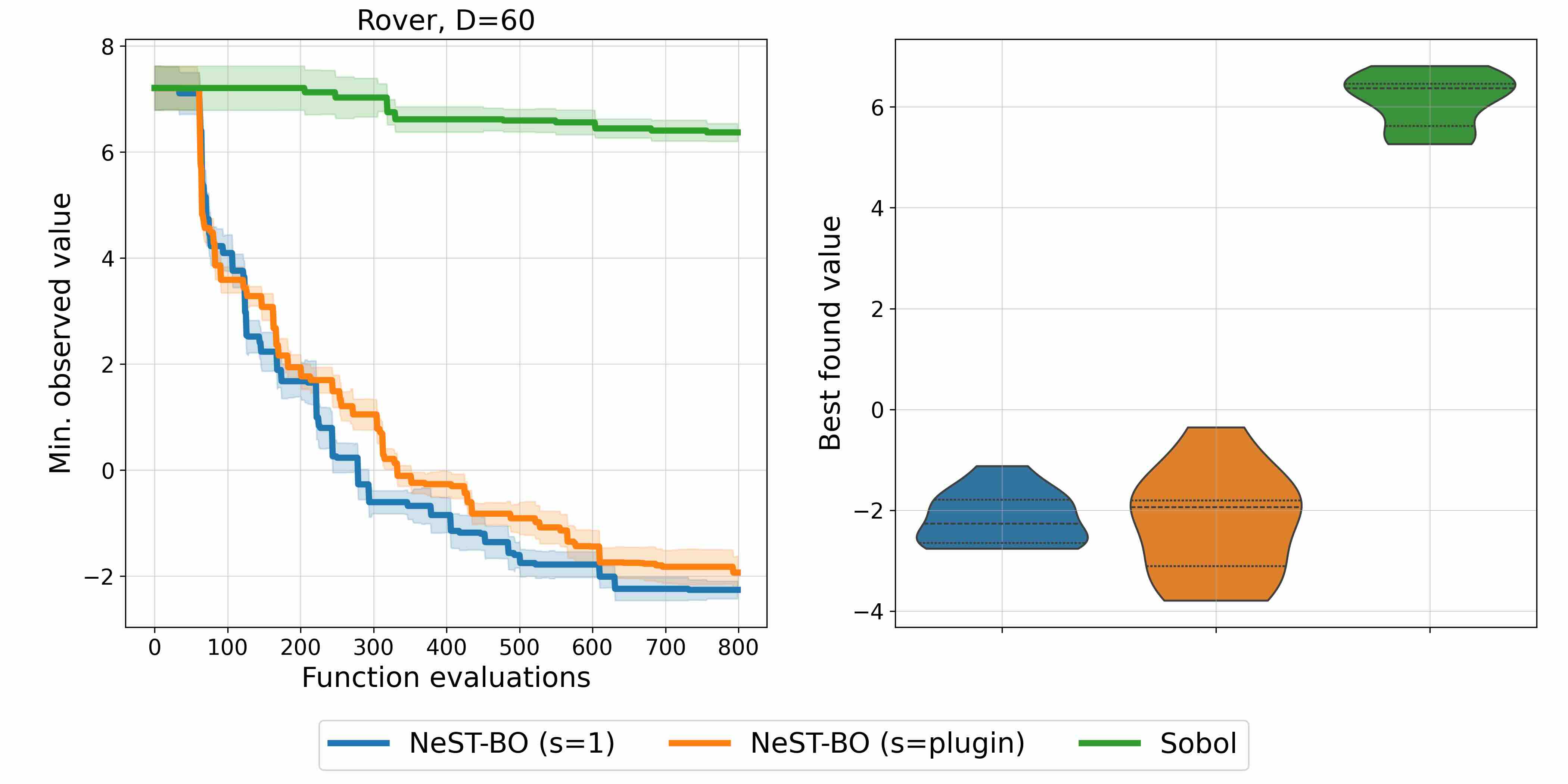}
  \caption{Scale-factor sensitivity on the $60$-D rover trajectory problem.
  \textbf{Left:} Median best-found objective value over $10$ runs, with shaded bands showing variability across runs.
  \textbf{Right:} Distribution (violin plot) of the final best objective value achieved by each method at end of budget, across the same runs. Lower is better.}
  \label{fig:Rover_plugin_s=1}
\end{figure}

\section{PRACTICAL IMPLEMENTATION OF NEST-BO}
\label{app:practical-nestbo}

This appendix describes the NeST-BO variant used in our experiments (Algorithm~\ref{alg:NeSTBO-practical}). The overall structure matches Algorithm~\ref{alg:NeSTBO} in the main body, with two implementation choices that are worth making explicit: (i) we perform greedy (sequential) selection of $M$ points instead of solving a joint $Md$-dimensional batch problem, and (ii) we only take a Newton step when the posterior-mean Hessian at the current iterate is \textit{numerically} positive definite; otherwise we fall back to a length-scale-normalized gradient step following the GIBO procedure \citep[Appendix~A.4]{muller2021local}. 

Furthermore, at the start of each outer iteration, we fit GP hyperparameters to the current dataset via a user-specified routine \textsc{FitGP}. In our experiments this is the standard marginal likelihood estimator (MLE) with restarts; when hyperparameter priors are available, a MAP variant can be used with no change to the algorithm. During the subsequent greedy inner loop, we keep these hyperparameters fixed and update the posterior as new observations arrive.
If we take a Newton step, we use an Armijo backtracking line search on the GP posterior mean $\mu_{\mathcal D}$ (standard defaults; see, e.g., \citep[Chapter~1]{bertsekas2016nonlinear}). If we fall back to a gradient step, we use the length-scale-normalized update and step-size rule from GIBO \citep{muller2021local}; we do not reproduce the full derivation here, but Algorithm~\ref{alg:NeSTBO-practical} makes clear where it enters. Lastly, $\Pi_\mathcal{X}$ denotes the projection operator onto the feasible set $\mathcal{X}$, so that we perform simple projection to ensure feasibility at every iteration (again this could be replaced with more sophisticated feasibility-preserving steps in the future). 

\begin{algorithm}[tb!]
\caption{Practical implementation of NeST-BO}
\label{alg:NeSTBO-practical}
\textbf{Inputs:} initial iterate $\bs{x}_0\in\mathcal{X}\subseteq\mathbb{R}^d$; initial dataset $\mathcal{D}_0$; outer iterations $T$; greedy inner selections per iteration $M$; scale rule $\widehat{s}_t>0$ (we use $\widehat{s}_t = 1$ unless stated otherwise); GP hyperparameter-fitting routine \textsc{FitGP}; Armijo line-search routine \textsc{Armijo}; and GIBO gradient-normalization routine \textsc{GIBOstep}.
\begin{algorithmic}[1]
\For{$t=0,\dots,T-1$}
  \State \textbf{Fit GP:} $(\theta_t,\text{posterior}) \leftarrow \textsc{FitGP}(\mathcal{D}_t)$ \Comment{hyperparameters $\theta_t$ via MLE/MAP}
  \State Set $\mathcal{D}_t^{(0)} \leftarrow \mathcal{D}_t$
  \For{$m=1,\dots,M$} \Comment{greedy (sequential) NeST selection}
    \State $\bs{z}_{t,m} \in \argmin_{\bs{z}\in\mathcal{X}} \ \widehat{\alpha}_{\mathrm{NeST}}\!\left(\bs{z}\mid \bs{x}_t,\mathcal{D}_t^{(m-1)},\widehat{s}_t;\theta_t\right)$
    \State Observe $y_{t,m}=f(\bs{z}_{t,m})+\varepsilon_{t,m}$ and set $\mathcal{D}_t^{(m)} \leftarrow \mathcal{D}_t^{(m-1)}\cup(\bs{z}_{t,m},y_{t,m})$
    \State Update GP posterior conditioned on $\mathcal{D}_t^{(m)}$ (keeping $\theta_t$ fixed)
  \EndFor
  \State Set $\mathcal{D}_{t+1}\leftarrow \mathcal{D}_t^{(M)}$
  \State Compute posterior-mean derivatives at $\bs{x}_t$:
  \State \hspace{1.5em}$\widehat{\bs g}\leftarrow \widehat{\bs g}_{\mathcal{D}_{t+1}}(\bs{x}_t)$,\quad
  $\widehat{\bs H}\leftarrow \widehat{\bs H}_{\mathcal{D}_{t+1}}(\bs{x}_t)$
  \State Symmetrize: $\widehat{\bs H}\leftarrow (\widehat{\bs H}+\widehat{\bs H}^\top)/2$
  \If{\textsc{IsPD}$(\widehat{\bs H};\tau)$} \Comment{e.g., Cholesky succeeds with tolerance $\tau>0$}
    \State \textbf{Newton direction:} solve $\widehat{\bs H}\,\bs d_t = \widehat{\bs g}$ for $\bs d_t$
    \State \textbf{Line search:} $\gamma_t \leftarrow \textsc{Armijo}\!\left(\mu_{\mathcal{D}_{t+1}},\bs{x}_t,-\bs d_t\right)$
    \State $\bs{x}_{t+1} \leftarrow \Pi_{\mathcal{X}}\!\left(\bs{x}_t-\gamma_t\,\bs d_t\right)$
  \Else
    \State \textbf{Gradient fallback:} $(\bs d_t,\gamma_t)\leftarrow \textsc{GIBOstep}\!\left(\widehat{\bs g},\theta_t\right)$
    \State \hspace{1.5em}\Comment{$\theta_t$ provides GP lengthscales used to normalize gradient step \citep{muller2021local}}
    \State $\bs{x}_{t+1} \leftarrow \Pi_{\mathcal{X}}\!\left(\bs{x}_t-\gamma_t\,\bs d_t\right)$
  \EndIf
\EndFor
\end{algorithmic}
\end{algorithm}

\section{THEORETICAL CONVERGENCE PROOFS}
\label{app:theory-conv-proof}

We continue to use the notation summarized in Appendix~\ref{app:gp-expressions} and let Assumptions~1--2 hold from Appendix~\ref{app:proof-newton}. We first provide a complete formal statement and proof of Theorem~\ref{thm:vpc} and then show how NeST-BO inherits local quadratic convergence guarantees of inexact Newton's method. 

\subsection{Complete statement and proof of Theorem~\ref{thm:vpc}}
\label{app:proof-vpc}

This section provides a complete statement and proof of Theorem~\ref{thm:vpc}, showing that the vanishing power-function condition (VPC) holds under NeST sampling in both noiseless and noisy settings. The argument combines the following steps. 
\begin{enumerate}
    \item First, we show that NeST acquisition at $\bs{x}_t$ is upper bounded by an origin-centered error function depending only on the candidate design via monotonicity under conditioning and stationarity (Lemma~\ref{lem:reduce-origin}). This reduces the problem to constructing designs that make this error arbitrarily small. 
    \item Second, we establish that derivative evaluation is a bounded linear functional on $\mathcal{H}$ with an explicit Riesz representer (Lemma~\ref{lem:linear-representer}), and that the GP posterior standard deviation for such a functional equals the optimal worst-case error among all linear rules based on function evaluations at the design points (Lemmas~\ref{lem:worst-case-error-linear}--\ref{lem:posterior-variance-linear-functional}).
    \item Third, we construct explicit centered-difference linear rules supported on a symmetric stencil of radius $h$, and use Taylor expansions with remainder plus RKHS derivative bounds to show the associated worst-case functional errors are $O(h^2)$. By the optimal worst-case-error identity, this yields $O(h^4)$ bounds on the posterior variances (power functions) for all gradient and Hessian components (Lemma~\ref{lem:pf-bound-cfd}).
    \item Lastly, we combine the reduction and the $O(h^4)$ bounds to prove the noiseless VPC result. For noisy observations, we use $m$ replicates per stencil location and show that conditioning on all replicates is equivalent to conditioning on the averaged observations with noise variance $\sigma^2/m$, which adds an explicit $O(\sigma^2/m)$ contribution that can be driven to zero by increasing $m$.
\end{enumerate}

We proceed by establishing the lemmas in this order and then formally stating and proving Theorem~\ref{thm:vpc}.

\begin{lemma}
\label{lem:reduce-origin}
By Assumption \ref{asmp:kernel}, the kernel is stationary $k(\bs{x},\bs{x}')=\varphi(\bs{x}-\bs{x}')$. Fix any dataset $\mathcal{D}_t$ and any $\widehat{s}_t>0$.
Define the error function
\begin{align} \label{eq:error-function}
E_{d,k,s,\sigma}(b)
= 
\inf_{\bs{Z}\in \mathcal{X}^b}
\Big(
\pi^{\bs{g}}_{\bs{Z}}(\bs{0}) + s\,\pi^{\bs{H}}_{\bs{Z}}(\bs{0})
\Big),
\end{align}
where the power functions are evaluated at $\bs{0}$ and conditioned only on potentially noisy observations (with noise variance $\sigma^2$) at $\bs{Z}$. 
Then, for any batch size $b_t$,
\[
\inf_{\bs{Z}\in\mathbb{R}^{b_t\times d}}
\Phi_{\mathcal{D}_t\cup\bs{Z}}(\bs{x}_t)
\leq
E_{d,k,\widehat{s}_t,\sigma}(b_t),
\]
where $\Phi_{\mathcal{D}_t\cup \bs{Z}}(\bs{x}_t) = \pi^{\bs{g}}_{\mathcal{D}_t\cup \bs{Z}}(\bs{x}_t) + \widehat{s}_t\, \pi^{\bs{H}}_{\mathcal{D}_t\cup \bs{Z}}(\bs{x}_t)$ is short for the approximate NeST acquisition function \eqref{eq:nest-approx}.
\end{lemma}

\begin{proof}
This result builds directly on two observations from \citep{wu2023behavior}. The first is monotonicity under conditioning, i.e., for any GP, conditioning on a \emph{larger} dataset cannot increase posterior variances. Here, since $\mathcal{D}_t\cup \bs{Z}$ contains $\bs{Z}$, we have for every $\bs{x}\in\mathcal{X}$
\[
\pi^{\bs{g}}_{\mathcal{D}_t\cup \bs{Z}}(\bs{x})
\leq
\pi^{\bs{g}}_{\bs{Z}}(\bs{x}),
\qquad
\pi^{\bs{H}}_{\mathcal{D}_t\cup \bs{Z}}(\bs{x})
\leq
\pi^{\bs{H}}_{\bs{Z}}(\bs{x}),
\]
hence $\Phi_{\mathcal{D}_t\cup\bs{Z}}(\bs{x})\leq \pi^{\bs{g}}_{\bs{Z}}(\bs{x})+\widehat{s}_t \pi^{\bs{H}}_{\bs{Z}}(\bs{x})$.

The second is that, due to stationarity of the kernel, we can always translate our data to the origin. Specifically, let $\bs{u}_t = -\bs{x}_t$ and define the translated design $\widetilde{\bs{Z}}=\bs{Z}+\bs{1}\bs{u}_t^\top$ (shift every candidate location by $-\bs{x}_t$ where $\bs{1}$ is the all ones vector). Stationarity implies that the joint law of $\{f(\bs{x}_t),f(\bs{Z})\}$ is identical to that of $\{f(\bs{0}),f(\widetilde{\bs{Z}})\}$ after translation; consequently,
\[
\pi^{\bs{g}}_{\bs{Z}}(\bs{x}_t)=\pi^{\bs{g}}_{\widetilde{\bs{Z}}}(\bs{0}),
\qquad
\pi^{\bs{H}}_{\bs{Z}}(\bs{x}_t)=\pi^{\bs{H}}_{\widetilde{\bs{Z}}}(\bs{0}).
\]
Therefore,
\[
\inf_{\bs{Z}} \Phi_{\mathcal{D}_t\cup\bs{Z}}(\bs{x}_t)
\leq
\inf_{\widetilde{\bs{Z}}}
\big(
\pi^{\bs{g}}_{\widetilde{\bs{Z}}}(\bs{0}) + \widehat{s}_t \pi^{\bs{H}}_{\widetilde{\bs{Z}}}(\bs{0})
\big)
=
E_{d,k,\widehat{s}_t,\sigma}(b_t),
\]
which is the claim and thus concludes the proof.
\end{proof}

\begin{lemma}\label{lem:linear-representer}
Assume $\mathcal{X}\subset \mathbb{R}^d$ is open and $k:\mathcal{X}\times\mathcal{X}\to\mathbb{R}$ is $m$-times continuously differentiable, and let $\alpha$ be a multi-index with $|\alpha|\le m$. Define
$
L_{\alpha,\bs{x}}(f) = \partial^\alpha f(\bs{x})
$
for all $f\in \mathcal{H}_k$ and $\bs{x}\in \mathcal{X}$. Then $L_{\alpha,\bs{x}}$ is a bounded linear functional on $\mathcal{H}$, and its Riesz representer is
\[
r_{\alpha,\bs{x}}=\partial^{0,\alpha} k(\cdot,\bs{x})\in \mathcal{H}_k,
\]
such that for all $f\in \mathcal{H}$,
\[
\partial^\alpha f(\bs{x})=\langle f,\partial^{0,\alpha}k(\cdot,\bs{x})\rangle_{\mathcal{H}}.
\]
Moreover, $\|r_{\alpha,\bs{x}}\|_{\mathcal{H}_k}^2=\partial^{\alpha,\alpha}k(\bs{x},\bs{x})$ and
$
|L_{\alpha,\bs{x}}(f)| \le \|f\|_{\mathcal{H}}\,\|r_{\alpha,\bs{x}}\|_{\mathcal{H}}.
$
\end{lemma}

\begin{proof}
Linearity of $L_{\alpha,\bs{x}}$ is immediate from linearity of partial derivatives.

By the differentiability assumption on $k$, the derivative evaluation functional $A(f)=\partial^\alpha f(\bs{x})$
is bounded on $\mathcal{H}_k$ and its Riesz representer is given by the kernel derivative $f_A=\partial^{0,\alpha}k(\cdot,\bs{x})\in \mathcal{H}$
(see, e.g., Example 4.5 in \citep{kanagawa2025gaussian}). Therefore, for all $f\in \mathcal{H}$,
\[
\partial^\alpha f(\bs{x}) = A(f)=\langle f,f_A\rangle_{\mathcal{H}}
= \langle f,\partial^{0,\alpha}k(\cdot,\bs{x})\rangle_{\mathcal{H}}.
\]
Boundedness follows from the Cauchy-Schwarz inequality:
\[
|\partial^\alpha f(\bs{x})|
=|\langle f,r_{\alpha,\bs{x}}\rangle_{\mathcal{H}}|
\le \|f\|_{\mathcal{H}}\,\|r_{\alpha,\bs{x}}\|_{\mathcal{H}}.
\]
Finally, the same reference gives $\|r_{\alpha,\bs{x}}\|_{\mathcal{H}}^2=\partial^{\alpha,\alpha}k(\bs{x},\bs{x})$.
\end{proof}

\begin{lemma} \label{lem:worst-case-error-linear}
    Fix a candidate design $\bs{Z} = (\bs{z}_1, \ldots, \bs{z}_b) \in \mathbb{R}^{b \times d}$ with distinct points and consider any weights $\bs{w} = (w_1, \ldots, w_b) \in \mathbb{R}^b$. Define the linear estimator:
    \begin{align*}
        \widehat{L}^{\bs{w}}_{\alpha, \bs{x}}(f) = \sum_{j=1}^b w_j f(\bs{z}_j).
    \end{align*}
    Let $e^{\bs{w}}_{\alpha, \bs{x}}(\cdot) = r_{\alpha, \bs{x}}(\cdot) - \sum_{j=1}^b w_j k(\cdot, \bs{z}_j) \in \mathcal{H}$.
    Then, for every $f \in \mathcal{H}$, we have:
    \begin{align*}
        L_{\alpha, \bs{x}}(f) - \widehat{L}^{\bs{w}}_{\alpha, \bs{x}}(f) = \langle f, e^{\bs{w}}_{\alpha, \bs{x}} \rangle_\mathcal{H}, 
    \end{align*}
    and hence the worst-case error over the unit ball is:
    \begin{align*}
        \sup_{ f \in \mathcal{H} : \| f \|_H \leq 1 } \left|  L_{\alpha, \bs{x}}(f) - \widehat{L}^{\bs{w}}_{\alpha, \bs{x}}(f) \right| = \| e^{\bs{w}}_{\alpha, \bs{x}} \|_\mathcal{H}
    \end{align*}
\end{lemma}

\begin{proof}
    We start by using the reproducing property to derive:
    \begin{align*}
        \widehat{L}^{\bs{w}}_{\alpha, \bs{x}}(f) = \sum_{j=1}^b w_j f(\bs{z}_j) = \sum_{j=1}^b w_j \langle f, k(\cdot, \bs{z}_j) \rangle_\mathcal{H} = \left\langle f, \sum_{j=1}^b w_j k(\cdot, \bs{z}_j) \right\rangle.
    \end{align*}
    Also, by Lemma \ref{lem:linear-representer}, we have $L_{\alpha, \bs{x}}(f) = \langle f, r_{\alpha, \bs{x}} \rangle$. Subtracting these two gives:
    \begin{align*}
        L_{\alpha, \bs{x}}(f) - \widehat{L}^{\bs{w}}_{\alpha, \bs{x}}(f) = \left\langle f, r_{\alpha, \bs{x}} -  \sum_{j=1}^b w_j k(\cdot, \bs{z}_j) \right\rangle = \langle f, e^{\bs{w}}_{\alpha, \bs{x}} \rangle.
    \end{align*}
    Taking the supremum over $\| f \|_\mathcal{H} \leq 1$ yields the RKHS norm by duality (the Cauchy-Schwarz inequality gives the $\leq$ and equality is achieved by $f = e / \| e \|$ when $e\neq 0$). 
\end{proof}

\begin{lemma}\label{lem:posterior-variance-linear-functional}
Consider noiseless GP interpolation with prior $f\sim \mathcal{GP}(0,k)$ on $\mathcal{X}\subset\mathbb{R}^d$, and observations $\{f(\bs{z}_j)\}_{j=1}^b$ at a design $\bs{Z}=(\bs{z}_1,\dots,\bs{z}_b)\in\mathbb{R}^{b\times d}$. Let $K = k(\bs{Z},\bs{Z})$ and assume $K$ is invertible (or interpret $K^{-1}$ as the Moore--Penrose pseudoinverse). Let $k_{\bs{Z}}$ denote the posterior covariance kernel under noiseless conditioning:
\[
k_{\bs{Z}}(\bs{x},\bs{x}')=k(\bs{x},\bs{x}')-k(\bs{x},\bs{Z})K^{-1}k(\bs{Z},\bs{x}').
\]
Fix a multi-index $\alpha$ such that the derivative functional $L_{\alpha,\bs{x}}(f)=\partial^\alpha f(\bs{x})$ is bounded on $\mathcal{H}_k$.

\begin{enumerate}
\item The posterior variance of the functional $L_{\alpha,\bs{x}}(f)$ is
\[
\mathrm{Var}\!\left(L_{\alpha,\bs{x}}(f)\mid f(\bs{Z})\right)
=
L^{(1)}_{\alpha,\bs{x}}\,L^{(2)}_{\alpha,\bs{x}}\,k_{\bs{Z}}(\bs{x},\bs{x}),
\]
where $L^{(1)}$ applies $L$ to the first argument of the kernel and $L^{(2)}$ applies it to the second.
\item The posterior standard deviation equals the optimal worst-case error among linear rules based on $\{f(\bs{z}_j)\}_{j=1}^b$:
\[
\sqrt{\mathrm{Var}\!\left(L_{\alpha,\bs{x}}(f) \mid f(\bs{Z})\right)}
=
\inf_{\bs{w}\in\mathbb{R}^b}\;
\sup_{f \in \mathcal{H} : \| f \|_H \leq 1}
\left|\,L_{\alpha,\bs{x}}(f)-\sum_{j=1}^b w_j f(\bs{z}_j)\right|.
\]
\end{enumerate}
\end{lemma}

\begin{proof}
Let $u=L_{\alpha,\bs{x}}(f)$. Since $(f(\bs{Z}),u)$ is jointly Gaussian, we have the standard conditional-variance formula
\[
\mathrm{Var}(u \mid f(\bs{Z}))=\mathrm{Var}(u)- \bs{c}^\top K^{-1}\bs{c},
\]
where $\bs{c} \in \mathbb{R}^b$ has entries $c_j=\mathrm{Cov}(f(\bs{z}_j),u)$.
By linearity of covariance for GPs and differentiability of $k$,
\[
\mathrm{Var}(u)=L^{(1)}_{\alpha,\bs{x}}L^{(2)}_{\alpha,\bs{x}}k(\bs{x},\bs{x}),
\qquad
c_j=L^{(2)}_{\alpha,\bs{x}}k(\bs{z}_j,\bs{x}).
\]
Expanding $L^{(1)}_{\alpha,\bs{x}}L^{(2)}_{\alpha,\bs{x}}k_{\bs{Z}}(\bs{x},\bs{x})$ using the definition of $k_{\bs{Z}}$ yields exactly
$\mathrm{Var}(u)-\bs{c}^\top K^{-1}\bs{c}$, proving point 1.

For point 2, applying Lemma \ref{lem:worst-case-error-linear}, we see that the right-hand side in the expression equals $\inf_{\bs{w}}\|e^{\bs{w}}_{\alpha,\bs{x}}\|_{\mathcal{H}}$.

It remains to identify $\mathrm{Var}(u\mid f(\bs{Z}))$ as $\inf_{\bs{w}}\|e^{\bs{w}}_{\alpha,\bs{x}}\|_{\mathcal{H}}^2$.
For any $\bs{w}$, the random variable $u-\widehat{L}^{\bs{w}}_{\alpha,\bs{x}}(f)$ is Gaussian and has variance
\[
\mathrm{Var}\!\left(u-\widehat{L}^{\bs{w}}_{\alpha,\bs{x}}(f)\right)
=
\left\|r_{\alpha,\bs{x}}-\sum_{j=1}^b w_j k(\cdot,\bs{z}_j)\right\|_{\mathcal{H}}^2
=
\|e^{\bs{w}}_{\alpha,\bs{x}}\|_{\mathcal{H}}^2,
\]
by expanding the squared RKHS norm and using
$\langle k(\cdot,\bs{z}_i),k(\cdot,\bs{z}_j)\rangle_{\mathcal{H}}=k(\bs{z}_i,\bs{z}_j)$,
$\langle r_{\alpha,\bs{x}},k(\cdot,\bs{z}_j)\rangle_{\mathcal{H}}=L_{\alpha,\bs{x}}^{(2)}k(\bs{z}_j,\bs{x})$,
and $\langle r_{\alpha,\bs{x}},r_{\alpha,\bs{x}}\rangle_{\mathcal{H}}=L_{\alpha,\bs{x}}^{(1)}L_{\alpha,\bs{x}}^{(2)}k(\bs{x},\bs{x})$.

Finally, for jointly Gaussian variables, the conditional mean $\mathbb{E}[u \mid f(\bs{Z})]$ is the minimum mean squared error predictor of $u$ measurable with respect to $f(\bs{Z})$, and it is linear in $f(\bs{Z})$. Therefore,
\[
\mathrm{Var}(u\mid f(\bs{Z}))
=
\min_{\bs{w}\in\mathbb{R}^b}\mathrm{Var}\!\left(u-\widehat{L}^{\bs{w}}_{\alpha,\bs{x}}(f)\right)
=
\min_{\bs{w}\in\mathbb{R}^b}\|e^{\bs{w}}_{\alpha,\bs{x}}\|_{\mathcal{H}_k}^2.
\]
Taking square roots yields the expression in point 2, which ends the proof.
\end{proof}

\begin{lemma}
\label{lem:pf-bound-cfd}
Assume $k:\mathcal{X}\times \mathcal{X}\to\mathbb{R}$ satisfies Assumption~\ref{asmp:kernel} and is sufficiently smooth so that, for every multi-index $\alpha$ with $|\alpha|\le 4$, the derivative evaluation functional
$L_{\alpha,\bs{x}}(f)=\partial^\alpha f(\bs{x})$ is bounded on $\mathcal{H}$ with Riesz representer
$r_{\alpha,\bs{x}}=\partial^\alpha_{\bs{x}}k(\cdot,\bs{x})\in\mathcal{H}$ (Lemma~\ref{lem:linear-representer}).
Define the uniform derivative-evaluation constants
\begin{align}
\kappa_{3} & = \max_{1\le i\le d}\ \sup_{\bs{x}\in\mathcal{X}} \big\| \partial^3_{x_i} k(\cdot,\bs{x})\big\|_{\mathcal{H}}, \\
\kappa_{4} & = \max_{|\alpha|=4}\ \sup_{\bs{x}\in\mathcal{X}} \big\| \partial^\alpha_{\bs{x}} k(\cdot,\bs{x})\big\|_{\mathcal{H}}.
\end{align}
Fix $\bs{x}\in\mathcal{X}$ and a radius $h>0$ small enough that all stencil points below lie in $\mathcal{X}$, and let
\begin{align}
\mathcal{Z}_h(\bs{x})
=
\{\bs{x}\}
 \cup
\{\bs{x}\pm h\bs{e}_i : 1\le i\le d\}
\cup
\{ \bs{x}\pm h(\bs{e}_i+\bs{e}_j): 1\le i<j\le d \},
\end{align}
where $\{\bs{e}_i\}_{i=1}^d$ are the standard basis vectors.
Consider noiseless GP regression with prior $f \sim \mathcal{GP}(0,k)$ and observations $f(\mathcal{Z}_h(\bs{x}))$.
Then, there exist finite constants $C_{g,k}$ and $C_{H,k}$ (depending only on $k$ and $d$ through $\kappa_3,\kappa_4$) such that, for all sufficiently small $h$,
\begin{align}
\pi^{\bs{g}}_{\mathcal{Z}_h(\bs{x})}(\bs{x}) \le C_{g,k}\, h^4,
\qquad
\pi^{\bs{H}}_{\mathcal{Z}_h(\bs{x})}(\bs{x}) \le C_{H,k}\, h^4.
\end{align}
\end{lemma}

\begin{proof}
The proof proceeds in four steps: (i) construction of Taylor expansions under centered differences with explicit remainders,
(ii) showing how we can build centered difference rules using only 
 $\mathcal{Z}_h(\bs{x})$,
(iii) bounding the remainder terms via RKHS derivative-evaluation constants,
(iv) taking a worst-case supremum over $\{f\in\mathcal{H}:\|f\|_{\mathcal{H}}\le 1\}$ and invoking the optimal worst-case error identity shown in Lemma \ref{lem:posterior-variance-linear-functional}. 

\paragraph{Step (i): Taylor series remainders under centered differences.} 
Fix any direction $\bs{u}\in\mathbb{R}^d$ and define the 1D restriction $\varphi(t) =f(\bs{x}+t\bs{u})$.
Applying Taylor's theorem, e.g., \citep{firey1960remainder} with remainder to $\varphi$ at $t=0$ gives (for some $\xi_+ \in (0,h)$ and $\xi_- \in (-h,0)$)
the standard centered-difference remainder identities are:
\begin{align}
\frac{\varphi(h)-\varphi(-h)}{2h} - \varphi'(0)
&= \frac{h^2}{12}\Big(\varphi^{(3)}(\xi_+) + \varphi^{(3)}(\xi_-)\Big), \label{eq:cd1-rem}\\
\frac{\varphi(h)-2\varphi(0)+\varphi(-h)}{h^2} - \varphi''(0)
&= \frac{h^2}{24}\Big(\varphi^{(4)}(\xi_+) + \varphi^{(4)}(\xi_-)\Big). \label{eq:cd2-rem}
\end{align}
From \eqref{eq:cd1-rem} and \eqref{eq:cd2-rem}, we obtain the bounds
\begin{align}
\left|\frac{\varphi(h)-\varphi(-h)}{2h} - \varphi'(0)\right|
&\le \frac{h^2}{6}\ \sup_{|t|\le h}\ |\varphi^{(3)}(t)|, \label{eq:cd1-bound}\\
\left|\frac{\varphi(h)-2\varphi(0)+\varphi(-h)}{h^2} - \varphi''(0)\right|
&\le \frac{h^2}{12}\ \sup_{|t|\le h}\ |\varphi^{(4)}(t)|. \label{eq:cd2-bound}
\end{align}

\paragraph{Step (ii): centered difference rules for gradient and Hessian.}
We now specialize $\bs{u}$ and map the one-dimensional quantities back to multivariate partial derivatives.

\emph{(Gradient components).}
For each $i\in\{1,\dots,d\}$, take $\bs{u}=\bs{e}_i$ and note that
$\varphi'(0)=\partial_i f(\bs{x})$ and
\[
\frac{\varphi(h)-\varphi(-h)}{2h}=\frac{f(\bs{x}+h\bs{e}_i)-f(\bs{x}-h\bs{e}_i)}{2h}
= \widehat{\partial_i f}(\bs{x};h),
\]
which is a linear rule supported on $\{\bs{x}\pm h\bs{e}_i\}\subset \mathcal{Z}_h(\bs{x})$.

\emph{(Diagonal Hessian components).}
For each $i$, take $\bs{u}=\bs{e}_i$ again and note that $\varphi''(0)=\partial_{ii}^2 f(\bs{x})$ and
\[
\frac{\varphi(h)-2\varphi(0)+\varphi(-h)}{h^2}
=\frac{f(\bs{x}+h\bs{e}_i)-2f(\bs{x})+f(\bs{x}-h\bs{e}_i)}{h^2}
= \widehat{\partial_{ii}^2 f}(\bs{x};h),
\]
a linear rule supported on $\{\bs{x},\bs{x}\pm h\bs{e}_i\}\subset \mathcal{Z}_h(\bs{x})$.

\emph{(Off-diagonal Hessian components).}
Fix $i<j$ and let $\bs{u}=\bs{e}_i+\bs{e}_j$.
The directional second derivative satisfies
\begin{align}
D_{\bs{u}}^2 f(\bs{x}) = \left.\frac{d^2}{dt^2} f(\bs{x}+t\bs{u})\right|_{t=0}
= \partial_{ii}^2 f(\bs{x}) + 2\partial_{ij}^2 f(\bs{x}) + \partial_{jj}^2 f(\bs{x}),
\end{align}
such that
\begin{align}
\partial_{ij}^2 f(\bs{x})
= \frac{1}{2}\Big(D_{\bs{e}_i+\bs{e}_j}^2 f(\bs{x}) - \partial_{ii}^2 f(\bs{x}) - \partial_{jj}^2 f(\bs{x})\Big).
\label{eq:mixed-identity}
\end{align}
Define the directional centered second-difference estimator
\[
\widehat{D_{\bs{e}_i+\bs{e}_j}^2 f}(\bs{x};h)
=
\frac{f(\bs{x}+h(\bs{e}_i+\bs{e}_j)) - 2 f(\bs{x}) + f(\bs{x}-h(\bs{e}_i+\bs{e}_j))}{h^2},
\]
which uses $\{\bs{x},\bs{x}\pm h(\bs{e}_i+\bs{e}_j)\}\subset\mathcal{Z}_h(\bs{x})$ by construction.
Then define the mixed-partial estimator by plugging centered estimators into \eqref{eq:mixed-identity}:
\begin{align}
\widehat{\partial_{ij}^2 f}(\bs{x};h)
= \frac{1}{2}\Big(\widehat{D_{\bs{e}_i+\bs{e}_j}^2 f}(\bs{x};h)
- \widehat{\partial_{ii}^2 f}(\bs{x};h)
- \widehat{\partial_{jj}^2 f}(\bs{x};h)\Big).
\label{eq:mixed-estimator}
\end{align}
The right-hand side is a linear rule supported on the union of
$\{\bs{x}\pm h(\bs{e}_i+\bs{e}_j)\}$, $\{\bs{x}\pm h\bs{e}_i\}$, $\{\bs{x}\pm h\bs{e}_j\}$, and $\{\bs{x}\}$,
all contained in $\mathcal{Z}_h(\bs{x})$.

\paragraph{Step (iii): bounding Taylor remainders via RKHS derivative constants.}
We now bound the derivatives appearing in the remainder terms in \eqref{eq:cd1-bound}--\eqref{eq:cd2-bound}
by RKHS norms. By Lemma~\ref{lem:linear-representer} and the Cauchy-Schwarz inequality, for any multi-index $\alpha$,
\begin{align} \label{eq:rkhs-deriv-bound}
|\partial^\alpha f(\bs{y})|
= |\langle f,\partial^\alpha_{\bs{y}}k(\cdot,\bs{y})\rangle_{\mathcal{H}}|
\le \|f\|_{\mathcal{H}}\ \|\partial^\alpha_{\bs{y}}k(\cdot,\bs{y})\|_{\mathcal{H}},
\qquad \forall \bs{y}\in\mathcal{X}. 
\end{align}
In particular, if $\|f\|_{\mathcal{H}}\le 1$, then $|\partial^\alpha f(\bs{y})|\le \kappa_4$ for all $|\alpha|=4$
and $|\partial_{x_i}^3 f(\bs{y})|\le \kappa_3$ for each $i$ and all $\bs{y}$.

\emph{Gradient remainder.}
With $\bs{u}=\bs{e}_i$, we have $\varphi^{(3)}(t)=\partial_{iii}^3 f(\bs{x}+t\bs{e}_i)$.
Thus, for $\|f\|_{\mathcal{H}}\le 1$,
\[
\sup_{|t|\le h}|\varphi^{(3)}(t)| \le \kappa_3.
\]
Plugging into \eqref{eq:cd1-bound} yields
\begin{align} \label{eq:grad-wce}
\sup_{\|f\|_{\mathcal{H}}\le 1}\, \left|\partial_i f(\bs{x}) - \widehat{\partial_i f}(\bs{x};h)\right|
\le \frac{h^2}{6}\kappa_3. 
\end{align}

\emph{Diagonal Hessian remainder.}
With $\bs{u}=\bs{e}_i$, we have $\varphi^{(4)}(t)=\partial_{iiii}^4 f(\bs{x}+t\bs{e}_i)$, so for $\|f\|_{\mathcal{H}}\le 1$,
$\sup_{|t|\le h}|\varphi^{(4)}(t)| \le \kappa_4$.
Plugging into \eqref{eq:cd2-bound} yields
\begin{align} \label{eq:hessdiag-wce}
\sup_{\|f\|_{\mathcal{H}}\le 1}\, \left|\partial_{ii}^2 f(\bs{x}) - \widehat{\partial_{ii}^2 f}(\bs{x};h)\right|
\le \frac{h^2}{12}\kappa_4. 
\end{align}

\emph{Mixed Hessian remainder.}
With $\bs{u}=\bs{e}_i+\bs{e}_j$, we have $\varphi^{(4)}(t)=D_{\bs{u}}^4 f(\bs{x}+t\bs{u})$.
Expanding $D_{\bs{u}}^4$ in coordinate derivatives yields
\begin{align}
D_{\bs{e}_i+\bs{e}_j}^4 f
= \partial_{iiii}^4 f + 4\partial_{iiij}^4 f + 6\partial_{iijj}^4 f + 4\partial_{ijjj}^4 f + \partial_{jjjj}^4 f,
\end{align}
so by the triangle inequality and \eqref{eq:rkhs-deriv-bound},
\begin{align}
\sup_{\|f\|_{\mathcal{H}}\le 1}\ \sup_{|t|\le h}\ |D_{\bs{e}_i+\bs{e}_j}^4 f(\bs{x}+t(\bs{e}_i+\bs{e}_j))|
\le (1+4+6+4+1)\kappa_4 = 16\kappa_4. \label{eq:dir4-bound}
\end{align}
Applying \eqref{eq:cd2-bound} with this $\bs{u}$ gives
\begin{align}
\sup_{\|f\|_{\mathcal{H}}\le 1}\, \left|D_{\bs{e}_i+\bs{e}_j}^2 f(\bs{x}) - \widehat{D_{\bs{e}_i+\bs{e}_j}^2 f}(\bs{x};h)\right|
\le \frac{h^2}{12}\cdot 16\kappa_4 = \frac{4}{3}\kappa_4\, h^2. \label{eq:dir2-wce}
\end{align}
Combining \eqref{eq:mixed-identity}--\eqref{eq:mixed-estimator} with \eqref{eq:hessdiag-wce} and \eqref{eq:dir2-wce},
then taking the supremum over $\|f\|_{\mathcal{H}}\le 1$, yields
\begin{align} \label{eq:hessmix-wce}
\sup_{\|f\|_{\mathcal{H}}\le 1}\, \left|\partial_{ij}^2 f(\bs{x}) - \widehat{\partial_{ij}^2 f}(\bs{x};h)\right|
&\le \frac{1}{2}\left(\frac{4}{3}\kappa_4 h^2 + \frac{h^2}{12}\kappa_4 + \frac{h^2}{12}\kappa_4\right)
= \frac{3}{4}\kappa_4\, h^2.
\end{align}
Specifically, this follows from
\begin{align*}
\left|\partial_{ij}^2 f(\bs{x}) - \widehat{\partial_{ij}^2 f}(\bs{x};h)\right| &= \frac{1}{2}\left( \left| D_{\bs{e}_i+\bs{e}_j}^2 f(\bs{x}) - \widehat{D_{\bs{e}_i+\bs{e}_j}^2 f}(\bs{x};h) + \partial_{ii}^2 f(\bs{x}) - \widehat{\partial_{ii}^2 f}(\bs{x};h) + \partial_{jj}^2 f(\bs{x}) - \widehat{\partial_{jj}^2 f}(\bs{x};h) \right| \right), \\
&\leq \frac{1}{2}\left( \left| D_{\bs{e}_i+\bs{e}_j}^2 f(\bs{x}) - \widehat{D_{\bs{e}_i+\bs{e}_j}^2 f}(\bs{x};h) \right| + \left| \partial_{ii}^2 f(\bs{x}) - \widehat{\partial_{ii}^2 f}(\bs{x};h) \right| + \left| \partial_{jj}^2 f(\bs{x}) - \widehat{\partial_{jj}^2 f}(\bs{x};h) \right| \right),
\end{align*}
where the second line is based on the triangle inequality and we can then substitute in the previous bounds after taking the supremum over $\| f \|_\mathcal{H} \leq 1$. 

\paragraph{Step (iv): from worst-case errors to the power functions.}
By Lemma \ref{lem:posterior-variance-linear-functional}, for each bounded linear functional $L$ (here, each $\partial_i$ and $\partial_{ij}^2$), we have:
\begin{align}
\sqrt{\mathrm{Var}\!\left(L(f)\mid f(\mathcal{Z}_h(\bs{x}))\right)}
= \inf_{\bs{w}\in\mathbb{R}^{|\mathcal{Z}_h(\bs{x})|}} \sup_{f \in \mathcal{H} : \| f \|_H \leq 1}\ \left|L(f) - \sum_{j} w_j f(\bs{z}_j)\right|.
\end{align}
Therefore, the infimum is upper bounded by the worst-case error of any specific choice of weights.
Choosing the central-difference weights constructed in Step (ii) and using
\eqref{eq:grad-wce}, \eqref{eq:hessdiag-wce}, and \eqref{eq:hessmix-wce}, we obtain the following component-wise variance bounds:
\begin{align}
\mathrm{Var}\!\left(\partial_i f(\bs{x})\mid f(\mathcal{Z}_h(\bs{x}))\right)
&\le \left(\frac{\kappa_3}{6}h^2\right)^2 = \frac{\kappa_3^2}{36}h^4, \label{eq:var-grad-comp}\\
\mathrm{Var}\!\left(\partial_{ii}^2 f(\bs{x})\mid f(\mathcal{Z}_h(\bs{x}))\right)
&\le \left(\frac{\kappa_4}{12}h^2\right)^2 = \frac{\kappa_4^2}{144}h^4, \label{eq:var-hessdiag-comp}\\
\mathrm{Var}\!\left(\partial_{ij}^2 f(\bs{x})\mid f(\mathcal{Z}_h(\bs{x}))\right)
&\le \left(\frac{3\kappa_4}{4}h^2\right)^2 = \frac{9\kappa_4^2}{16}h^4,\qquad i\neq j. \label{eq:var-hessmix-comp}
\end{align}
Finally, since $\mathrm{tr}(\Sigma)$ is the sum of marginal variances of the corresponding coordinates,
\begin{align}
\pi^{\bs{g}}_{\mathcal{Z}_h(\bs{x})}(\bs{x})
= \sum_{i=1}^d \mathrm{Var}\!\left(\partial_i f(\bs{x})\mid f(\mathcal{Z}_h(\bs{x}))\right)
\le d\cdot \frac{\kappa_3^2}{36}h^4
= \frac{d\kappa_3^2}{36}h^4,
\end{align}
proving the gradient bound with $C_{g,k}=d\kappa_3^2/36$.
Similarly,
\begin{align}
\pi^{\bs{H}}_{\mathcal{Z}_h(\bs{x})}(\bs{x})
&= \sum_{i=1}^d \mathrm{Var}\!\left(\partial_{ii}^2 f(\bs{x})\mid f(\mathcal{Z}_h(\bs{x}))\right)
 + \sum_{i\neq j} \mathrm{Var}\!\left(\partial_{ij}^2 f(\bs{x})\mid f(\mathcal{Z}_h(\bs{x}))\right) \\
&\le d\cdot \frac{\kappa_4^2}{144}h^4 + d(d-1)\cdot \frac{9\kappa_4^2}{16}h^4
= \left(\frac{d\kappa_4^2}{144} + \frac{9d(d-1)\kappa_4^2}{16}\right)h^4,
\end{align}
so $C_{H,k}$ can be chosen as stated above, which completes the proof.
\end{proof}

\setcounter{theorem}{1}
\begin{theorem}[VPC under NeST sampling; formal]
Assume the following:
\begin{enumerate}
\item (\emph{Domain}) $\mathcal{X}\subset \mathbb{R}^d$ and for each NeST iteration $t$, there exists $h_0>0$ such that the stencil proposed in Lemma \ref{lem:pf-bound-cfd}, denoted by $\bs{Z}_{h}(\bs{x}_t) \subset \mathcal{X}$, for all $h\in(0,h_0]$.
\item (\emph{Kernel regularity}) The kernel $k$ satisfies Assumption~\ref{asmp:kernel} strong enough for Lemma~\ref{lem:linear-representer} to hold for all $|\alpha|\le 4$, and for all stencil radii $h\in(0,h_0]$ the corresponding kernel matrices are invertible (e.g., $k$ is strictly positive definite and the stencil points are distinct).
\end{enumerate}
Fix any sequence $\{\widehat{s}_t\}_{t\ge 0}$ with $\widehat{s}_t>0$.

\noindent\emph{(Noiseless case, $\sigma^2=0$).}
Let $b^\star=d^2+d+1$. For any iteration $t$ and any batch size $b_t\ge b^\star$,
\begin{equation}\label{eq:thm-vpc-sandwich}
0 \le
\inf_{\bs{Z}\in\mathcal{X}^{b_t}}
\Big(\pi^{\bs{g}}_{\mathcal{D}_t\cup \bs{Z}}(\bs{x}_t)+\widehat{s}_t\,\pi^{\bs{H}}_{\mathcal{D}_t\cup \bs{Z}}(\bs{x}_t)\Big)
\le
E_{d,k,\widehat{s}_t,0}(b_t)
=0 \quad \text{(as an infimum)}.
\end{equation}
In particular, for the constructive stencil $\bs{Z}=\bs{Z}_{h}(\bs{x}_t)$ with any $h\in(0,h_0]$, we have
\begin{equation}\label{eq:thm-vpc-constructive}
\pi^{\bs{g}}_{\mathcal{D}_t\cup \bs{Z}_{h}(\bs{x}_t)}(\bs{x}_t)
+\widehat{s}_t\,\pi^{\bs{H}}_{\mathcal{D}_t\cup \bs{Z}_{h}(\bs{x}_t)}(\bs{x}_t)
\le
(1+\widehat{s}_t)\,C_k\,h^4,
\end{equation}
for some constant $C_k\in(0,\infty)$ depending only on $k$ and $d$. Hence the NeST objective can be made arbitrarily small by shrinking $h\to 0$.

\noindent\emph{(Noisy case with replication, $\sigma^2>0$).}
For any $\epsilon>0$ and any iteration $t$, there exist $h\in(0,h_0]$ and an integer $m\ge 1$ such that, using the replicated stencil with $m$ replicates per stencil location (thus batch size $b_t=m b^\star$),
\begin{equation}
\pi^{\bs{g}}_{\mathcal{D}_t\cup \bs{Z}_{h}^{(m)}(\bs{x}_t)}(\bs{x}_t)
+\widehat{s}_t\,\pi^{\bs{H}}_{\mathcal{D}_t\cup \bs{Z}_{h}^{(m)}(\bs{x}_t)}(\bs{x}_t)
\le\epsilon.    
\end{equation}
Consequently, VPC holds in both noiseless and noisy settings.
\end{theorem}

\begin{proof} We break the proof into two parts; we first prove the claimed result for noiseless function evaluations and then use this result to prove the one claimed for noisy evaluations. 

\textbf{Noiseless case.} Fix iteration $t$ and batch size $b_t\ge b^\star$.
By definition (see Lemma~\ref{lem:reduce-origin}), $E_{d,k,\widehat{s}_t,0}(b_t)$ is an infimum of a sum of posterior variances, hence $E_{d,k,\widehat{s}_t,0}(b_t)\ge 0$.
To show the infimum equals $0$, fix any $h\in(0,h_0]$ and consider the (origin-centered) central-difference stencil
$\bs{Z}_h(\bs{0})\subset \mathcal{X}$ from Lemma~\ref{lem:pf-bound-cfd}, which contains exactly $b^\star$ distinct points.
If $b_t=b^\star$, set $\bs{Z}^{(b_t)}=\bs{Z}_h(\bs{0})$.
If $b_t>b^\star$, augment $\bs{Z}_h(\bs{0})$ with any additional $(b_t-b^\star)$ distinct points in $\mathcal{X}$ and denote the resulting $b_t$-point design by $\bs{Z}^{(b_t)}$.
By monotonicity under conditioning of posterior variances (conditioning on more points cannot increase variance),
\[
\pi^{\bs{g}}_{\bs{Z}^{(b_t)}}(\bs{0}) \le \pi^{\bs{g}}_{\bs{Z}_h(\bs{0})}(\bs{0}),
\qquad
\pi^{\bs{H}}_{\bs{Z}^{(b_t)}}(\bs{0}) \le \pi^{\bs{H}}_{\bs{Z}_h(\bs{0})}(\bs{0}).
\]
Therefore, using that error function $E$ is an infimum over all $b_t$-point designs,
\[
E_{d,k,\widehat{s}_t,0}(b_t)
\le
\pi^{\bs{g}}_{\bs{Z}^{(b_t)}}(\bs{0}) + \widehat{s}_t\,\pi^{\bs{H}}_{\bs{Z}^{(b_t)}}(\bs{0})
\le
\pi^{\bs{g}}_{\bs{Z}_h(\bs{0})}(\bs{0}) + \widehat{s}_t\,\pi^{\bs{H}}_{\bs{Z}_h(\bs{0})}(\bs{0}).
\]
Applying Lemma~\ref{lem:pf-bound-cfd} at $\bs{x}=\bs{0}$ yields
\[
\pi^{\bs{g}}_{\bs{Z}_h(\bs{0})}(\bs{0}) \le C_{g,k} h^4,
\qquad
\pi^{\bs{H}}_{\bs{Z}_h(\bs{0})}(\bs{0}) \le C_{H,k} h^4.
\]
Hence, $E_{d,k,\widehat{s}_t,0}(b_t) \le \big(C_{g,k}+\widehat{s}_t C_{H,k}\big)\,h^4$.
Since $h\in(0,h_0]$ was arbitrary, letting $h\to 0$ implies $E_{d,k,\widehat{s}_t,0}(b_t)=0$ as an infimum.
Non-negativity of the left-hand side in \eqref{eq:thm-vpc-sandwich} is immediate because power functions are posterior variances. The upper bound
\[
\inf_{\bs{Z}\in\mathcal{X}^{b_t}}
\Big(\pi^{\bs{g}}_{\mathcal{D}_t\cup \bs{Z}}(\bs{x}_t)+\widehat{s}_t\,\pi^{\bs{H}}_{\mathcal{D}_t\cup \bs{Z}}(\bs{x}_t)\Big)
\le
E_{d,k,\widehat{s}_t,0}(b_t)
\]
is exactly Lemma~\ref{lem:reduce-origin} specialized to $\sigma^2=0$ and $s=\widehat{s}_t$.
Combining with the infimum result gives \eqref{eq:thm-vpc-sandwich}, and in particular shows the NeST design-objective infimum equals $0$.

To show the next bound in \eqref{eq:thm-vpc-constructive}, fix any $h\in(0,h_0]$ and set $\bs{Z}=\bs{Z}_h(\bs{x}_t)$.
By monotonicity under conditioning (i.e., conditioning on $\mathcal{D}_t$ can only reduce variances), we have
\[
\pi^{\bs{g}}_{\mathcal{D}_t\cup \bs{Z}}(\bs{x}_t)\le \pi^{\bs{g}}_{\bs{Z}}(\bs{x}_t),
\qquad
\pi^{\bs{H}}_{\mathcal{D}_t\cup \bs{Z}}(\bs{x}_t)\le \pi^{\bs{H}}_{\bs{Z}}(\bs{x}_t).
\]
By stationarity of the kernel (Assumption~\ref{asmp:kernel}), translating the entire configuration by $-\bs{x}_t$ and then
applying Lemma~\ref{lem:pf-bound-cfd} at $\bs{x}=\bs{0}$ gives
\[
\pi^{\bs{g}}_{\bs{Z}_h(\bs{x}_t)}(\bs{x}_t)\le C_{g,k}h^4,
\qquad
\pi^{\bs{H}}_{\bs{Z}_h(\bs{x}_t)}(\bs{x}_t)\le C_{H,k}h^4.
\]
Combining these and setting $C_k=\max\{C_{g,k},C_{H,k}\}$ yields \eqref{eq:thm-vpc-constructive}:
\[
\pi^{\bs{g}}_{\mathcal{D}_t\cup \bs{Z}_h(\bs{x}_t)}(\bs{x}_t)
+\widehat{s}_t\,\pi^{\bs{H}}_{\mathcal{D}_t\cup \bs{Z}_h(\bs{x}_t)}(\bs{x}_t)
\le
(1+\widehat{s}_t)\,C_k\,h^4.
\]
This concludes the noiseless part of the theorem.

\textbf{Noisy case with replication.} Let $\bs{Z} = (\bs{z}_1,\dots,\bs{z}_n)$ denote the $n=b^\star$ \emph{distinct} central-difference stencil locations
constructed around $\bs{x}_t$ with stencil radius $h>0$.
At each $\bs{z}_j$, we collect $m \ge 1$ replicate observations
\[
y_{j,\ell} \;=\; f(\bs{z}_j) + \varepsilon_{j,\ell}, 
\qquad \varepsilon_{j,\ell} \overset{\text{i.i.d.}}{\sim} \mathcal{N}(0,\sigma^2),
\qquad j=1,\dots,n,\;\ell=1,\dots,m,
\]
independent across $(j,\ell)$ and independent of the prior $f \sim \mathcal{GP}(0,k)$.
Define the sample mean at each location
\[
\bar{y}_j \;=\; \frac{1}{m}\sum_{\ell=1}^m y_{j,\ell}
\;=\; f(\bs{z}_j) + \bar{\varepsilon}_j,
\qquad
\bar{\varepsilon}_j = \frac{1}{m}\sum_{\ell=1}^m \varepsilon_{j,\ell}.
\]
Then $\bar{\varepsilon}_j \sim \mathcal{N}(0,\sigma^2/m)$ and the $\bar{\varepsilon}_j$ are independent across $j$.

We now justify that conditioning on all replicates $(y_{j,\ell})$ is equivalent (for the posterior of $f$ and of any
linear functional of $f$) to conditioning only on the averaged data $(\bar{y}_j)$.

For each $j$, define the replicate vector $\bs{y}_j = (y_{j,1},\dots,y_{j,m})^\top \in \mathbb{R}^m$ and
$\bs{\varepsilon}_j = (\varepsilon_{j,1},\dots,\varepsilon_{j,m})^\top \in \mathbb{R}^m$,
so that $\bs{y}_j = f(\bs{z}_j)\bs{1}_m + \bs{\varepsilon}_j$, where $\bs{1}_m$ is the $m$-vector of ones.
Let $T \in \mathbb{R}^{m\times m}$ be any orthonormal matrix whose first row is $\bs{1}_m^\top/\sqrt{m}$.
Define the orthogonal transform $\tilde{\bs{y}}_j = T\bs{y}_j$ and $\tilde{\bs{\varepsilon}}_j = T\bs{\varepsilon}_j$.
Since $T$ is orthonormal and $\bs{\varepsilon}_j \sim \mathcal{N}(0,\sigma^2 I_m)$,
we have $\tilde{\bs{\varepsilon}}_j \sim \mathcal{N}(0,\sigma^2 I_m)$.
Moreover
\[
\tilde{y}_{j,1} \;=\; \frac{1}{\sqrt{m}}\bs{1}_m^\top \bs{y}_j
\;=\; \sqrt{m}\, f(\bs{z}_j) + \tilde{\varepsilon}_{j,1},
\qquad
\tilde{y}_{j,r} \;=\; \tilde{\varepsilon}_{j,r}\quad (r=2,\dots,m).
\]
Thus, for each $j$, the $(m-1)$ coordinates $\tilde{y}_{j,2},\dots,\tilde{y}_{j,m}$ are \emph{pure noise}
(independent of $f(\bs{z}_j)$), and hence carry no information about $f$.
Therefore, for the posterior over $f$ (and any functionals of $f$),
conditioning on $\{\bs{y}_j\}_{j=1}^n$ is equivalent to conditioning on
$\{\tilde{y}_{j,1}\}_{j=1}^n$, i.e., on $\{\bar{y}_j\}_{j=1}^n$.
Concretely, we can treat the replicate experiment as the reduced observation model
\[
\bar{\bs{y}} \;=\; \bs{f}_{\bs{Z}} + \bar{\bs{\varepsilon}},
\qquad
\bar{\bs{\varepsilon}} \sim \mathcal{N}\!\left(0,\tfrac{\sigma^2}{m} I_n\right),
\]
where $\bs{f}_{\bs{Z}} = (f(\bs{z}_1),\dots,f(\bs{z}_n))^\top$.

Now, fix any multi-index $\alpha$ and define the derivative functional $L_{\alpha,\bs{x}_t}(f) = \partial^\alpha f(\bs{x}_t)$.
By Lemma \ref{lem:linear-representer}, $L_{\alpha,\bs{x}_t}$ is bounded and linear on $\mathcal{H}$, and
the relevant cross-covariance vector with the (distinct) sites $\bs{Z}$ is
\[
\bs{k}_{\alpha} = \big(\, \partial^\alpha_{\bs{x}} k(\bs{x}_t,\bs{z}_1),\dots,\partial^\alpha_{\bs{x}} k(\bs{x}_t,\bs{z}_n)\,\big)^\top \in \mathbb{R}^n,
\]
and the Gram matrix is $K = k(\bs{Z},\bs{Z}) \in \mathbb{R}^{n\times n}$.
Under the reduced model $\bar{\bs{y}} = \bs{f}_{\bs{Z}} + \bar{\bs{\varepsilon}}$ with
$\bar{\bs{\varepsilon}} \sim \mathcal{N}(0,(\sigma^2/m)I_n)$,
standard Gaussian conditioning gives the posterior variance
\begin{equation}\label{eq:postvar-noisy-deriv}
\mathrm{Var}\!\left( L_{\alpha,\bs{x}_t}(f) \,\middle|\, \bar{\bs{y}}\right)
\;=\;
L_{\alpha,\bs{x}_t}^{(1)} L_{\alpha,\bs{x}_t}^{(2)} k(\bs{x}_t,\bs{x}_t)
\;-\;
\bs{k}_{\alpha}^\top \Big( K + \tfrac{\sigma^2}{m} I_n \Big)^{-1} \bs{k}_{\alpha},
\end{equation}
where $L^{(1)}$ and $L^{(2)}$ indicate that the derivative operator is applied to the first and second argument, respectively.

Define the \emph{noisy} power function for this derivative as
\[
\pi_{\bs{Z},\sigma^2/m}(\alpha,\bs{x}_t)
=
\mathrm{Var}\!\left( L_{\alpha,\bs{x}_t}(f) \,\middle|\, \bar{\bs{y}}\right),
\]
and similarly let $\pi_{\bs{Z},0}(\alpha,\bs{x}_t)$ denote the noiseless version (i.e., with $\sigma^2/m=0$).

Let $\tau^2 = \sigma^2/m$. For fixed distinct $\bs{Z}$, we have $K \succ 0$ (strictly positive definite),
and therefore $(K+\tau^2 I_n)^{-1}$ is well-defined for all $\tau^2 \ge 0$.
Moreover,
\[
0 \le \pi_{\bs{Z},\tau^2}(\alpha,\bs{x}_t) - \pi_{\bs{Z},0}(\alpha,\bs{x}_t)
=
\bs{k}_{\alpha}^\top\!\Big( K^{-1} - (K+\tau^2 I_n)^{-1}\Big)\bs{k}_{\alpha}.
\]
Using the difference of inverse identity
\[
K^{-1} - (K+\tau^2 I_n)^{-1}
=
K^{-1}(\tau^2 I_n)(K+\tau^2 I_n)^{-1},
\]
we obtain the bound
\begin{equation}\label{eq:noisy-gap-bound}
0 \le \pi_{\bs{Z},\tau^2}(\alpha,\bs{x}_t) - \pi_{\bs{Z},0}(\alpha,\bs{x}_t)
\le
\tau^2 \|K^{-1}\|_2\|(K+\tau^2 I_n)^{-1}\|_2 \|\bs{k}_{\alpha}\|_2^2
\le
\tau^2 \|K^{-1}\|_2^2 \|\bs{k}_{\alpha}\|_2^2.
\end{equation}
In particular, for fixed $\bs{Z}$, this implies
$\pi_{\bs{Z},\sigma^2/m}(\alpha,\bs{x}_t) \to \pi_{\bs{Z},0}(\alpha,\bs{x}_t)$ as $m\to\infty$.

Now choose $\bs{Z}=\bs{Z}_h(\bs{x}_t)$; we can combine this with our noiseless result to see that the increase in error for each derivative-component variance is $O(\sigma^2/m)$ for fixed $h$.
Since there are finitely many gradient and Hessian components, we can sum \eqref{eq:noisy-gap-bound} over all
components and absorb the resulting finite sums into a constant $C_k'(h)$ (finite for each fixed $h$), to obtain:
\[
\pi^{\bs{g}}_{\mathcal{D}_t \cup \bs{Z}(h),\sigma^2/m}(\bs{x}_t)
\le
C_{g,k} h^4 + C_{g,k}'(h)\,\frac{\sigma^2}{m},
\qquad
\pi^{\bs{H}}_{\mathcal{D}_t \cup \bs{Z}(h),\sigma^2/m}(\bs{x}_t)
\le
C_{H,k} h^4 + C_{H,k}'(h)\frac{\sigma^2}{m}.
\]
Therefore,
\[
\pi^{\bs{g}}_{\mathcal{D}_t \cup \bs{Z}(h),\sigma^2/m}(\bs{x}_t)
+
\widehat{s}_t\,\pi^{\bs{H}}_{\mathcal{D}_t \cup \bs{Z}(h),\sigma^2/m}(\bs{x}_t)
\le
(1+\widehat{s}_t)\,C_k h^4
+
(1+\widehat{s}_t)\,C_k'(h)\frac{\sigma^2}{m}.
\]
for some $C_k, C_k'(h) \in (0, \infty)$.
Given any $\varepsilon>0$, first choose $h$ small enough so that $(1+\widehat{s}_t)C_k h^4 \le \varepsilon/2$,
and then choose $m$ large enough so that $(1+\widehat{s}_t)C_k'(h)\sigma^2/m \le \varepsilon/2$.
This proves that the NeST objective can be made arbitrarily small in the noisy case, which completes the proof.
\end{proof}

\subsection{Local Quadratic Convergence}
\label{app:local-quadratic-conv}

We now analyze the local convergence behavior of NeST-BO, which follows from relatively standard results for (inexact) Newton's method. The main difference here is that we have a data-driven Newton-step error. 

\begin{theorem}[Local quadratic convergence with NeST]\label{thm:local-quad}
Assume $f$ is twice differentiable, $\bs H$ is $\beta$-Lipschitz on a neighborhood $\mathcal N$ of a local minimizer $\bs x^\star$, and $\lambda_{\min}(\bs H(\bs x))\ge \lambda_{\min}>0$ on $\mathcal N$.
Consider the full-step NeST update $\bs x_{t+1}=\bs x_t-\widehat{\bs d}_{\mathcal{D}_{t+1}}(\bs x_t)$ with Newton-step error $\varepsilon_t=\|\bs d(\bs x_t)-\widehat{\bs d}_{\mathcal{D}_{t+1}}(\bs x_t)\|$.
Then, for any $\bs x_t\in\mathcal N$, we have
\[
\|\bs x_{t+1}-\bs x^\star\|
\le
\frac{\beta}{2\,\lambda_{\min}}\;\|\bs x_t-\bs x^\star\|^2
+ \varepsilon_t,
\qquad
\varepsilon_t^2
\le
2B^2\,\big\|\bs H(\bs x_t)^{-1}\big\|^2\,
\Phi_{\mathcal{D}_{t+1}}(\bs x_t),
\]
where $\Phi_{\mathcal{D}}(\bs x) = \pi^{\bs g}_{\mathcal{D}}(\bs x)+s_{\mathcal{D}}(\bs x)\,\pi^{\bs H}_{\mathcal{D}}(\bs x)$. 
In particular, if $\varepsilon_t \le \kappa \|\bs x_t-\bs x^\star\|^2$ eventually for $\kappa > 0$ (e.g., under VPC), the iterates enter the quadratic regime.
\end{theorem}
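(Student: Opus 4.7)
The plan is to separate the argument into three pieces: (i) the classical exact-Newton contraction around $\bs{x}^\star$, (ii) a triangle-inequality decomposition that isolates the inexactness $\varepsilon_t$, and (iii) an immediate invocation of Theorem~\ref{thm:newton-bound} for the second displayed bound. The ``entering the quadratic regime'' corollary then drops out by composing the master recursion with the hypothesis $\varepsilon_t\le\kappa\|\bs x_t-\bs x^\star\|^2$.

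For step (ii), I would start by writing
\[
\bs x_{t+1}-\bs x^\star
= \bigl(\bs x_t-\bs x^\star - \bs H(\bs x_t)^{-1}\bs g(\bs x_t)\bigr)
+ \bigl(\bs d(\bs x_t)-\widehat{\bs d}_{\mathcal{D}_{t+1}}(\bs x_t)\bigr),
\]
since the NeST update is $\bs x_{t+1}=\bs x_t-\widehat{\bs d}_{\mathcal{D}_{t+1}}(\bs x_t)$ and $\bs d(\bs x_t)=\bs H(\bs x_t)^{-1}\bs g(\bs x_t)$. For step (i), I use $\bs g(\bs x^\star)=\bs 0$ and the integral form $\bs g(\bs x_t)=\int_0^1 \bs H(\bs x^\star+\tau(\bs x_t-\bs x^\star))(\bs x_t-\bs x^\star)\,d\tau$ to rewrite the first bracket as $\bs H(\bs x_t)^{-1}\!\int_0^1[\bs H(\bs x_t)-\bs H(\bs x^\star+\tau(\bs x_t-\bs x^\star))](\bs x_t-\bs x^\star)\,d\tau$. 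Applying the $\beta$-Lipschitzness of $\bs H$ on the segment (which lies in $\mathcal N$ after possibly shrinking to a convex neighborhood) and $\|\bs H(\bs x_t)^{-1}\|\le 1/\lambda_{\min}$ yields a bound of $\tfrac{\beta}{2\lambda_{\min}}\|\bs x_t-\bs x^\star\|^2$. The triangle inequality with $\|\bs d(\bs x_t)-\widehat{\bs d}_{\mathcal{D}_{t+1}}(\bs x_t)\|=\varepsilon_t$ then gives the first displayed inequality. Step (iii) is a direct restatement of Theorem~\ref{thm:newton-bound} at $\bs x=\bs x_t$ with dataset $\mathcal{D}_{t+1}$, using the definition of $\Phi_{\mathcal D}$.

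Finally, substituting $\varepsilon_t\le\kappa\|\bs x_t-\bs x^\star\|^2$ into the master inequality gives $\|\bs x_{t+1}-\bs x^\star\|\le(\tfrac{\beta}{2\lambda_{\min}}+\kappa)\|\bs x_t-\bs x^\star\|^2$, which is a genuine quadratic contraction once $\|\bs x_t-\bs x^\star\|<(\tfrac{\beta}{2\lambda_{\min}}+\kappa)^{-1}$; such a ball is contained in $\mathcal N$ after at most one further shrinkage, and induction then keeps all subsequent iterates inside it. The only genuinely delicate point is not the algebra but the \emph{justification} of the hypothesis $\varepsilon_t\le\kappa\|\bs x_t-\bs x^\star\|^2$: translating the VPC of Theorem~\ref{thm:vpc} (which only says $\Phi_{\mathcal{D}_{t+1}}(\bs x_t)\to 0$ as $b_t\to\infty$) into a quadratic rate in $\|\bs x_t-\bs x^\star\|$ requires coupling the batch schedule $\{b_t\}$ to the iterate geometry. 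Since the theorem statement takes this hypothesis as given, the main obstacle is handled by assumption, and the proof itself reduces to the classical inexact-Newton calculation above plus a single appeal to the Newton-step error bound.
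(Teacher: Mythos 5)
Your proposal is correct and follows essentially the same route as the paper's proof: the identical decomposition of $\bs x_{t+1}-\bs x^\star$ into the exact-Newton residual plus the step error $\varepsilon_t$, the same $\tfrac{\beta}{2\lambda_{\min}}\|\bs x_t-\bs x^\star\|^2$ bound on the first term (you derive the Taylor remainder via the integral form where the paper cites a standard lemma of Nesterov--Polyak, which amounts to the same computation), and the same direct appeal to Theorem~\ref{thm:newton-bound} for the bound on $\varepsilon_t^2$. Your closing remark that converting VPC into the hypothesis $\varepsilon_t\le\kappa\|\bs x_t-\bs x^\star\|^2$ requires coupling the batch schedule to the iterate geometry is a fair observation the paper also leaves implicit, but since the theorem assumes that hypothesis, it does not affect correctness.
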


\begin{proof}
Write $\widehat{\bs d}_t=\widehat{\bs d}_{\mathcal{D}_{t+1}}(\bs x_t)$ and decompose the updated iterate as follows
\[
\bs x_{t+1}-\bs x^\star
=
\big[\bs x_t-\bs x^\star-\bs H(\bs x_t)^{-1}\bs g(\bs x_t)\big]
+
\big[\bs d(\bs x_t)-\widehat{\bs d}_t\big].
\]
The second bracket is $\varepsilon_t$ by definition.
For the first bracket, inexact-Newton analysis with $\beta$-Lipschitz Hessian gives the following sequence of equalities/inequalities 
\begin{align*}
\|\bs x_t-\bs x^\star-\bs H(\bs x_t)^{-1}\bs g(\bs x_t)\| &= \| \bs{H}(\bs x_t)^{-1} \left( \bs{H}(\bs x_t) (\bs x_t-\bs x^\star) - \bs g(\bs x_t) \right) \| \\
&= \| \bs{H}(\bs x_t)^{-1} \left( \bs{g}(\bs{x}^\star) - \bs g(\bs x_t) - \bs{H}(\bs x_t) (\bs x_t-\bs x^\star) \right) \|, \\
& \leq \| \bs{H}(\bs x_t)^{-1} \| \cdot \| \bs{g}(\bs{x}^\star) - \bs g(\bs x_t) - \bs{H}(\bs x_t) (\bs x_t-\bs x^\star) \|, \\
& \leq \| \bs{H}(\bs x_t)^{-1} \| \cdot \frac{\beta}{2} \| \bs{x} - \bs{x}^\star \|^2, \\
&\le \frac{\beta}{2\lambda_{\min}} \|\bs x_t-\bs x^\star\|^2,    
\end{align*}
where the first line follows from simple rearrangement, the second line follows from $\bs{g}(\bs{x}^\star)=\bs{0}$ since $\bs{x}^\star$ is a local minimizer, the third line follows from standard norm inequalities, the fourth line follows from $\beta$-Lipschitz condition on the Hessian \citep[Lemma 1]{nesterov2006cubic}, and the final line follows from $\| \bs{H}(\bs{x}_t)^{-1} \| \leq 1/\lambda_{\min}$. 
The bound on $\varepsilon_t$ follows from Theorem~\ref{thm:newton-bound}; the stated result follows from these two bounds. 
\end{proof}

\section{EXPERIMENT DETAILS}
\label{app:experiment-details}

\subsection{Implementation}
\label{app:exp-implementation}

\paragraph{Reproducibility.} The code to reproduce the main components of the numerical experiments is avialable on GitHub: \url{https://github.com/PaulsonLab/NeST-BO}. 

\paragraph{Software packages and shared settings.} Unless otherwise stated, all BO baselines are implemented using the \texttt{BoTorch} (version 0.15)\footnote{BoTorch: \url{https://botorch.org/}} \citep{balandat2020botorch} and \texttt{GPyTorch}\footnote{GPyTorch: \url{https://gpytorch.ai/}} (version 1.14) \citep{gardner2018gpytorch} packages. We use squared exponential (SE) kernels with automatic relevance determination (ARD) throughout for a controlled comparison across methods. Acquisition optimization in \texttt{BoTorch} is performed with the \texttt{optimize\_acqf} function using $\texttt{num\_restarts} = 5$ and $\texttt{raw\_samples} = 20$; for the very high-dimensional problems with $d \geq 1000$ (e.g., Ant, Leukemia), we add a 2 second timeout and reduce $\texttt{num\_restarts}$ to $3$ (only on the Leukemia problem) to limit wall-clock cost. Hyperparameters are refit at different frequencies by task class: every move for directional local methods, every $d$ iterations on our $20$d synthetic tasks and the Lunar Lander, Swimmer, Robot Pusher, and Rover Trajectory benchmarks; every iteration for $1000$d synthetic tasks; and every $10$ iterations for Ant and Leukemia (see Appendix \ref{app:synthetic-details}--\ref{app:real-world-details} for task definitions).

\paragraph{Initialization and starting location.} Initial designs use Sobol sequences over the full domain and always include the starting point (for local BO methods). Following prior local BO work, we start directional methods from the domain center on the real-world problems and from a random point on synthetic tasks, since the center can coincide with the global solution on some synthetic functions. Note that Sobol sampling uses the standard \texttt{torch.quasirandom.SobolEngine}. 

\paragraph{Local optimization of NeST.} Because the NeST acquisition \eqref{eq:nest-approx} targets the Newton step at $\bs{x}_t$ and our kernels are stationary, informative experimental designs concentrate fairly close to the current iterate. We therefore optimize $\widehat{\alpha}_{\text{NeST}}$ within a small box centered at $\bs{x}_t$ with radius $\delta_t$, i.e., search domain $[\bs{x}_t-\delta_t,\bs{x}_t+\delta_t]$. In principle, $\delta_t$ can be adapted using standard model-agreement tests from trust-region methods; in all experiments we use a fixed radius for simplicity and speed. We set $\delta = 0.2$ in most tasks and $\delta = 0.01$ on Ant due to strong non-stationarity.

\paragraph{Batch sizes for directional methods.}
For NeST-BO, GIBO, MPD, and MinUCB we query $b_t = d$ points per iteration to learn the local step/direction (or $b_t = m$ in subspace dimension $m$ for the subspace variant). This choice is supported by the ablations in Appendix \ref{app:ablation-batch}.

\paragraph{Method-specific details.} Below, we summarize specific implementation details for each method tested in our comparisons throughout this work:
\begin{itemize}
    \item \textit{NeST-BO:} We implement the practical version of NeST-BO (Algorithm \ref{alg:NeSTBO-practical}) by extending the public GIBO codebase to reuse its BO loop, GP wrappers, and acquisition optimizer, replacing GIBO's GI acquisition with our weighted power function objective in \eqref{eq:nest-approx} and adding the Newton step update with line search. The acquisition is optimized in the local box as described above; backtracking line search is applied on the GP mean.
    \item \textit{NeST-BO (subspace variant):} To study compatibility with learned embeddings, we integrate NeST-BO with the BAxUS \citep{papenmeier2022increasing} subspace machinery from the implementation provided at \url{https://botorch.org/docs/tutorials/baxus/}. As opposed to keeping it fixed in all cases, we treat the initial subspace dimension as a tunable hyperparameter. We use 4 for most problems but increased it some for the real-world problems based on some preliminary experimentation. We adopt a similar subspace expansion heuristic of the original BAxUS implementation: if no improvement is found after some number of consecutive iterations, the subspace dimension is expanded. Note that the original implementation expands the subspace dimension once the trust region side length is smaller than a pre-specified threshold value. We set the number of consective iterations to 50 for all synthetic problems, 10 for the Lunar Lander, Swimmer, and Robot Pusher problems (as they are lower dimensional), and 20 for the Rover Trajectory, Ant, and the Leukemia problems. 
    \item \textit{D-scaled LogEI:} We follow the ``vanilla BO works'' recommendation from \citep{hvarfner2024vanilla} to use LogEI \citep{ament2023unexpected} with dimension-aware length-scale priors and standardized outputs; in BoTorch this corresponds to \texttt{LogExpectedImprovement} on a \texttt{SingleTaskGP} with appropriate priors.
    \item \textit{TuRBO:} We use the (single-trust-region) TuRBO \citep{eriksson2019scalable} implementation from the BoTorch tutorial at \url{https://botorch.org/docs/tutorials/turbo_1/} with LogEI for consistency with the global baseline. TuRBO adaptively shrinks/expands a local box based on success/failure counters, providing strong anytime performance in higher dimensions. 
    \item \textit{GIBO:} The GIBO method \citep{muller2021local} selects samples that maximally reduce the posterior gradient covariance (GI acquisition), then takes a length-scale-normalized gradient step. We use the original public implementation available at \url{https://github.com/sarmueller/gibo} and its early-stopping rule for gradient learning to avoid oversampling near the iterate.
    \item \textit{MPD:} Maximum Probability of Descent (MPD) \citep{nguyen2022local} chooses directions maximizing the posterior probability that the (normalized) gradient is a descent direction; we use the reference implementation available at \url{https://github.com/kayween/local-bo-mpd} with step size $\delta = 0.01$ and probability threshold $p^\star = 0.65$.
    \item \textit{MinUCB:} The MinUCB method \citep{fan2024minimizing} minimizes a UCB-style surrogate of gradient magnitude along candidate directions; we use the reference implementation available at \url{https://github.com/chinafzy1/Minimizing-UCB} and default settings from the original paper. 
    \item \textit{BAxUS:} For the standalone BAxUS baseline, we use the BoTorch tutorial code based on \citep{papenmeier2022increasing} (same as subspace variant of NeST-BO) with LogEI to match our other baselines; BAxUS initializes a small subspace and enlarges it on stagnation. The tutorial uses a \texttt{SingleTaskGP} with log-normal lengthscale priors. We follow the original heuristic to set the subspace dimension, but increase it to 4 in the synthetic problems to avoid initial projections getting an unfair advantage of landing near the global optimum at $\bs{0}$.
    \item \textit{Sobol:} Non-adaptive Sobol sampling uses \texttt{torch.quasirandom.SobolEngine} with scrambling enabled, which is a standard baseline method considered in the BO literature. 
\end{itemize}

\subsection{Computing Resources}
\label{app:computing-resource}

All experiments were executed on the Ohio Supercomputing Center (OSC) cluster (\url{https://www.osc.edu}) using CPU nodes equipped with Intel Xeon CPU Max 9470 processors and 512 GB RAM.

\subsection{Definition of Synthetic Functions}
\label{app:synthetic-details}

We use four standard test function that jointly probe conditioning, non-convexity, and multi-modality properties that stress local learning of curvature.

\paragraph{Sphere.} The $d$-dimensional Sphere function is a convex quadratic function expressed as:
\begin{align*}
    f(\bs{x}) = \textstyle \sum_{i=1}^d x_i^2,
\end{align*}
which was used as a check for local methods, as it has well-behaved curvature and no local minima. We optimize over $\mathcal{X} = [-d^2,d^2]^d$. The global minimum is $f(\bs{x}^\star) = 0$ at $\bs{x}^\star = (0, 0, \ldots, 0)$. 

\paragraph{Rosenbrock.} The $d$-dimensional Rosenbrock function is a bowl-shaped function with a narrow, curved valley, which can be expressed as:
\begin{align*}
    f(\bs{x}) = \textstyle\sum_{i=1}^{d-1}\!\left[100(x_{i+1}-x_i^2)^2+(x_i-1)^2\right].
\end{align*}
It is a classical ``ill-conditioned'' test function that is likely to reward updates that incorporate curvature information. We optimize within the bounds $\mathcal{X} = [-5,5]^d$. The global minimum is $f(\bs{x}^\star) = 0$ at $\bs{x}^\star = (1, 1, \ldots, 1)$. This is a common benchmark in the BO literature; see, e.g., \citep{xu2024standard} for example. 

\paragraph{Griewank.} The $d$-dimensional Griewank function is a separable quadratic modulated by a product of cosines:
\begin{align*}
    f(\bs{x})=\sum_{i=1}^d \frac{x_i^2}{4000}-\prod_{i=1}^d \cos\!\left(\frac{x_i}{\sqrt{i}}\right)+1, 
\end{align*}
which features many regularly spaced local minima. We optimize within the bounds $[-300,300]^d$. The global minimum is $f(\bs{x}^\star) = 0$ at $\bs{x}^\star = (0, 0, \ldots, 0)$. This has been recently used as a benchmark problem when analyzing high-dimensional BO algorithms; see, e.g., \citep{papenmeier2025understanding}. 

\paragraph{Ackley.} The $d$-dimensional Ackley function is a highly multi-modal landscape with a flat outer region and a steep basin near the global optimum: 
\begin{align*}
    f(\bs x) = -20\text{exp}\left(-0.2\sqrt{\frac{1}{d}\sum_{i=1}^dx_i^2}\right) - \text{exp}\left(\frac{1}{d}\sum_{i=1}^d\text{cos}(2\pi x_i)\right) + 20 + \text{exp}(1).
\end{align*}
We optimize within the bounds $[-5,5]^d$. The global minimum is $f(\bs{x}^\star) = 0$ at $\bs{x}^\star = (0, 0, \ldots, 0)$. The Ackley function is another popular benchmark in both low- and high-dimensional BO works; see, e.g., \citep{siemenn2023fast, ament2023unexpected}. 

We highlight that these choices follow common practice in the BO literature and are meant to cover complementary problem aspects: Rosenbrock isolates ill-conditioning (benefiting Newton steps), Griewank/Ackley add dense local structure (testing how fast local surrogates learn gradients \textit{and} curvature), while Sphere confirms that added second-order machinery can still add value on easy, well-conditioned cases. 

\subsection{Real-World Benchmark Problems}
\label{app:real-world-details}

We include six problems spanning reinforcement learning (RL) control, robotic planning, and large-scale hyperparameter tuning. Together they cover medium to very high dimensionality, varying degrees of non-stationarity, and different noise profiles -- settings where local curvature can accelerate progress and subspaces can be useful.

\paragraph{Lunar Lander (12d).} A classic control task in the OpenAI Gymnasium (\texttt{LunarLander-v3}), where a controller with 12 parameters maps the measured state to four discrete actions. Following prior BO studies from, e.g., \citep{eriksson2019scalable}, we minimize the negative episodic return (reward sign flipped). Episodes are run for 1000 steps; we initialize the GP from 10 Sobol points.

\paragraph{Swimmer (16d).} This is a MuJoCo locomotion task in the OpenAI Gymnasium (\texttt{Swimmer-v5}) with a linear policy (16 parameters). This probelm has been considered in prior BO studies, e.g., \citep{muller2021local}; we minimize negative reward and run episodes for 1000 steps. We again initialize the GP from 10 Sobol points.

\paragraph{Robot Pushing (14d).} This is a planar manipulation benchmark where 14 controller parameters must be tuned to reduce distances to targets. We adopt bounds and setup from prior BO work, reported in \citep{eriksson2019scalable}, and run it in a small-noise regime to isolate optimization behavior.

\paragraph{Rover Trajectory (60d).} A trajectory-planning task -- originally introduced in \citep{wang2018batched} -- in which 30 two-dimensional waypoints are optimized to maximize a reward that penalizes rough terrain and constraint violations. The resulting 60-dimensional design is structured and non-stationary, which stresses local surrogates and benefits from curvature information. We minimize negative reward and follow the large-domain setting (using 200 Sobol initial points) suggested in the literature. 

\paragraph{Ant (888d).} This is a MuJoCo quadruped with an 8-dimensional action space and 111-dimensional observations; we optimize a linear state-feedback policy (888 parameters) and minimize negative reward. This benchmark has recently been used to probe high-dimensional BO with subspaces \citep{hvarfner2024vanilla}. In contrast to previous work that neglects contact forces and uses the \texttt{Ant-v2} environment, we use \texttt{Ant-v4} (in OpenAI Gymnasium) with contact forces enabled, which increases complexity. For NeST-BO-sub and BAxUS, we initialize at the center point of the subspace, which yields an initial objective (negative reward) of approximately -990.

\paragraph{Leukemia (7129d).} A weighted Lasso regression task with one weight per feature (7129 hyperparameters) on the Leukemia dataset from LassoBench \citep{vsehic2022lassobench}. We follow the standard least-squares objective with weighted $\ell_1$ regularization and evaluate test error under the LassoBench protocol. This problem exemplifies extremely high-dimensional, sparse settings where subspace methods are essential.

The RL tasks (Lunar Lander, Swimmer, Ant) expose NeST-BO to non-stationary, stochastic objectives where local Newton steps and line search stabilize progress; Robot Pushing and Rover emphasize structured geometry and curvature; Leukemia provides a sparse, ultra-high-dimensional regime. This mix lets us isolate when curvature helps (ill-conditioned valleys, non-stationary responses) and when subspaces are essential, and it explains the large empirical gains we report over purely gradient-based local BO and global BO baselines. 

\subsection{Final Performance Distributions and Summary Statistics}

To complement the performance-versus-iteration plots in Figure~\ref{fig:main results} in the main text, Figure~\ref{fig:violin-final} summarizes, for each benchmark, the empirical distribution of the \textit{final} best-found objective across replicates and methods. Each panel corresponds to one task. Within a panel, one violin plot per method shows the distribution of final outcomes; interior dashed lines denote empirical quartiles (median in the middle). All y-axes are in the native objective scale (negative is better for all tasks).
To provide a compact numerical summary consistent with the main text, Tables~\ref{tab:final-synth20}--\ref{tab:final-ant-leukemia} report the median with interquartile range (IQR) (difference between upper and lower quartile) in parentheses for the same final outcomes (10 replicates per method). 

\begin{figure}[!tb]
  \centering
  \includegraphics[width=\textwidth]{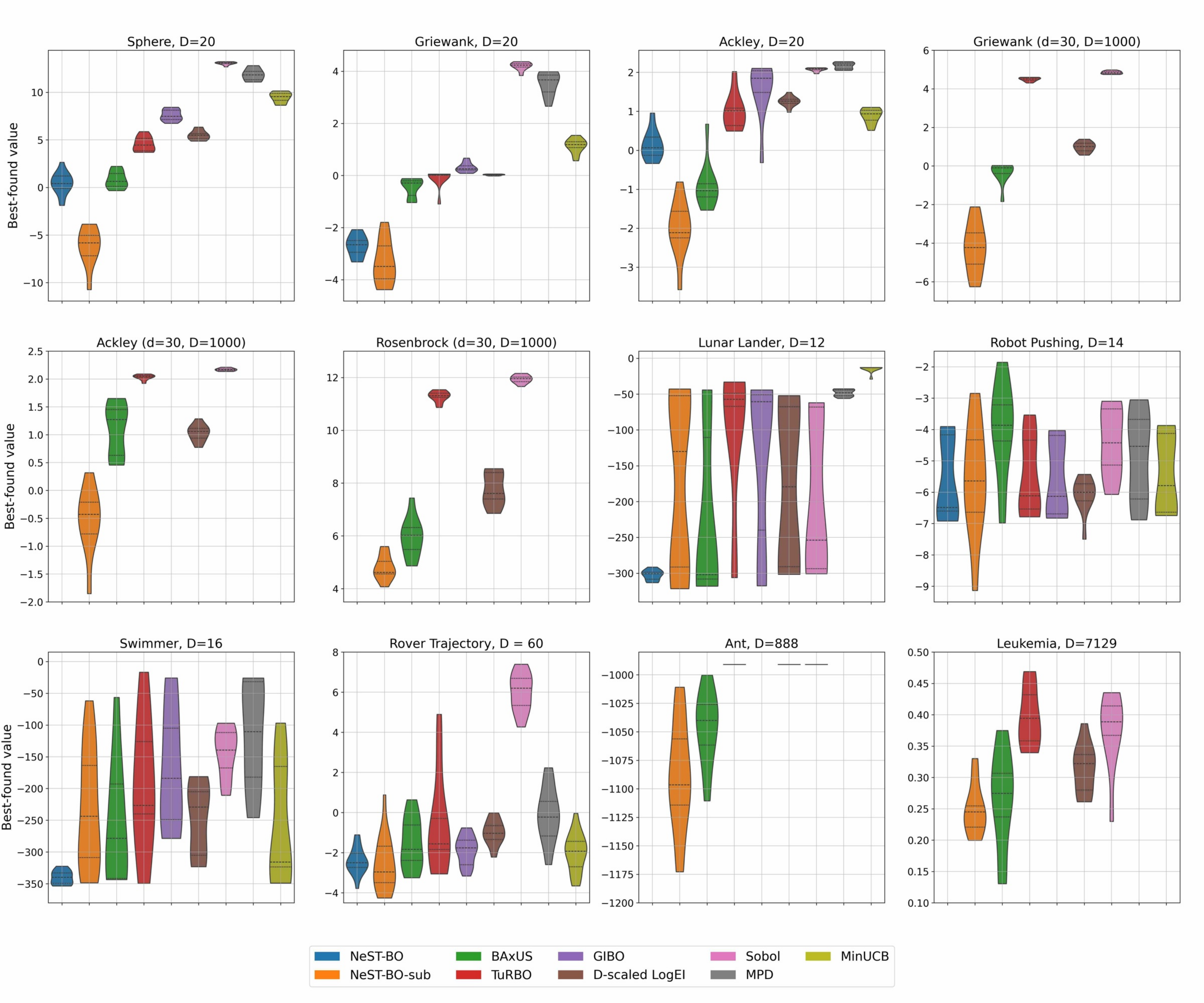}
  \caption{Final best-found values across tasks. For each test problem, violins show the distribution of the final best-found objective over repeated runs for all methods. Dashed lines mark quartiles (median centered). Lower is better in every panel. The plot provides a compact view of both central tendency and spread at termination, complementing the iteration-wise trajectories in the main text.}
  \label{fig:violin-final}
\end{figure}

\begin{table}[!tb]
\caption{Final best-found objective on 20D synthetic benchmarks. Entries are median (IQR) across 10 replicates; lower is better. Best median per column is in \textbf{bold}.}
\label{tab:final-synth20}
\centering
\small
\setlength{\tabcolsep}{6pt}
\renewcommand{\arraystretch}{1.05}
\begin{tabular}{lccc}
\toprule
Method & Griewank ($d{=}20$) & Sphere ($d{=}20$) & Ackley ($d{=}20$) \\
\midrule
NeST-BO          & 0.0704 (0.0298) & 1.37 (1.90)              & 1.07 (0.53) \\
NeST-BO-sub      & \textbf{0.0308 (0.0518)} & \textbf{0.0032 (0.0059)} & \textbf{0.12 (0.10)} \\
BAxUS            & 0.75 (0.33)     & 1.92 (3.32)              & 0.36 (0.12) \\
TuRBO            & 1.02 (0.01)     & 86.89 (127.89)           & 2.76 (1.06) \\
GIBO             & 1.30 (0.22)     & 1,791.79 (2,142.81)      & 6.38 (3.28) \\
D-scaled LogEI   & 1.03 (0.04)     & 243.39 (120.28)          & 3.55 (0.36) \\
Sobol            & 69.21 (8.78)    & 485,044.69 (62,421.03)   & 8.08 (0.32) \\
MPD              & 39.53 (22.38)   & 140,223.66 (117,641.69)  & 8.93 (1.28) \\
MinUCB           & 3.28 (0.74)     & 14,273.04 (10,468.39)    & 2.56 (0.63) \\
\bottomrule
\end{tabular}
\end{table}

\begin{table}[!tb]
\caption{Final best-found objective on high-dimensional synthetic benchmarks (ambient $D{=}1000$; intrinsic $d{=}30$). Entries are median (IQR) across 10 replicates; lower is better. Best median per column is in \textbf{bold}.}
\label{tab:final-synth1000}
\centering
\small
\setlength{\tabcolsep}{6pt}
\renewcommand{\arraystretch}{1.05}
\begin{tabular}{lccc}
\toprule
Method & Griewank ($d{=}30; D{=}1000$) & Ackley ($d{=}30; D{=}1000$) & Rosenbrock ($d{=}30; D{=}1000$) \\
\midrule
NeST-BO-sub      & \textbf{0.02 (0.05)}  & \textbf{0.65 (0.35)} & \textbf{48.82 (22.14)} \\
BAxUS            & 0.92 (0.33)           & 3.57 (2.40)          & 278.91 (107.26) \\
TuRBO            & 91.17 (13.20)         & 7.77 (0.28)          & 76,095.20 (19,735.29) \\
D-scaled LogEI   & 2.74 (1.03)           & 2.89 (0.49)          & 1,437.85 (774.60) \\
Sobol            & 126.71 (16.07)        & 8.75 (0.28)          & 138,662.22 (12,104.50) \\
\bottomrule
\end{tabular}
\end{table}

\begin{table}[!tb]
\caption{Final best-found objective on simulator benchmarks. Entries are median (IQR) across 10 replicates; lower is better. Best median per column is in \textbf{bold}.}
\label{tab:final-sim}
\centering
\small
\setlength{\tabcolsep}{6pt}
\renewcommand{\arraystretch}{1.05}
\begin{tabular}{lcccc}
\toprule
Method & Lunar Lander & Robot Pushing & Swimmer & Rover Trajectory \\
\midrule
NeST-BO          & -300.49 (9.88)          & \textbf{-6.49 (2.43)} & \textbf{-339.62 (16.53)} & -2.26 (0.87) \\
NeST-BO-sub      & -290.62 (18.51)         & -5.64 (2.31)          & -239.83 (177.65)         & \textbf{-3.08 (0.89)} \\
BAxUS            & \textbf{-301.89 (197.58)} & -3.86 (1.15)        & -278.26 (148.83)         & -1.83 (1.77) \\
TuRBO            & -57.25 (18.17)          & -6.11 (2.19)          & -226.57 (114.12)         & -1.55 (1.55) \\
GIBO             & -60.47 (189.10)         & -6.13 (2.50)          & -183.91 (144.05)         & -1.91 (0.98) \\
D-scaled LogEI   & -179.18 (223.20)        & -6.00 (0.55)          & -229.15 (100.17)         & -1.12 (0.62) \\
Sobol            & -253.77 (225.42)        & -4.42 (1.79)          & -139.29 (55.79)          & 6.36 (0.84) \\
MPD              & -48.05 (7.87)           & -4.53 (2.54)          & -110.25 (150.48)         & -0.25 (1.34) \\
MinUCB           & -12.82 (0)              & -5.79 (2.51)          & -315.82 (158.33)         & -1.95 (1.42) \\
\bottomrule
\end{tabular}
\end{table}

\begin{table}[!tb]
\caption{Final best-found objective on additional benchmarks. Entries are median (IQR) across 10 replicates; lower is better. Best median per column is in \textbf{bold}.}
\label{tab:final-ant-leukemia}
\centering
\small
\setlength{\tabcolsep}{8pt}
\renewcommand{\arraystretch}{1.05}
\begin{tabular}{lcc}
\toprule
Method & Ant & Leukemia \\
\midrule
NeST-BO-sub      & \textbf{-1099.46 (42.28)}    & \textbf{0.25 (0.03)} \\
BAxUS            & -1044.03 (35.66)             & 0.27 (0.07) \\
TuRBO            & -990.83 (0)                  & 0.39 (0.07) \\
D-scaled LogEI   & -990.83 (0)                  & 0.32 (0.06) \\
Sobol            & -990.83 (0)                  & 0.39 (0.05) \\
\bottomrule
\end{tabular}
\end{table}

To complement these descriptive statistics with a paired significance check, we additionally report one-sided paired Wilcoxon signed-rank tests, e.g., \citep{woolson2007wilcoxon} comparing NeST-BO (and NeST-BO-sub) to each baseline on the per-seed final best objective (Tables~\ref{tab:wilcox-synth20}--\ref{tab:wilcox-sub-hd-extra}). For each task and comparator, the test is applied to the matched differences in terminal outcomes (NeST variant minus baseline) and evaluates whether these paired differences are systematically negative, without assuming normality of the differences. Overall, the Wilcoxon results largely agree with the separation seen in the violin plots (Figure \ref{fig:violin-final}) and the gaps in the median (IQR) tables (Tables~\ref{tab:final-synth20}--\ref{tab:final-ant-leukemia}): most comparisons indicate statistically significant improvements for a NeST variant, with a small number of marginal or non-significant cases where methods are not statistically different at termination (beyond a standard p-value of 0.05). 

\begin{table}[!tb]
\caption{Paired one-sided Wilcoxon signed-rank tests on 20D synthetic benchmarks at termination.
Each entry reports significance for the hypothesis that the NeST variant attains lower final objective than the comparator, with p-value in parentheses (truncated at 0.001).
Symbols: \Wsig{$p<0.05$}, \Wmar{$0.05\le p\le 0.10$}, \Wns{$p>0.10$}.}
\label{tab:wilcox-synth20}
\centering
\small
\setlength{\tabcolsep}{4.5pt}
\renewcommand{\arraystretch}{1.05}
\begin{tabular}{lcccccc}
\toprule
& \multicolumn{2}{c}{Griewank ($d{=}20$)} & \multicolumn{2}{c}{Sphere ($d{=}20$)} & \multicolumn{2}{c}{Ackley ($d{=}20$)} \\
\cmidrule(lr){2-3}\cmidrule(lr){4-5}\cmidrule(lr){6-7}
Comparator & NeST-BO & NeST-BO-sub & NeST-BO & NeST-BO-sub & NeST-BO & NeST-BO-sub \\
\midrule
NeST-BO      & \Wna           & \Wmar{0.08} & \Wna         & \Wsig{0.001} & \Wna         & \Wsig{0.001} \\
NeST-BO-sub  & \Wns{0.93}     & \Wna        & \Wns{1.0}    & \Wna         & \Wns{1.0}    & \Wna \\
BAxUS        & \Wsig{0.001}   & \Wsig{0.001}& \Wns{0.25}   & \Wsig{0.001} & \Wns{1.0}    & \Wsig{0.005} \\
TuRBO        & \Wsig{0.001}   & \Wsig{0.001}& \Wsig{0.001} & \Wsig{0.001} & \Wsig{0.001} & \Wsig{0.001} \\
GIBO         & \Wsig{0.001}   & \Wsig{0.001}& \Wsig{0.001} & \Wsig{0.001} & \Wsig{0.002} & \Wsig{0.001} \\
D-scaled LogEI & \Wsig{0.001} & \Wsig{0.001}& \Wsig{0.001} & \Wsig{0.001} & \Wsig{0.001} & \Wsig{0.001} \\
Sobol        & \Wsig{0.001}   & \Wsig{0.001}& \Wsig{0.001} & \Wsig{0.001} & \Wsig{0.001} & \Wsig{0.001} \\
MPD          & \Wsig{0.001}   & \Wsig{0.001}& \Wsig{0.001} & \Wsig{0.001} & \Wsig{0.001} & \Wsig{0.001} \\
MinUCB       & \Wsig{0.001}   & \Wsig{0.001}& \Wsig{0.001} & \Wsig{0.001} & \Wsig{0.001} & \Wsig{0.001} \\
\bottomrule
\end{tabular}
\end{table}

\begin{table}[!tb]
\caption{Paired one-sided Wilcoxon signed-rank tests on simulator benchmarks at termination.
Each entry tests whether the NeST variant achieves lower final objective than the comparator, with p-value in parentheses (truncated at 0.001).
Symbols: \Wsig{$p<0.05$}, \Wmar{$0.05\le p\le 0.10$}, \Wns{$p>0.10$}.}
\label{tab:wilcox-sim}
\centering
\small
\setlength{\tabcolsep}{3.5pt}
\renewcommand{\arraystretch}{1.05}
\resizebox{\textwidth}{!}{%
\begin{tabular}{lcccccccc}
\toprule
& \multicolumn{2}{c}{Lunar Lander} & \multicolumn{2}{c}{Robot Pushing} & \multicolumn{2}{c}{Swimmer} & \multicolumn{2}{c}{Rover Trajectory} \\
\cmidrule(lr){2-3}\cmidrule(lr){4-5}\cmidrule(lr){6-7}\cmidrule(lr){8-9}
Comparator & NeST-BO & NeST-BO-sub & NeST-BO & NeST-BO-sub & NeST-BO & NeST-BO-sub & NeST-BO & NeST-BO-sub \\
\midrule
NeST-BO      & \Wna        & \Wns{0.98} & \Wna        & \Wns{0.54} & \Wna        & \Wns{1.0}  & \Wna        & \Wns{0.14} \\
NeST-BO-sub  & \Wsig{0.03} & \Wna       & \Wns{0.50}  & \Wna       & \Wsig{0.001}& \Wna      & \Wns{0.88}  & \Wna \\
BAxUS        & \Wns{0.12}  & \Wns{0.62} & \Wsig{0.01} & \Wsig{0.02} & \Wsig{0.01} & \Wns{0.54} & \Wmar{0.065} & \Wmar{0.05} \\
TuRBO        & \Wsig{0.005}& \Wsig{0.01} & \Wns{0.46} & \Wns{0.38} & \Wsig{0.003}& \Wns{0.28} & \Wsig{0.01} & \Wsig{0.002} \\
GIBO         & \Wsig{0.003}& \Wsig{0.02} & \Wns{0.69} & \Wns{0.54} & \Wsig{0.001}& \Wns{0.16} & \Wns{0.35} & \Wns{0.19} \\
D-scaled LogEI & \Wsig{0.002} & \Wmar{0.10} & \Wns{0.88} & \Wns{0.78} & \Wsig{0.001} & \Wns{0.78} & \Wsig{0.005} & \Wsig{0.007} \\
Sobol        & \Wsig{0.005}& \Wns{0.14} & \Wsig{0.001}& \Wsig{0.04} & \Wsig{0.001}& \Wsig{0.01} & \Wsig{0.001}& \Wsig{0.001} \\
MPD          & \Wsig{0.001}& \Wsig{0.001}& \Wsig{0.01} & \Wns{0.14} & \Wsig{0.002}& \Wsig{0.01} & \Wsig{0.001}& \Wsig{0.007} \\
MinUCB       & \Wsig{0.001}& \Wsig{0.001}& \Wns{0.28} & \Wns{0.50} & \Wsig{0.002}& \Wns{0.72} & \Wns{0.62} & \Wns{0.25} \\
\bottomrule
\end{tabular}}
\end{table}

\begin{table}[!tb]
\caption{Paired one-sided Wilcoxon signed-rank tests for NeST-BO-sub at termination on high-dimensional synthetic benchmarks ($D{=}1000$, $d{=}30$) and additional tasks.
Each entry tests whether NeST-BO-sub achieves lower final objective than the comparator, with p-value in parentheses (truncated at 0.001).
Symbols: \Wsig{$p<0.05$}, \Wmar{$0.05\le p\le 0.10$}, \Wns{$p>0.10$}.}
\label{tab:wilcox-sub-hd-extra}
\centering
\small
\setlength{\tabcolsep}{4.5pt}
\renewcommand{\arraystretch}{1.05}
\begin{tabular}{lccccc}
\toprule
& Griewank & Ackley & Rosenbrock & Ant & Leukemia \\
\midrule
BAxUS          & \Wsig{0.001} & \Wsig{0.001} & \Wsig{0.003} & \Wsig{0.002} & \Wns{0.25} \\
TuRBO          & \Wsig{0.001} & \Wsig{0.001} & \Wsig{0.001} & \Wsig{0.001} & \Wsig{0.001} \\
D-scaled LogEI & \Wsig{0.001} & \Wsig{0.001} & \Wsig{0.001} & \Wsig{0.001} & \Wsig{0.005} \\
Sobol          & \Wsig{0.001} & \Wsig{0.001} & \Wsig{0.001} & \Wsig{0.001} & \Wsig{0.001} \\
\bottomrule
\end{tabular}
\end{table}

\section{ADDITIONAL EXPERIMENTS AND ABLATIONS}
\label{app:add-experiments}

\subsection{GP Prior Realizations}

We consider a similar study to that in \citep{muller2021local} wherein we optimize samples drawn from a GP prior. We take a GP prior with zero mean and the squared exponential (SE) kernel with unit variance. To vary difficulty with dimension $d$, we draw the kernel length-scale $\ell(d)$ uniformly over a narrow interval centered at the heuristic used by \citep[Appendix A.5]{muller2021local}, and keep $\ell(d)$ fixed within each realization. 

Rather than fitting a surrogate to finite prior samples, we \textit{directly} sample functions from the prior using random Fourier features (RFF) \citep{rahimi2007random}. Concretely, with $n_b=1024$ features,
\begin{align*}
f(\boldsymbol{x}) \approx \sum_{i=1}^{n_b} w_i \,\phi_i(\boldsymbol{x}), 
\qquad
\phi_i(\boldsymbol{x})=\sqrt{\tfrac{2}{n_b}}\cos(\boldsymbol{\theta}_i^\top \boldsymbol{x} + \tau_i),    
\end{align*}
where $w_i\sim\mathcal{N}(0,1)$, $\tau_i\sim\mathcal{U}(0,2\pi)$, and for the SE kernel we sample $\boldsymbol{\theta}_i\sim\mathcal{N}(\boldsymbol{0},\ell(d)^{-2}\boldsymbol{I})$ by Bochner’s theorem \citep{rahimi2007random}. We treat the resulting $f$ as the ground-truth objective.

To compute simple regret, we approximate the global optimum via L-BFGS \citep{zhu1997algorithm} with 100 multistarts on each GP prior realization. We consider $d\in\{15,20\}$, generate 10 independent realizations per $d$, and report the median across runs. Since the goal here is to stress NeST-BO itself as a high-performance algorithm in the ``medium-dimensional'' regime (10 to 50 dimensions), no subspace mechanisms are used (and BAxUS is omitted to reduce confounding factors). We also include Augmented Random Search (ARS) \citep{mania2018simple} as an additional baseline. 
Figure~\ref{fig:GP-prior} shows that NeST-BO consistently achieves the lowest simple regret across both dimensions. GIBO, D-scaled LogEI, and TuRBO are competitive early on, but lag in later iterations, suggesting a benefit from explicitly targeting curvature in this regime. We also observe tighter best-value distributions for NeST-BO. Note that the absolute regret depends on the RFF approximation and the length-scale draw; all methods use the same realizations to ensure a fair comparison. 

\begin{figure}[!t]
  \centering
  \includegraphics[width=1.0\textwidth]{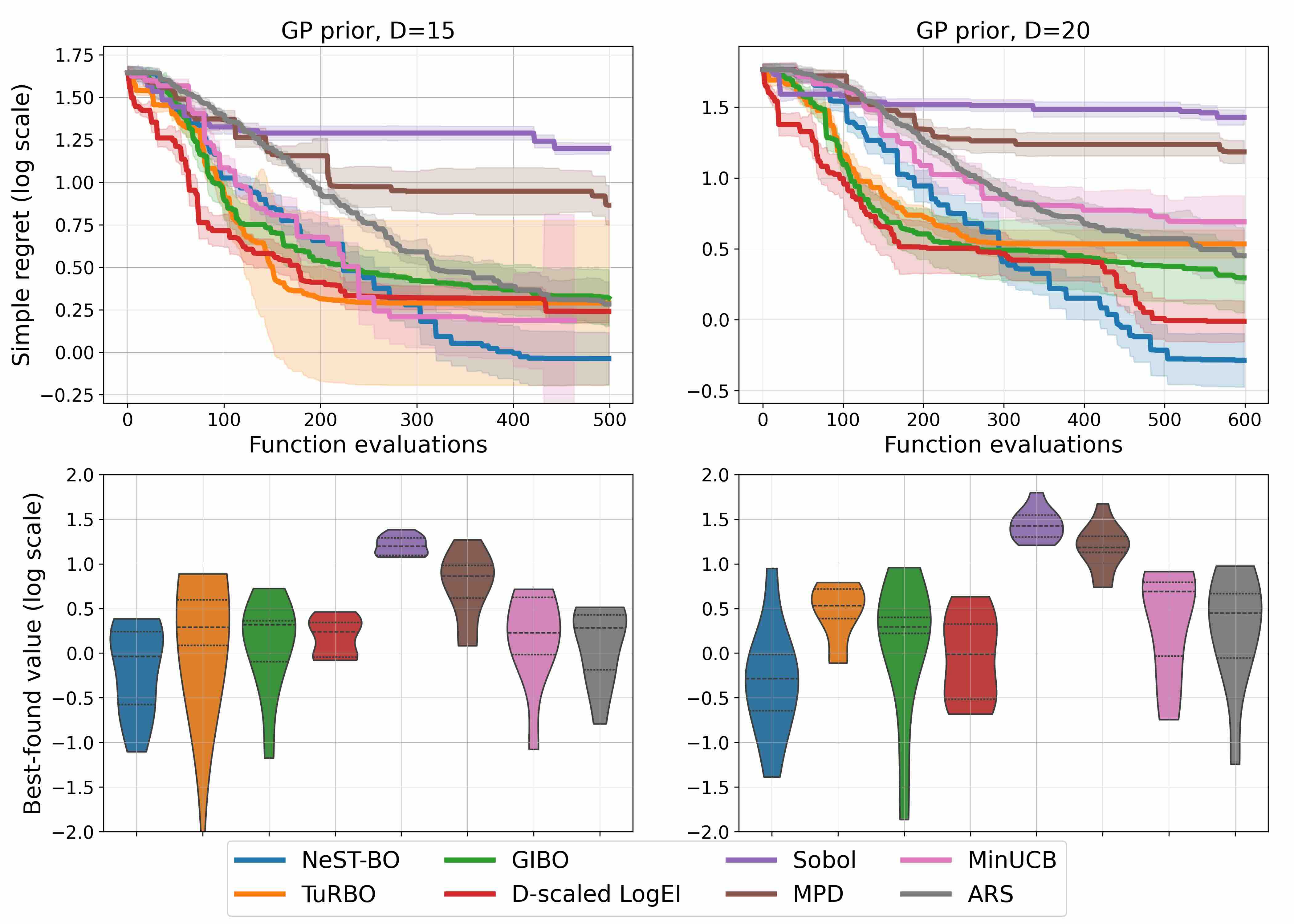}
  \caption{Optimization of GP-prior draws in $d\in\{15,20\}$ without subspaces. \textbf{Top:} Median simple regret versus number of function evaluations (shaded region corresponds to $\pm$ one standard error). \textbf{Bottom:} Distribution of best-found values across 10 realizations. NeST-BO converges faster and to lower regret than strong baselines on these medium-dimensional tasks.}
  \label{fig:GP-prior}
\end{figure}

\subsection{Alternative Ways to Learn Embeddings}

Our main results show that subspaces can be a powerful vehicle for scaling NeST-BO to high-dimensional spaces (e.g., using BAxUS-style nested subspaces). In this section, we ask a complementary question: \emph{is NeST-BO also compatible with other ways of constructing subspaces?} To answer this, we consider an adaptively learned embedding obtained using the Sparse Axis-Aligned Subspace GP (SAAS-GP) \citep{eriksson2021high}. 
In particular, we fit a SAAS-GP once at the start to select an active set of coordinates -- those whose posterior mean length-scales fall below a threshold $\gamma$ -- and then run NeST-BO \textit{only} in this learned subspace while holding the remaining coordinates fixed at their incumbent values. We denote this variant \textbf{NeST-BO-SAAS}. To assess the importance of the Newton step itself, we also run \textbf{GIBO-SAAS}, which uses the identical SAAS subspace but follows a gradient-based step rule.

We evaluate on a two-dimensional Branin function\footnote{See \url{https://www.sfu.ca/~ssurjano/branin.html}} embedded in $50$ dimensions; we use a threshold $\gamma = 10$, initialize with 30 Sobol points, and report results over 10 independent runs. We intentionally exclude BAxUS here to avoid confounding the question of \textit{which} subspace to use with \textit{how} the local step is computed inside that subspace.
As shown in Figure \ref{fig:Branin}, \textbf{NeST-BO-SAAS} delivers a sharp reduction in simple regret and substantially outperforms both \textbf{GIBO-SAAS} and the non-subspace baselines on this benchmark. This suggests that once a reasonably informative subspace is available, even from a simple axis-aligned selector, the curvature-aware Newton step provides a potentially large advantage over gradient-only updates.

\begin{figure}[!tb]
  \centering
  \includegraphics[width=0.9\textwidth]{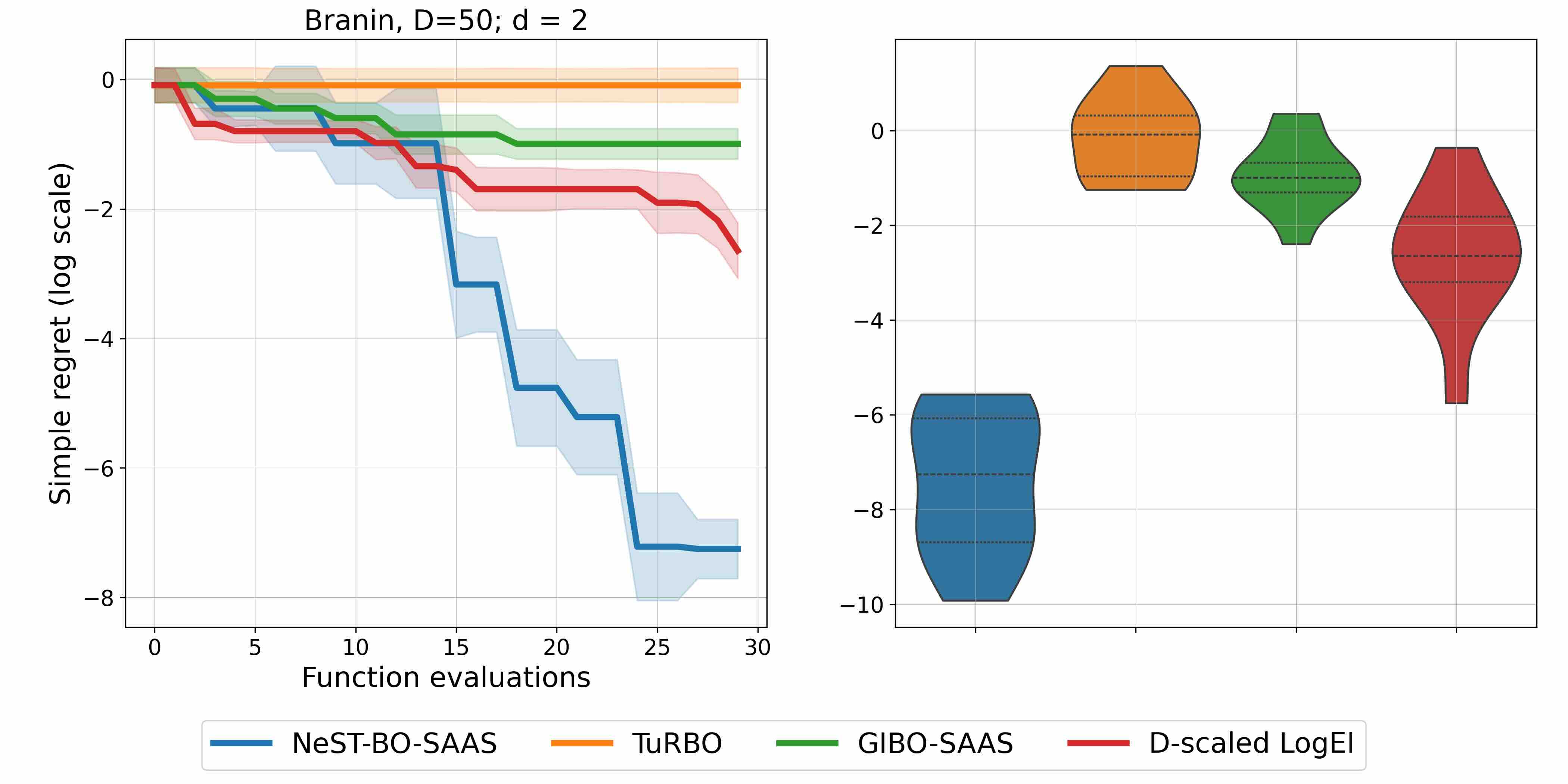}
  \caption{Branin embedded in $D = 50$ dimensions with a learned SAAS subspace ($d{=}2$ active dimensions); shading = $\pm$ one standard error.
  \textbf{Left:} Median simple regret (log scale) over 10 runs.
  \textbf{Right:} Distribution of the best simple regret across runs.
  NeST-BO-SAAS uses a one-time SAAS-GP to select active coordinates, then runs NeST-BO only in that subspace; GIBO-SAAS uses the identical subspace but uses gradient-only steps.}
  \label{fig:Branin}
\end{figure}

\subsection{Impact of Batch Size on NeST-BO}
\label{app:ablation-batch}

A defining feature of NeST-BO is its explicit use of gradient and Hessian information to construct a local Newton step. This creates a natural trade‑off: in each iteration we can either devote more samples to accurately estimating the step, or spend fewer samples per step and move on more quickly. In other words, the batch size $b$ controls how well the Newton direction is learned relative to how frequently it can be updated. Intuitively, very small $b$ risks moving along a poorly estimated (heavily biased or noisy) direction, which can slow convergence or even push the search off course; large $b$ improves the estimation quality but consumes budget so quickly that only a few iterations of actual movement occur.

To examine this tradeoff, we ran NeST‑BO on two standard $d = 10$ benchmarks -- Griewank\footnote{See \url{https://www.sfu.ca/~ssurjano/griewank.html}} and Ackley\footnote{See \url{https://www.sfu.ca/~ssurjano/ackley.html}} -- using three batch sizes: $b\in\{0.2d,\,d,\,2d\}$. Each run started from 10 Sobol points and was repeated 10 times. This setup lets us isolate how the per-iteration sampling budget influences the quality of the learned Newton step and the resulting optimization trajectory.
The results in Figure~\ref{fig:ablation_batchsize} reveal a clear pattern. When the batch size is too small ($b=0.2d$), the algorithm learns an imprecise Newton direction: simple regret decreases slowly and often plateaus at higher values. In contrast, moving from $b=d$ to $b=2d$ produces only marginal gains in early-iteration slope but nearly identical final performance, indicating diminishing returns once the local power functions for $\bs{g}$ and $\bs{H}$ are already reasonably small. In practice, this suggests that, beyond a moderate batch size, further increasing $b$ yields only a slight benefit in direction accuracy while substantially reducing the number of outer iterations.

Overall, these experiments show that NeST‑BO benefits from a sufficiently large batch size to accurately estimate the Newton step but does not require very large batches to converge effectively. This supports our default choice of $b=d$ in the main experiments as a balanced setting between step-accuracy and iteration budget.

\begin{figure}[!tb]
  \centering
  \includegraphics[width=1.0\textwidth]{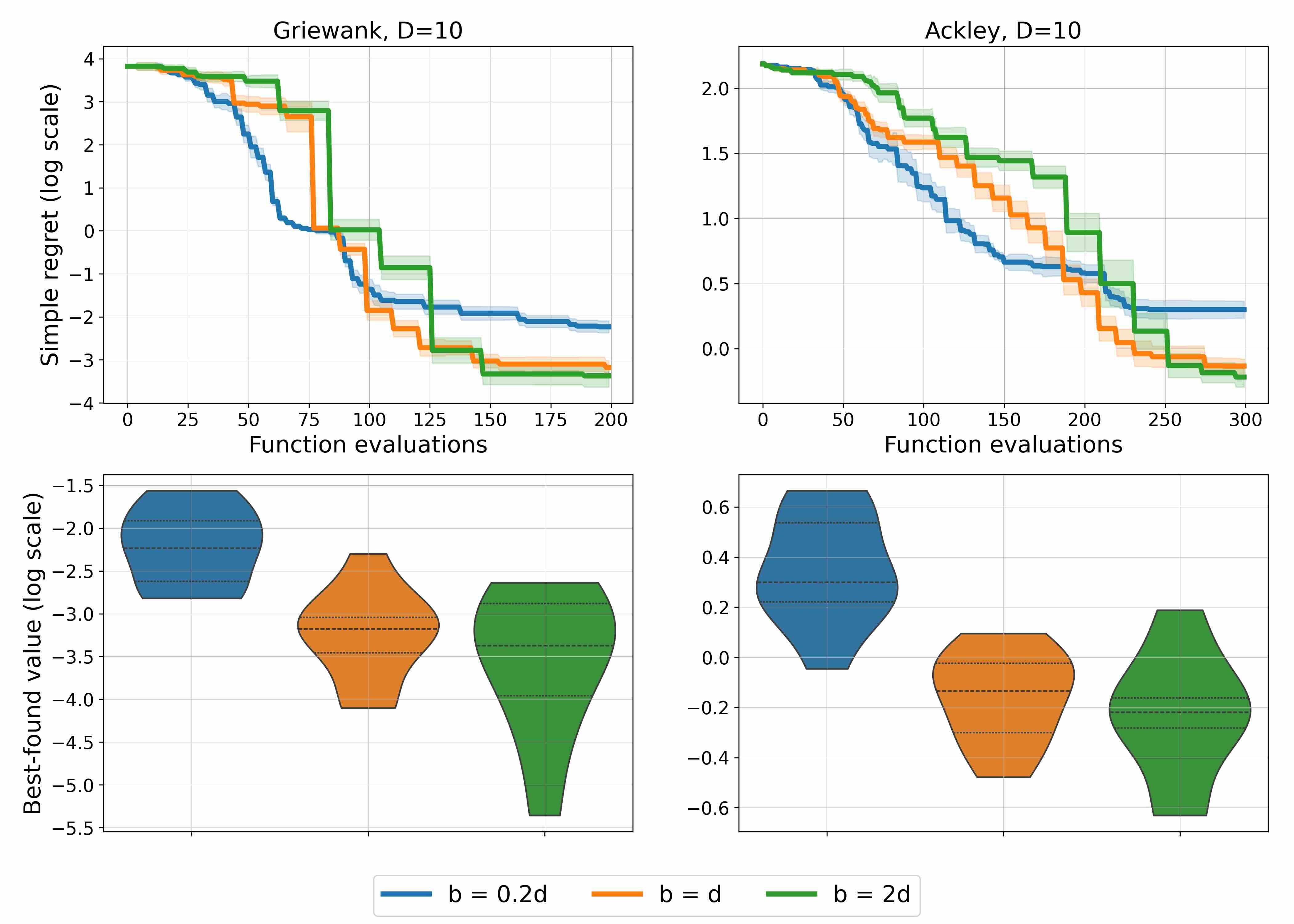}
  \caption{Effect of sampling budget on Newton‑step learning. NeST‑BO with $b \in \{0.2d,d,2d\}$ on Griewank and Ackley ($d = 10$). \textbf{Top:} Median simple regret (log scale) across 10 runs, shading = $\pm$ one standard error. \textbf{Bottom:} Distribution of best‑seen values. Small $b$ underestimates the step and slows convergence; $b = d$ and $b = 2d$ give comparable performance, indicating diminishing returns beyond $b \approx d$.}
  \label{fig:ablation_batchsize}
\end{figure}

\subsection{Impact of Step Size on GIBO}

First-order local Bayesian optimization methods (such as GIBO) build steps using only gradient information. This raises an important question: \textit{how sensitive is their performance to the choice of step size?} A step that is too aggressive can overshoot narrow valleys or oscillate around the optimum, while a step that is too conservative may crawl slowly toward the solution. In contrast, NeST-BO augments gradient information with curvature and employs an automatic backtracking line search, which potentially makes it less sensitive to such manual tuning.

To examine this issue, we compared NeST-BO with GIBO on the four-dimensional Rosenbrock function\footnote{See \url{https://www.sfu.ca/~ssurjano/rosen.html}}, a classical ill-conditioned test problem. GIBO used their length-scale-normalized gradient update with fixed step sizes $\eta \in \{1.0,\,0.5,\,0.1\}$. NeST-BO used its default line search. All methods started from 10 Sobol points, and we averaged results over 10 independent runs to reduce sensitivity to the initial data.

The results in Figure~\ref{fig:Rosenbrock_stepsize} highlight a striking difference. GIBO’s performance is highly step-size dependent: with $\eta = 0.1$, it converges relatively well, but with larger steps ($\eta = 0.5$ or $1.0$), the algorithm's performance deteriorates, often stalling or oscillating near Rosenbrock’s curved valley. This behavior is consistent with overshooting under ill-conditioned curvature. NeST-BO, by contrast, maintains steady progress without any step-size tuning, leveraging its line search and curvature scaling to automatically adjust the step length. Even on this challenging landscape, NeST-BO achieves competitive regret compared with the best-tuned GIBO setting.
Overall, this study underscores the practical advantage of NeST-BO’s Newton-based update: by removing the need for manual step-size selection, it improves robustness and reduces the burden of hyperparameter tuning relative to first-order local BO methods like GIBO.

\begin{figure}[!tb]
  \centering
  \includegraphics[width=0.9\textwidth]{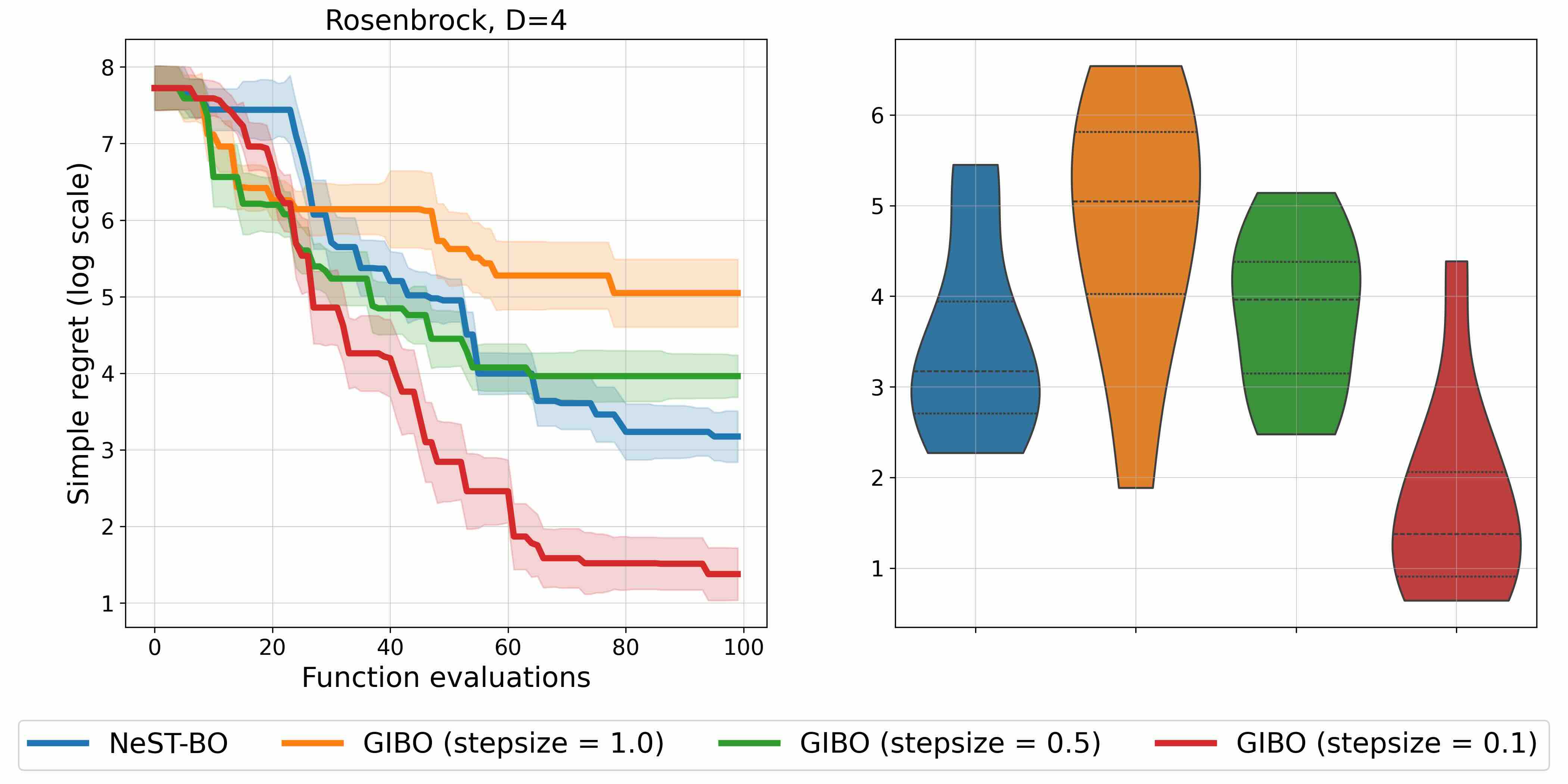}
  \caption{Step-size sensitivity of GIBO on Rosenbrock ($d=4$). \textbf{Left:} Median simple regret (log scale) across 10 runs; shading = $\pm$ one standard error. \textbf{Right:} Distribution of best-seen values. GIBO requires careful step-size tuning ($\eta = 0.1$ works best here); larger $\eta$ harms stability. NeST-BO’s line search removes this sensitivity while leveraging curvature.}
  \label{fig:Rosenbrock_stepsize}
\end{figure}

\subsection{Runtime Comparison}

In this section, we analyze the runtime of NeST-BO compared to GIBO and D-scaled LogEI under controlled conditions. 
To perform a fair comparison, we run each method on the same shared CPU cluster (see Appendix \ref{app:computing-resource}) limited to 4 cores and evaluate the cumulative time required to complete 200 iterations, which is reported in Table \ref{tab:runtime-comparison}. Each method is initialized with 10 Sobol points, and we report average CPU time across 10 independent replicates on both 12- and 20-dimensional test functions.

\begin{table}[!tb]
\centering
\caption{Average cumulative CPU time (in seconds) to complete 200 BO iterations across 10 replicates for various algorithms. Standard deviation across replicates is shown in parentheses.}
\vspace{0.5em}
\begin{tabular}{lcc}
\hline
\textbf{Method} & \textbf{12-dim function} & \textbf{20-dim function} \\
\hline
NeST-BO & 167.4 (14.2) & 260.1 (16.3) \\
GIBO & 138.2 (12.5) & 183.6 (15.7) \\
D-scaled LogEI & 150.7 (10.2) & 189.6 (15.7) \\
\hline
\end{tabular}
\label{tab:runtime-comparison}
\end{table}

Empirically, we find that NeST-BO is modestly more expensive than the other methods, with runtime increases of roughly 15--40\% compared to GIBO depending on the dimension. This difference is expected: NeST-BO must construct and invert GP posteriors over gradient and Hessian quantities during each update, and evaluating the posterior variance of the Newton step is the most costly step. 
In contrast, GIBO uses only first-order information and LogEI computes acquisition values from scalar posteriors.

Despite these additional costs, we argue that the tradeoff is often worthwhile. First, NeST-BO consistently provides lower regret than the alternatives across synthetic and real-world benchmarks, as shown throughout the main paper and Appendix. Second, the marginal CPU time increase is negligible in most practical BO applications, where each black-box evaluation may take minutes to hours (or longer). Finally, the current NeST-BO implementation uses exact kernel derivatives with dense covariance matrices and no real numerical acceleration. We believe this leaves ample room for improvement, especially if recent advances in scaling GPs with derivatives (for certain common kernel classes) are leveraged, e.g., \citep{de2021high}

\end{document}